\documentclass[11pt,a4paper]{article}

\usepackage[utf8]{inputenc}
\usepackage[T1]{fontenc}

\usepackage[a4paper,margin=1in]{geometry}

\usepackage[]{natbib}
\bibliographystyle{plainnat}

\usepackage[usenames,dvipsnames]{xcolor}


    \definecolor{DarkRed}{rgb}{0.368,0.097,0.078}
\definecolor{DarkBlue}{rgb}{0.2,0.2,0.6}

\usepackage[colorlinks = true,
    linkcolor = DarkBlue,
    anchorcolor = DarkBlue,
    citecolor = DarkBlue,
    filecolor = DarkBlue,
    urlcolor = DarkRed,
    pagebackref]{hyperref}


\usepackage{amsthm} 
\usepackage{mathtools,thmtools}
\usepackage{amsfonts,amsmath,amssymb}
\usepackage[capitalise]{cleveref}
\usepackage{times}
\usepackage{algorithm}
\usepackage{algpseudocode}
\usepackage{thm-restate}
\usepackage{tcolorbox}
\usepackage{lipsum}
\usepackage{enumitem}
\usepackage{bm}
\usepackage{xspace}
\usepackage{nicefrac}
\usepackage{dsfont}
\usepackage{float}
\usepackage{bbm}
\usepackage{mdframed}

\usepackage[color=green!25,prependcaption,textsize=tiny]{todonotes}

\usepackage{multirow}

\usepackage{enumitem}
\usepackage{bm}
\usepackage{xspace}
\usepackage{nicefrac}


\usepackage{dsfont}

\declaretheoremstyle[
	    spaceabove=\topsep, 
	    spacebelow=\topsep, 
	    headfont=\normalfont\bfseries,
	    bodyfont=\normalfont\itshape,
	    notefont=\normalfont\bfseries,
	    notebraces={(}{)},
	    postheadspace=0.5em, 
	    headpunct={},
	    postfoothook=\noindent\ignorespaces
    ]{theorem}
\declaretheorem[style=theorem,numberwithin=section]{theorem}

\declaretheoremstyle[
	    spaceabove=\topsep, 
	    spacebelow=\topsep, 
	    headfont=\normalfont\bfseries,
	    bodyfont=\normalfont,
	    notefont=\normalfont\bfseries,
	    notebraces={(}{)},
	    postheadspace=0.5em, 
	    headpunct={},
	    postfoothook=\noindent\ignorespaces
    ]{definition}

\declaretheoremstyle[
        spaceabove=\topsep, 
        spacebelow=\topsep, 
        headfont=\normalfont\bfseries,
        bodyfont=\normalfont,
        notefont=\normalfont\bfseries,
        notebraces={}{},
        postheadspace=0.5em, 
        qed=$\blacksquare$, 
        headpunct={},
        postfoothook=\noindent\ignorespaces
    ]{proofstyle}
\declaretheorem[style=proofstyle,numbered=no,name=Proof]{proof}

		
\declaretheorem[style=theorem,sibling=theorem,name=Lemma]{lemma}

\declaretheorem[style=theorem,numbered=no,name=Theorem]{theorem*}
\declaretheorem[style=theorem,numbered=no,name=Lemma]{lemma*}
\declaretheorem[style=theorem,numbered=no,name=Corollary]{corollary*}
\declaretheorem[style=theorem,numbered=no,name=Proposition]{proposition*}
\declaretheorem[style=theorem,numbered=no,name=Claim]{claim*}
\declaretheorem[style=theorem,numbered=no,name=Fact]{fact*}
\declaretheorem[style=theorem,numbered=no,name=Observation]{observation*}
\declaretheorem[style=theorem,numbered=no,name=Conjecture]{conjecture*}

\declaretheorem[style=definition,sibling=theorem,name=Definition]{definition}

\declaretheorem[style=definition,sibling=theorem,name=Example]{example}

\declaretheorem[style=definition,numbered=no,name=Definition]{definition*}
\declaretheorem[style=definition,numbered=no,name=Remark]{remark*}
\declaretheorem[style=definition,numbered=no,name=Example]{example*}
\declaretheorem[style=definition,numbered=no,name=Question]{question*}




\DeclareMathAlphabet{\mathbfsf}{\encodingdefault}{\sfdefault}{bx}{n}


\DeclareMathOperator*{\supp}{supp}



\newcommand{\lr}[1]{\mathopen{}\left(#1\right)}

\newcommand{\lrbra}[1]{\mathopen{}\left[#1\right]}

\newcommand{\lrset}[1]{\mathopen{}\left\{#1\right\}}

\newcommand{\lrabs}[1]{\mathopen{}\left|#1\right|}


\usepackage{amsfonts,amsmath,amssymb}
\usepackage{mathtools}
\usepackage{float}
\usepackage{enumitem}
\usepackage{algorithm}
\usepackage{algpseudocode}




\newcommand{\fat}{\mathrm{fat}}

\newcommand{\polylog}{\mathrm{polylog}}

\newcommand{\hide}[1]{}


\newcommand{\beq}{\begin{eqnarray*}}
\newcommand{\eeq}{\end{eqnarray*}}
\newcommand{\beqn}{\begin{eqnarray}}
\newcommand{\eeqn}{\end{eqnarray}}

\newcommand{\argmin}{\mathop{\mathrm{argmin}}}
\newcommand{\argmax}{\mathop{\mathrm{argmax}}}




\usepackage{color}
\usepackage{mathtools}
\usepackage[margin=1in]{geometry}
\usepackage{graphics}
\usepackage{pifont}
\usepackage{tikz}
\usepackage{bbm, bm}
\usetikzlibrary{arrows.meta}
\usepackage{environ}
\usepackage{framed}
\usepackage{url}
\usepackage[labelfont=bf]{caption}
\usepackage{framed}

\usepackage{subcaption}


\newcommand\remove[1]{}






\renewcommand{\forall}{\mathrm{\text{ for all }}}






\renewcommand{\hat}{\widehat}

\DeclareFontFamily{U}{mathb}{\hyphenchar\font45}
\DeclareFontShape{U}{mathb}{m}{n}{<5> <6> <7> <8> <9> <10> gen * mathb
<10.95> mathb10 <12> <14.4> <17.28> <20.74> <24.88> mathb12}{}
\DeclareSymbolFont{mathb}{U}{mathb}{m}{n}
\DeclareMathSymbol{\rcirclearrow}{\mathbin}{mathb}{'367}

\foreach \x in {a,...,z}{%
	\expandafter\xdef\csname b\x\endcsname{\noexpand\mathbf{\x}}
}

\foreach \x in {A,...,Z}{%
	\expandafter\xdef\csname m\x\endcsname{\noexpand\mathbf{\x}}
}

\foreach \x in {A,...,Z}{%
	\expandafter\xdef\csname c\x\endcsname{\noexpand\mathcal{\x}}
}

\newif\ifrandom
\randomtrue

\newcommand{\defeq}{\stackrel{\mathrm{\scriptscriptstyle def}}{=}}

\newcommand{\poly}{{\mathrm{poly}}}

\newcommand{\todolater}[1]{}

\DeclareUnicodeCharacter{2113}{$\ell$}

\interfootnotelinepenalty=10000
%



\usepackage{multirow}






\newcommand{\VC}{d_\mathrm{VC}}
\newcommand{\LD}{d_\mathrm{LD}}
\newcommand{\Pdim}{d_\mathrm{P}}
\newcommand{\Maj}{\mathrm{Maj}}
\newcommand{\graphdim}{d_\mathrm{G}}
\newcommand{\outdeg}{\mathrm{outdeg}}

\usepackage{algorithm}

\declaretheorem[style=theorem,sibling=theorem,name=Open Problem]{openproblem}


\title{Sample Compression Scheme Reductions}

\author{
    Idan Attias \thanks{Institute for Data, Econometrics, Algorithms, and Learning (IDEAL), hosted by UIC and TTIC; \texttt{idanattias88@gmail.com}.} 
    \and  Steve Hanneke\thanks{Department of Computer Science, Purdue University; \texttt{steve.hanneke@gmail.com.}}
    \and  Arvind Ramaswami\thanks{Department of Computer Science, Purdue University; \texttt{ramaswa4@purdue.edu.}}
}

\begin{document}
\maketitle
\begin{abstract}%
  We present novel reductions from sample compression schemes in multiclass classification, regression, and adversarially robust learning settings to binary sample compression schemes. Assuming we have a compression scheme for binary classes of size $f(\VC)$, where $\VC$ is the VC dimension, then we have the following results: (1) If the binary compression scheme is a majority vote or a stable compression scheme, then there exists a multiclass compression scheme of size $O\lr{f(\graphdim)}$, where $\graphdim$ is the graph dimension. Moreover, for general binary compression schemes, we obtain a compression of size $O\lr{f(\graphdim) \log |\mathcal{Y}|}$, where $\mathcal{Y}$ is the label space. 
  (2) If the binary compression scheme is a majority vote or a stable compression scheme, then there exists 
  an $\epsilon$-approximate compression scheme for regression over $[0,1]$-valued functions of size 
  $O\lr{f(\Pdim) }$, where $\Pdim$ is the pseudo-dimension. For general binary compression schemes, we obtain a compression of size $O\lr{f(\Pdim) \log (1 / \epsilon)}$.
  These results would have significant implications if the sample compression conjecture, which posits that any binary concept class with a finite VC dimension admits a binary compression scheme of size $O(\VC)$, is resolved \citep{littlestone1986relating,floyd1995sample,warmuth2003compressing}. Our results would then extend the proof of the conjecture immediately to other settings. We establish similar results for adversarially robust learning and also provide an example of a concept class that is robustly learnable but has no bounded-size compression scheme, demonstrating that learnability is not equivalent to having a compression scheme independent of the sample size,
  unlike in binary classification, where compression of size $2^{O\lr{\VC}}$ is attainable \citep{moran2016sample}.

\noindent\textbf{Keywords:} Sample Compression Schemes, PAC Learning, Binary Classification, Multiclass Classification, Regression, Adversarially Robust Learning.
\end{abstract}


\section{Introduction}
A common guiding principle in machine learning is to favor simpler hypotheses when possible, following Occam's razor, which suggests that simpler models are more likely to generalize well. One approach to achieving simplicity, introduced by \citet{littlestone1986relating,floyd1995sample}, is through a sample compression scheme for Probably Approximately Correct (PAC) learning \citep{valiant1984theory}. This framework simplifies the process of hypothesis learning by compressing the information needed to represent a learned model. This is done by encoding the hypothesis using a small subset of the original training data (along with a short bit string), known as the \emph{compression set}, and a \emph{reconstruction} function that recovers from this subset a sample-consistent hypothesis on the entire training set. The size of the compression set reflects the complexity of the learning task, with smaller sets implying simpler models (in some sense). A well-known example is the Support Vector Machine (SVM) algorithm, which constructs a halfspace in $\mathbb{R}^d$ using at most $d+1$ support vectors to represent its decision boundary. A significant open problem in binary classification, known as the sample compression conjecture, proposes that any concept class with a finite VC dimension admits a compression scheme of size $O(\VC)$, where $\VC$ is the VC dimension \citep{warmuth2003compressing}. A notable breakthrough by \citet{moran2016sample} demonstrated that every learnable binary concept class indeed admits a constant-size sample compression scheme (independent of the sample size), specifically of order $2^{O\lr{\VC}}$.

Beyond binary classification, sample compression schemes have also been explored in multiclass classification \citep{daniely2015multiclass, daniely2014optimal, david2016supervised, brukhim2022characterization, pabbaraju2024multiclass}. In particular, it is known that multiclass learnability with a finite set of labels is equivalent to having a constant-size compression scheme, with a compression size of $2^{O\lr{\graphdim}}$ being achievable \citep{david2016supervised}, where $\graphdim$ denotes the graph dimension of the concept class. However, more recently, \citet{pabbaraju2024multiclass} demonstrated that this equivalence no longer holds when the label set is infinite, and in such cases, any sample compression must grow at least logarithmically with the sample size.

In the context of regression, there is a type of equivalence between learnability and sample compression \citep{hanneke2019sample, attias2023optimal, attias2024agnostic}. Specifically, in regression settings with respect to the $\ell_p$ loss, learnability has been shown to be equivalent to the existence of a bounded-size approximate compression scheme. In the realizable case, \citet{hanneke2019sample} constructed an $\epsilon$-approximate compression scheme of size $\frac{1}{\epsilon}2^{O\lr{\fat_{c\epsilon}}}$, for some constant $c > 0$, where $\fat_{\gamma}$ denotes the fat-shattering dimension (at scale $\gamma$) of the concept class. This can be extended to the agnostic setting, where the goal is for the approximate sample compression scheme to output a hypothesis with near-optimal error on the training data (with respect to the underlying concept class), rather than a sample-consistent hypothesis. \citet{attias2024agnostic} showed that in this agnostic case, a compression scheme of size $\frac{1}{\epsilon}2^{O\lr{\fat_{c\epsilon}}}$ can also be constructed.
As with binary classification, determining the optimal size of a compression scheme for multiclass classification and regression remains a major open question.

In this paper, we explore reductions from sample compression schemes in multiclass classification, regression, and adversarially robust learning settings to binary sample compression schemes. Assuming the existence of a compression scheme for binary classes of size $f(\VC)$, we construct compression schemes of approximately the same order for these more general settings, where $\VC$ is replaced by the appropriate dimension, in both realizable and agnostic settings. Our results would have significant implications if the sample compression conjecture were resolved, as this would allow us to extend the proof of the conjecture to other settings immediately. We summarize our contributions as follows.

\subsection{Our Results}
\paragraph{Reductions from multiclass classification to binary classification (Section~\ref{sec:multiclass})}
Let $\mathcal{C}\subseteq \mathcal{Y}^\mathcal{X}$ be a multiclass concept class, with a finite graph dimension (denoted by $\graphdim$, see Definition~\ref{def:graph-dim}). Note that any multiclass concept class (with finite label space) is learnable if and only if its graph dimension is finite.
\begin{itemize}
    \item We construct a sample compression scheme for multiclass classes of size $O\lr{f(\graphdim(\mathcal{C}))\log\lrabs{\mathcal{Y}}}$ (Theorem~\ref{theo:multiclass}).
    \item Assuming the reconstruction function of the compression scheme for binary classes either outputs a \emph{majority vote} of concepts from $\mathcal{C}$ or selects a concept within the concept class $\mathcal{C}$ (\emph{proper} compression), we construct a sample compression for $\mathcal{C}$ of size $O\lr{f(\graphdim(\mathcal{C}))}$, even when infinite label sets are allowed (Theorem~\ref{theo:multiclass-proper}). This result is interesting in light of the fact that a primary method for constructing binary schemes uses majority votes.
    \item Assuming the existence of a \emph{stable} sample compression scheme for binary concept classes, we construct a sample compression scheme for $\mathcal{C}$ of size $f(\graphdim(\mathcal{C}))$  for finite label sets (Theorem~\ref{theo:multiclass-stable}). Stable compression ensures that removing any
    point outside the compression set does not affect the output of the compression function. A notable example of such a compression scheme is the SVM algorithm. For infinite label sets, we introduce an infinitized version of compression schemes, which allows us to prove analogous results (Theorem~\ref{theo:multiclass-stable-infinite} and Theorem~\ref{theo:multiclass-stable-inflated}). Surprisingly, we show that a finite VC dimension is not sufficient for such a compression to exist, unlike standard compression schemes. Additionally, we show that while finite Littlestone dimension implies infinitized compression, the converse does not hold.
\end{itemize}

\paragraph{Reductions from regression to binary classification (Section~\ref{sec:regression})}
Let $\mathcal{C}\subseteq [0,1]^\mathcal{X}$ be a real-valued concept class, with a finite pseudo-dimension (denoted by $\Pdim$, see Definition~\ref{def:pseudodim}).
\begin{itemize}
    \item We construct an $\epsilon$-approximate sample compression scheme for real-valued concept classes of size $O\lr{f(\Pdim(\mathcal{C}))\log \frac{1}{\epsilon}}$ for the $\ell_\infty$ loss, and $O\lr{f(\Pdim(\mathcal{C})) \frac{1}{p}\log \frac{1}{\epsilon}}$ for the $\ell_p$ loss, $p\in [1,\infty)$ (Theorem~\ref{theo:eps-approx-infty}). 
    \item Assuming binary concept classes have a compression scheme in one of the following forms: majority vote, proper, or stable compression scheme, we construct an $\epsilon$-approximate sample compression for real-valued concept classes of size $O\lr{f(\Pdim(\mathcal{C}))}$ for any $\ell_p$ loss, $p\in[1,\infty]$ (Theorem~\ref{theo:eps-approx-majority} and Theorem~\ref{theo:eps-approx-stable}). 
    \item We demonstrate that, in certain cases, we can construct exact compression schemes for regression by using infinitized compression schemes (Theorem ~\ref{theo:eps-approx-infinitized}), or reduce the problem to multiclass classification with infinite labels (Theorem~\ref{theo:reg-exact}).
\end{itemize}
Note that our compression scales with the pseudo-dimension, which is known to be sufficient but not necessary for learnability. Whether we can have a similar reduction with the fat-shattering dimension is an open problem.

\paragraph{Reductions from adversarially robust classification to binary classification (Section~\ref{sec:robust})}
Let \\ $\mathcal{C}\subseteq \lrset{0,1}^\mathcal{X}$ be a binary-valued concept class with a finite VC dimension. Let $\mathcal{U}:\mathcal{X}\rightarrow 2^{\mathcal{X}}$ be a perturbation function. In this setting, the loss of a concept $c$ on $(x,y)$ is $\sup_{z \in \mathcal{U}(x)} \mathbbm{1}[c(z) \neq y]$.

\begin{itemize}
    \item For bounded perturbation sets, letting $M=\sup_{x\in\mathcal{X}}\lrabs{\mathcal{U}(x)}$, we construct an adversarially robust sample compression of size $O\lr{f(\VC(\mathcal{C}))\log M}$ (Theorem~\ref{theo:adv-robust-compression}). We get an improved compression size $O\lr{f\lr{\VC}}$ if $\mathcal{C}$ admits a binary stable compression scheme (Theorem~\ref{theo:robust-comprssion-stable}).
    \item We show that, in contrast to non-robust binary classification, where constant-size sample compression schemes exist for any learnable concept class, there is a robustly learnable concept class that does not admit any such scheme (Theorem~\ref{theo:adv-negative}). This negative result was observed also in multiclass classification \citep{pabbaraju2024multiclass} and list learning \citep{hanneke2024list}.
\end{itemize}

\subsection{Other Related Work}
Sample compression schemes have proven useful in a wide range of learning settings, particularly when the uniform convergence property either fails to hold or provides suboptimal rates. These applications include binary classification \citep{graepel2005pac,moran2016sample,bousquet2020proper}, multiclass classification \citep{daniely2015multiclass,daniely2014optimal,david2016supervised,brukhim2022characterization}, regression \citep{hanneke2018agnostic,hanneke2019sample,attias2023optimal,attias2024agnostic}, active learning \citep{wiener2015compression}, density estimation \citep{ashtiani2020near}, adversarially robust learning \citep{montasser2019vc,montasser2020reducing,montasser2021adversarially,montasser2022adversarially,attias2022characterization,attias2023adversarially}, learning with partial concepts \citep{alon2022theory}, and demonstrating Bayes-consistency for nearest-neighbor methods \citep{gottlieb2014near,kontorovich2017nearest}.
In fact, sublinear compressibility (with respect to sample size) and learnability are known to be equivalent for general learning problems \citep{david2016supervised}.

A well-known approach for constructing sample compression schemes for general concept classes involves a weak-to-strong boosting procedure, where the resulting compression size is exponential in the combinatorial dimension of the problem in the worst case \citep{moran2016sample,david2016supervised,hanneke2019sample,attias2024agnostic}. These types of compressions are specifically referred to as majority vote compression schemes in this paper (see Definition~\ref{def:proper}). Another construction for general finite concept classes was provided by \citet{moran2017teaching}.
There is also extensive literature on improved compression schemes for specific cases, such as \citet{floyd1989space,helmbold1992learning,floyd1995sample,ben1998combinatorial,chernikov2013externally,kuzmin2007unlabeled,rubinstein2009shifting,rubinstein2012geometric,livni2013honest}.

\citet{bousquet2020proper} introduced the notion of stable compression schemes, whose choice of compression set is unaffected by removing points not in the compression set. The merit of such compression is that it provides an optimal generalization bound (improving by a log factor upon a generic compression scheme) for concept classes with such a scheme, for example, learning halfspaces with SVM, maximum classes, and intersection-closed classes.
\citet{hanneke2021stable} used similar techniques to provide novel or improved data-dependent generalization bounds for several learning problems.

\section{Preliminaries}
For any set $A$, define $A^*$ to be the set of finite sequences, where the elements are taken from $A$.
For any  $c: \mathcal{X}\rightarrow \mathcal{Y}$, a finite sequence $S = (x_1,y_1), \ldots, (x_n,y_n)$, and a loss function $\ell: \mathcal{Y} \times \mathcal{Y} \rightarrow [0,1]$, define the empirical loss of $c$ on $S$ to be $L^{\ell}_S(c)=\frac{1}{n} \sum_{i=1}^n \ell(c(x_i), y_i)$. 
In multiclass and binary classification we use the zero-one loss $\ell_{0-1}(y,\hat{y})=\mathbbm{1}\lrbra{y\neq\hat{y}}$. In regression we use the $\ell_p$ loss, $\ell_p(y, \hat{y}) = |y - \hat{y}|^p$, for $p\in[1,\infty)$.
For $\ell_{\infty}$ loss, define $L^{\ell_{\infty}}_S(c) = \max_{1 \leq i \leq n} |c(x_i) - y_i|$. 
Additionally, $S$ is \emph{realizable} if there exists $c \in \mathcal{C}$ such that $L^{\ell}_S(c) = 0$.

\begin{definition}[Sample Compression Schemes]
\label{def:sample-compression}
Given a concept class $\mathcal{C} \subseteq \mathcal{Y}^\mathcal{X}$, define a sample compression scheme by the two following functions:
\begin{itemize}
\item A compression function $\kappa: (\mathcal{X} \times \mathcal{Y})^{*} \rightarrow (\mathcal{X} \times \mathcal{Y})^* \times \{0,1\}^*$, which maps any \underline{finite} sequence $S$ to a \underline{finite} sequence (compression set) $S' \subseteq S$ and a \underline{finite} bitstring $b$.\footnote{For anything of the form $S \subseteq T$, where either can be a sequence or a set, we will write $S \subseteq T$ to mean that $\{x: x \in S\} \subseteq \{x: x \in T\}$}.
\item A reconstruction function $\rho: (\mathcal{X} \times \mathcal{Y})^* \times \{0,1\}^* \rightarrow \mathcal{Y}^{\mathcal{X}}$ which maps any possible compression set to a predictor.

The sample compression scheme is of size $k$ if for any sequence $S$, for $\kappa(S) = (S',b)$, it holds that $|S'| + |b| \leq k$.

\end{itemize}
Given a loss function $\ell: \mathcal{Y} \times \mathcal{Y} \rightarrow [0,1]$, a concept class $\mathcal{C} \subseteq \mathcal{Y}^\mathcal{X}$, and $\epsilon>0$, consider the following types of sample compression schemes:
\begin{itemize}
\item \underline{Exact Agnostic}: For all finite $S$, $L^{\ell}_S(\rho(\kappa(S))) \leq \inf_{c \in \mathcal{C}} L^{\ell}_S(c)$.
\item \underline{Exact Realizable}: For all finite realizable $S$, $L^{\ell}_S(\rho(\kappa(S))) = 0$.
\item \underline{$\epsilon$-approximate Agnostic}: For all finite $S$, $L^{\ell}_S(\rho(\kappa(S))) \leq \inf_{c \in \mathcal{C}} L^{\ell}_S(c) + \epsilon$.
\item \underline{$\epsilon$-approximate Realizable}: For all finite realizable $S$, $L^{\ell}_S(\rho(\kappa(S))) \leq \epsilon$.
\end{itemize}

\end{definition}

Unless explicitly specified in this paper, when referring to ``sample compression schemes," we mean exact realizable sample compression schemes using the zero-one loss function, i.e., for a realizable sequence $S = (x_1,y_1), (x_2, y_2), \ldots, (x_n, y_n)$, $\rho(\kappa(S))$ outputs a sample-consistent predictor: $\rho(\kappa(S))(x_i) = y_i$ for all $1 \leq i \leq n$. Additionally, if it is not clarified whether the compression scheme is exact or approximate, the compression scheme can be assumed to be exact.

\begin{definition}[VC Dimension \citep{vapnik1971uniform}]
\label{def:vc-dim}
We say that $x_1, \ldots, x_n \in \mathcal{X}$ are shattered by $\mathcal{C}\subseteq \lrset{0,1}^\mathcal{X}$ if 
$\lrset{(c(x_1), \ldots, c(x_n)): c \in \mathcal{C}} = \{0,1\}^n$.
The Vapnik-Chervonenkis (VC) dimension of a binary concept class $\mathcal{C}$, denoted by $\VC(\mathcal{C})$, is 
the largest nonnegative integer $n\in\mathbb{N}$ for which there exist
$x_1, \ldots, x_n \in \mathcal{X} \text{ that are shattered in } \mathcal{C}$.
\end{definition}

The sample compression conjecture \citep{littlestone1986relating,floyd1995sample,warmuth2003compressing} states that for classes with finite VC dimension, there exists a compression scheme of size $O(\VC)$. 

\section{Compression for Multiclass Classification}\label{sec:multiclass}
%

In this section, we tackle the problem of multiclass compression by reducing it to the binary setting. In multiclass classification, the finiteness of the graph dimension of a concept class characterizes learnability when the label set is finite (albeit with non-optimal sample complexity in general). \citet{david2016supervised} demonstrated that, in such cases, a sample compression scheme of size $2^{O\lr{\graphdim}}$
 can be constructed, with the compression size notably independent of the sample size. However, \citet{pabbaraju2024multiclass} showed that when the label set is infinite, there exist concept classes where any sample compression scheme must grow at least logarithmically with the sample size.

Therefore, we explore reductions from multiclass compression to binary compression under the assumption of a finite graph dimension. It is important to note that the graph dimension alone is sufficient but not necessary for multiclass learnability when the number of labels is infinite.

\begin{definition}[Graph Dimension \citep{natarajan1989learning,ben1992characterizations}]
    \label{def:graph-dim}
A set of points \\ $x_1, \ldots, x_n \in \mathcal{X}$ is G-shattered by $\mathcal{C}\subseteq \mathcal{Y}^\mathcal{X}$ if there exist $y_1, \ldots, y_n \in \mathcal{Y}$ such that 
\[\lrset{(\mathbbm{1}[c(x_1) = y_1], \mathbbm{1}[c(x_2)=y_2], \ldots, \mathbbm{1}[c(x_n)=y_n]): c \in  \mathcal{C}} = \{0,1\}^n.\]
The graph dimension of a multiclass concept class $\mathcal{C}$, denoted by $\graphdim(\mathcal{C})$, is 
the largest nonnegative integer
$n\in\mathbb{N}$ for which there exist
$x_1, \ldots, x_n \in \mathcal{X}$ that are G-shattered by $\mathcal{C}$.
\end{definition}
Consider a multiclass concept class $\mathcal{C} \subseteq \mathcal{Y}^{\mathcal{X}}$. For any $S = (x_1,y_1), (x_2,y_2), \ldots(x_n,y_n)$ realizable by $\mathcal{C}$, define the ``inflated" set $S_{\mathcal{Y}}$ as follows
\begin{equation} \label{eq:inflated_set}
S_{\mathcal{Y}} = \lrset{((x_i,y),\mathbbm{1}[y=y_i]): i \in [n], y \in \mathcal{Y}}.
\end{equation}
Define $\mathcal{C}_{\mathcal{Y}}$ as follows,
\begin{equation}\label{eq:cy}
\mathcal{C}_{\mathcal{Y}} = \{g_c: c \in \mathcal{C}\},
\end{equation}
where $g_c: \mathcal{X} \times \mathcal{Y} \rightarrow \{0,1\}$ is defined such that $g_c(x,y) = \mathbbm{1}[c(x)=y]$.
This inflation operation transforms the multiclass prediction problem into a binary classification problem: for each original example $(x_i,y_i)$, we create $|\mathcal{Y}|$ binary-labeled examples, where the positive label indicates the correct class. Similarly, each multiclass concept $c$ is transformed into a binary concept $g_c$ that identifies correct label predictions.
In the following, we construct a multiclass sample compression scheme for classes of finite labels and finite graph dimension, via a reduction to the binary setting.
\begin{theorem}[Reducing Multiclass Compression Schemes to Binary Compression Schemes]
\label{theo:multiclass} Suppose that for binary concept classes with finite VC dimension $\VC < \infty$, there exists a sample compression scheme of size $f(\VC)$. Then, for multiclass concept classes with a finite label set $|\mathcal{Y}|$ and a graph dimension $\graphdim < \infty$, there exists a sample compression scheme of size $O\lr{f(\graphdim) \log |\mathcal{Y}|}$.
\end{theorem}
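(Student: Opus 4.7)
The plan is to exploit the inflation construction that the excerpt has already set up: transform the multiclass sample $S = ((x_1, y_1), \ldots, (x_n, y_n))$ into the binary sample $S_\mathcal{Y}$ of equation \eqref{eq:inflated_set} over domain $\mathcal{X} \times \mathcal{Y}$, and the multiclass class $\mathcal{C}$ into the binary class $\mathcal{C}_\mathcal{Y}$ of equation \eqref{eq:cy}. Since $S$ is realized by some $c \in \mathcal{C}$, the binary sample $S_\mathcal{Y}$ is realized by $g_c \in \mathcal{C}_\mathcal{Y}$. The first structural fact I would verify is $\VC(\mathcal{C}_\mathcal{Y}) \leq \graphdim(\mathcal{C})$: if points $(z_1, u_1), \ldots, (z_k, u_k) \in \mathcal{X} \times \mathcal{Y}$ are shattered by $\mathcal{C}_\mathcal{Y}$, then the $z_i$ must be distinct (because $g_c$ has at most one $u$ with $g_c(z,u)=1$, so repeated $z$ with distinct $u$'s cannot realize the all-ones pattern), and then $z_1, \ldots, z_k$ are G-shattered by $\mathcal{C}$ with witnesses $u_1, \ldots, u_k$.

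Next, apply the assumed binary compression scheme to $S_\mathcal{Y}$ viewed as a realizable sample in the binary class $\mathcal{C}_\mathcal{Y}$. This produces a compression subsequence $S'_\mathcal{Y} \subseteq S_\mathcal{Y}$ and a bit string $b$ with $|S'_\mathcal{Y}| + |b| \leq f(\VC(\mathcal{C}_\mathcal{Y})) \leq f(\graphdim(\mathcal{C}))$, together with a reconstructed binary predictor $h \colon \mathcal{X} \times \mathcal{Y} \to \{0,1\}$ satisfying $h(x_i, y) = \mathbbm{1}[y = y_i]$ for all $i \in [n]$ and $y \in \mathcal{Y}$. Define the multiclass predictor by $\hat{c}(x) := y$ for any $y$ with $h(x, y) = 1$ (using a fixed ordering of $\mathcal{Y}$ to break ties, and an arbitrary default if no such $y$ exists). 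Sample-consistency of $h$ on $S_\mathcal{Y}$ forces the unique label at $x_i$ to be $y_i$, so $\hat{c}(x_i) = y_i$ as required.

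It remains to re-encode the output in the multiclass compression format $\kappa^{\mathrm{multi}} \colon (\mathcal{X} \times \mathcal{Y})^* \to (\mathcal{X} \times \mathcal{Y})^* \times \{0,1\}^*$, where the compression subsequence must consist of pairs from the original $S$ (not from $S_\mathcal{Y}$). For each element $((x_i, y), b') \in S'_\mathcal{Y}$, I would include the original pair $(x_i, y_i) \in S$ in the multiclass compression subsequence and append $\lceil \log_2 |\mathcal{Y}| \rceil + 1$ side-bits recording the auxiliary label $y \in \mathcal{Y}$ and the bit $b' \in \{0,1\}$; the bit string $b$ from the binary scheme is then concatenated. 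The reconstruction function parses these side-bits to rebuild $S'_\mathcal{Y}$, invokes the binary reconstruction to recover $h$, and returns $\hat{c}$. The total size is bounded by
\[
|S'_\mathcal{Y}| + |S'_\mathcal{Y}| (\lceil \log_2 |\mathcal{Y}| \rceil + 1) + |b| \leq f(\graphdim(\mathcal{C})) \cdot (\lceil \log_2 |\mathcal{Y}| \rceil + 2) = O(f(\graphdim(\mathcal{C})) \log |\mathcal{Y}|).
\]

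I do not anticipate a substantive obstacle; the argument is essentially a bookkeeping reduction once the inflation is in place. The one point requiring care is ensuring the multiclass compression set contains only pairs actually present in $S$, which is why the auxiliary label $y$ (which generally differs from $y_i$) is shunted into the side bit string rather than used to manufacture a synthetic pair $(x_i, y)$. This shunting is precisely what incurs the $\log |\mathcal{Y}|$ overhead, and motivates the strengthenings for majority-vote, proper, and stable binary schemes stated in the subsequent theorems.
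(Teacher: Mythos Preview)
Your proposal is correct and follows essentially the same route as the paper: inflate $S$ to $S_{\mathcal{Y}}$, apply the binary scheme for $\mathcal{C}_{\mathcal{Y}}$, and re-encode each compressed point $((x_i,y),z)$ by storing the original pair $(x_i,y_i)$ together with $O(\log|\mathcal{Y}|)$ side-bits for the auxiliary label $y$ (the paper saves a few bits by noting that when $z=1$ the auxiliary label already equals $y_i$, but this does not affect the asymptotics). One small point: you only prove $\VC(\mathcal{C}_{\mathcal{Y}})\le \graphdim(\mathcal{C})$ and then write $f(\VC(\mathcal{C}_{\mathcal{Y}}))\le f(\graphdim(\mathcal{C}))$, which tacitly assumes $f$ is nondecreasing; the paper instead proves the (immediate) reverse inequality to get equality of the two dimensions and thereby avoids that assumption.
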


\begin{proof}
Let $\mathcal{C}\subseteq \mathcal{Y}^\mathcal{X}$ be a multiclass concept class. 
Consider the inflated dataset $S_{\mathcal{Y}}$ (Equation~\ref{eq:inflated_set}) and the class $\mathcal{C}_{\mathcal{Y}}$ (Equation~\ref{eq:cy}). Denote $\VC(\mathcal{C}_\mathcal{Y})=\VC$ and $\graphdim(\mathcal{C})=\graphdim$. 
It is straightforward to show that $\VC = \graphdim$. 
To show the $\leq$ direction, we have that for any $(x_1,y_1), \ldots, (x_n, y_n)$ shattered by $\mathcal{C}_\mathcal{Y}$, all the $x_i$'s must be distinct (otherwise, there is an $(x, y_{j_1}), (x, y_{j_2})$ with $y_{j_1} \neq y_{j_2}$ that are shattered, and are both assigned value $1$ by a concept in $\mathcal{C}_\mathcal{Y}$, which implies that there exists an $c \in  \mathcal{C}$ such that $c(x) = y_{j_1}$ and $c(x) = y_{j_2}$, which is not possible). Thus, $x_1, \ldots, x_n$ are G-shattered in $\mathcal{C}$ via labels $y_1, \ldots, y_n$. To show the $\geq$ direction, consider $x_1, \ldots., x_n$ G-shattered in $\mathcal{C}$ via labels $y_1, \ldots, y_n$. It is clear that $(x_1, y_1), \ldots, (x_n, y_n)$ are shattered in $\mathcal{C}_{\mathcal{Y}}$.

Suppose $\mathcal{C}_{\mathcal{Y}}$ has a binary compression scheme ($\kappa_b$,$\rho_b$), then we construct a compression scheme $(\kappa,\rho)$ for $\mathcal{C}$ as follows. 
\textbf{Compression:} Given a dataset $S = \{(x_1, y_1), \ldots, (x_n, y_n)\}$ realizable by $\mathcal{C}$, construct $\kappa(S)$ as follows.
Inflate $S$ to $\mathcal{S_{\mathcal{Y}}}$.
Note that $S_{\mathcal{Y}}$ is realizable by $\mathcal{C}_{\mathcal{Y}}$, and $\VC = \graphdim$. We can apply $\kappa_b$ to $S_{\mathcal{Y}}$ to get a compression of size $f(\graphdim)$.
The compression points will be of the form $((x,y),z), z \in \{0, 1\}$. For the points where $z = 1$, we have that $(x,y) = (x_i, y_i)$ for some $i$, so we can add that to $\kappa(S)$, contributing $1$ for each. For the points where $z = 0$, we need $\log |\mathcal{Y}|$ bits to add that to $\kappa(S)$. Thus, our compression size will be $\leq f(\graphdim) + f(\graphdim) \log |\mathcal{Y}| = O(f(\graphdim)\log|\mathcal{Y}|)$. 
\textbf{Reconstruction:} Our compression has enough information for us to retrieve the result of $\kappa_b(\mathcal{C}_{\mathcal{Y}})$. We can directly apply $\rho_b$ on this to get the desired result.
\end{proof}

For classes with graph dimension $1$ we can get a tighter result, but for general concept classes and binary compression schemes it is an open problem whether we can remove the $\log\lrabs{\mathcal{Y}}$ factor from the compression size in Theorem~\ref{theo:multiclass}.
\paragraph{A sample compression scheme for graph dimension $1$} We show that any concept class $\mathcal{C}$ with graph dimension $1$ admits a sample compression scheme of size $1$.
The proof is in Appendix~\ref{app:compression-graphdim-1}. The idea is based on a technique of \cite{ben20152} which established a sample compression scheme of size $1$ for binary classes with VC dimension $1$. Although our result has been previously shown by \cite{samei2014sample}, the proof presented here uses a different technique and may offer additional insights.

\begin{openproblem} Suppose all binary concept classes with VC dimension $\VC$ have a sample compression scheme of size $f(\VC)$. Does every multiclass class with graph dimension $\graphdim$ have a sample compression scheme of size $O\lr{f(\graphdim)}$?
\end{openproblem}
\subsection{Additional Assumption: Existence of Proper or Majority Vote Binary Compression}
We can derive tighter results for sample compression schemes with particular reconstruction functions, such as majority votes of concepts in the class. 
Majority votes are a natural choice for reconstruction functions, as many known sample compression schemes are based on boosting methods with such a property \citep{moran2016sample,david2016supervised,hanneke2019sample,attias2024agnostic}. We also define a proper compression scheme where the reconstruction function returns a concept from the class. 
\begin{definition}
    \label{def:proper}
    Given a concept class $\mathcal{C}$, a compression scheme is a \underline{proper compression scheme} if for every finite $S \in (\mathcal{X} \times \mathcal{Y})^*$ 
    , the reconstruction $\rho(\kappa(S))$ returns a concept $c \in \mathcal{C}$. 
    
    A binary function $f: \mathcal{X} \rightarrow \{0, 1\}$ is a majority of concepts from $\mathcal{C}\subseteq \lrset{0,1}^\mathcal{X}$ if there exists a finite $\mathcal{C}_f \subseteq \mathcal{C}$ such that for all $x \in \mathcal{X}$, $f(x) = \Maj(c(x): c \in \mathcal{C}_f)$, where $\Maj(\cdot)$ takes in a sequence and returns the majority element (picking zero to break ties).
    A compression scheme is a \underline{majority vote compression scheme} if for every finite $S \in (\mathcal{X} \times \mathcal{Y})^*$, $\rho(\kappa(S))$ outputs a majority of concepts from $\mathcal{C}$.

\end{definition}
\begin{theorem}[Multiclass, Reductions with Proper / Majority Vote Compression Schemes]
\label{theo:multiclass-proper} Suppose any binary concept class $\mathcal{C}$ with VC dimension $\VC<\infty$ has a compression scheme of size $f(\VC)$ that is either a proper compression scheme or a majority vote compression scheme (see Definition~\ref{def:proper}). Then any multiclass concept class (allowing infinite label sets) with a finite graph dimension $\graphdim<\infty$ admits a compression scheme of size $O\lr{f(\graphdim)}$. 
\end{theorem}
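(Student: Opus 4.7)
The plan is to apply the given binary proper or majority-vote compression scheme to the inflated sample $S_{\mathcal{Y}}$ from the proof of \Cref{theo:multiclass} and then repackage the resulting binary compression using only multiclass examples drawn from $S$. The structural assumption on $\rho_b$ is what should let us avoid the $\log|\mathcal{Y}|$ overhead of \Cref{theo:multiclass}: since the output is either a concept $g_{c^*}\in\mathcal{C}_\mathcal{Y}$ or a majority $\mathrm{Maj}(g_{c_1},\ldots,g_{c_m})$ with $c^*,c_j\in\mathcal{C}$, the multiclass predictor can be read off directly---either as $c^*$ itself (proper case) or as $x \mapsto \arg\max_y |\{j : c_j(x)=y\}|$ (majority-vote case)---without ever serializing an arbitrary binary classifier over $\mathcal{X}\times\mathcal{Y}$.

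To handle infinite label sets, I will replace $S_{\mathcal{Y}}$ by the finite restriction $S'_{\mathcal{Y}} = \{((x_i,y_j),\mathbbm{1}[y_i=y_j]) : i,j\in[n]\}$, which is realizable by $\mathcal{C}_\mathcal{Y}$ (for any $c\in\mathcal{C}$ consistent with $S$ we have $g_c(x_i,y_j)=\mathbbm{1}[y_i=y_j]$) and whose VC dimension is at most $\graphdim(\mathcal{C})=:d$, since $\VC(\mathcal{C}_\mathcal{Y})=d$ as shown in the proof of \Cref{theo:multiclass}. Applying $\kappa_b$ to $S'_{\mathcal{Y}}$ yields $(T^*,b^*)$ with $|T^*|+|b^*|\le f(d)$, where each point of $T^*$ has the form $((x_i,y_j),z)$ with $i,j\in[n]$. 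The multiclass compression set is then $T = \{(x_k,y_k) : k\in I\cup J\}\subseteq S$, where $I$ and $J$ collect the first and second indices appearing in $T^*$, so that $|T|\le 2|T^*|$; the bitstring $b$ stores $b^*$ together with, for each point of $T^*$, two positions in $T$ (one for $x_i$, one for $y_j$) and the binary label $z$. Reconstruction rebuilds $T^*$ from $(T,b)$, runs $\rho_b(T^*,b^*)$, and returns the derived multiclass predictor. Consistency with $S$ follows immediately from the binary guarantee, since $((x_i,y_i),1)\in S'_{\mathcal{Y}}$ for every $i$ forces either $c^*(x_i)=y_i$ or a strict majority of voters to predict $y_i$.

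The main obstacle will be the bit accounting. A naive encoding of the pairs of positions within $T$ costs $\lceil\log|T|\rceil=O(\log f(d))$ bits per compression point and thus $O(f(d)\log f(d))$ bits in total, which only matches the target $O(f(d))$ up to a logarithmic factor. Removing this log will require a more delicate encoding---for instance, canonicalizing the order of $T$ so that many pairings become implicit in the sequence structure, or exploiting the proper/majority-vote property to argue that the ``negative-inflation'' labels appearing in $T^*$ can be described by a succinct rule (e.g., each negative $y_j$ is the label of the next positive compression example in a fixed order), so that the amortized per-point cost is $O(1)$ rather than $O(\log f(d))$. This bit-level optimization---rather than the high-level black-box reduction---is where I expect the main technical work to lie, and it is the step where the proper/majority-vote structure of $\rho_b$ must be used beyond the mere fact that its output lives in (or is built from) $\mathcal{C}$.
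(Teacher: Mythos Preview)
Your high-level reduction is right, but you are missing the key simplification that removes the $\log f(d)$ overhead, and the speculative fixes you suggest in the last paragraph are not how the argument actually closes.

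The point you have not exploited is this: once you know that $\rho_b$ is proper or a majority vote over $\mathcal{C}_\mathcal{Y}$, the reconstructed binary predictor $h$ satisfies, for every $x$, that $h(x,y)=1$ for \emph{at most one} $y\in\mathcal{Y}$. (Proper: obvious. Majority: for fixed $x$, exactly $|\mathcal{C}_h|$ of the pairs $(c,y)$ have $g_c(x,y)=1$, so at most one $y$ can collect a strict majority.) This means that if $h$ is correct on the \emph{positive} points $((x_i,y_i),1)$, it is automatically correct on all the negative points $((x_i,y),0)$, $y\neq y_i$. Consequently you never need to feed the negative part of the inflation to $\kappa_b$ at all: apply $\kappa_b$ directly to $T=\{((x_i,y_i),1):i\in[n]\}$. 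Every point selected by $\kappa_b$ is then of the form $((x_i,y_i),1)$ with $(x_i,y_i)\in S$, so the multiclass compression is literally the list of those $(x_i,y_i)$ together with the original bitstring---no cross-indexing, no $\log f(d)$ overhead, and the argument works verbatim for infinite $\mathcal{Y}$ since $T$ is finite.

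Your construction instead compresses $S'_\mathcal{Y}$ with both positive and negative examples, which forces you to encode pairs of indices into $T$ and creates the $O(f(d)\log f(d))$ bookkeeping problem you identify. That problem is not a bit-packing puzzle to be solved by clever ordering; it disappears once you realize the negative examples are redundant under the proper/majority assumption. In short, you are using the structural assumption only at the reconstruction step (to read off a multiclass predictor), whereas the real leverage is at the compression step (to restrict the input of $\kappa_b$ to the positive slice of the inflation).
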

In particular, we recover the best known bounded-size multiclass compression scheme of size $2^{O\lr{\graphdim}}$ \citep{david2016supervised}, via a reduction to binary compression.

\begin{proof}
Suppose $\mathcal{C}_{\mathcal{Y}}$ (Equation~\ref{eq:cy}) has a binary compression scheme ($\kappa_b$,$\rho_b$), which is either a proper compression scheme or a majority vote compression scheme. 
Consider $h = \rho_b(T)$ for any $T$ in the image of $\kappa_b$ and any possible finite realizable $S$ given to $\kappa_b$ as an input. We claim that for all $x \in \mathcal{X}$, $h(x,y)$ equals $1$ for at most one $y$. To show this, we consider two cases: 1) $h$ is proper, and 2) $h$ is a majority of concepts from the concept class. In the first case, since $h$ is proper in $\mathcal{C}_{\mathcal{Y}}$, $h(x,y)$ must equal $1$ for exactly one $y \in \mathcal{Y}$. Now consider the second case, where $h$ is a majority of concepts $\mathcal{C}_h \subseteq \mathcal{C}$. Notice that for any fixed $x \in \mathcal{X}$, $c(x,y) = 1$ for exactly $|\mathcal{C}_h|$ pairs $(c, y)$ in $\mathcal{C}_h \times \mathcal{Y}$ (since for any $c \in \mathcal{C}_h$, there exists exactly one $y \in \mathcal{Y}$ such that $c(x,y) = 1$). Thus, in order for $h(x,y)$ to be $1$, $c(x,y)$ must be $1$ for strictly more than $|\mathcal{C}_{h}| / 2$ concepts $c$ from $\mathcal{C}_h$. Therefore, $h(x,y) = 1$ for at most one $y \in \mathcal{Y}$.

We now construct a compression scheme $(\kappa,\rho)$ for $\mathcal{C}$ as follows. 
\textbf{Compression:} Given realizable $S = (x_1, y_1), \ldots, (x_n, y_n)$, let $T = \{((x_i,y_i),1): i \in [n]\}$. We let our compression $\kappa(S)$ return the $(x,y)$ pairs from $\kappa_b(T)$, which have size $O\lr{f(\graphdim)}$. 
\textbf{Reconstruction:} We have that $\kappa_b(T) = \{((x,y),1): (x,y) \in \kappa(S)\}$, so we can immediately recover $\kappa_b(T)$.
Since $\rho_b(\kappa_b(T))$ is correct on all possible $T$ and predicts $1$ for at most one $y$ for each $x$,  we can conclude that $\rho_b(\kappa_b(T))$ is correct on all of $S_{\mathcal{Y}}$. 
Note that if the reconstruction $\rho_b$ is proper, then it will output exactly one $1$ for each $x$. Additionally, it the reconstruction is a majority of learners from the class, where we break ties to favor predicting $0$, then it will output at most one $1$ for each $x$. 
\end{proof}

\subsection{Additional Assumption: Existence of Stable Binary Compression}
By assuming the existence of stable sample compression schemes for binary classification, we can derive tighter results, including a reduction that is independent of the size of the label space, assuming the label space is finite. A stable compression scheme ensures that removing any point outside the compression set does not affect the output of the compression function. Natural examples where such schemes exist include thresholds, halfspaces, maximum classes, and
intersection-closed classes. A notable example for halfspaces is the Support Vector Machine (SVM) algorithm: points outside the $d+1$ support vectors can be removed, and the remaining support vectors still form a valid compression set. The formal definition is as follows. 
\begin{definition}[Stable Compression Schemes \citep{bousquet2020proper}]
    \label{def:stable-scheme}
    A sample compression \\ scheme is stable if for any sequence $S$ over $(\mathcal{X} \times \mathcal{Y})$, and any $T: \kappa(S) \subseteq T \subseteq S$, it holds that $\kappa(T) = \kappa(S)$ \footnote{Some notations: 
    \begin{itemize}
        \item If there is a $\subseteq$ sign and any of the sides is of the form $(S, b)$ for a sequence $S$ and a bitstring $b$, we can interpret it to be $S$. For example, $(S, b) \subseteq T$ will be interpreted as $S \subseteq T$.
        \item Also, we will sometimes apply $\kappa$ to something of the form $(T, b)$ (for example, something like $\kappa(\kappa(S))$). In this case, we can interpret it to be $\kappa(T)$.
    \end{itemize}    
    }.
\end{definition}
\begin{theorem}[Multiclass, Reductions with Stable Compression Schemes]
\label{theo:multiclass-stable}
Suppose that for \\ binary concept classes with finite VC dimension $\VC < \infty$, there exists a \underline{stable} sample compression scheme of size $f(\VC)$. Then, for multiclass concept classes with a finite graph dimension $\graphdim < \infty$ and finite label space, there exists a \underline{stable} sample compression scheme of size $O\lr{f(\graphdim)}$.
\end{theorem}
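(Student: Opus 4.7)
The approach is to reuse the inflation reduction from Theorem~\ref{theo:multiclass}, but to replace the $\log|\mathcal{Y}|$ label-encoding bits by stability: store only the $x$-coordinates of the binary compression (packaged as their original $(x_i,y_i)$ pairs from $S$), and let the decoder regenerate a candidate inflated dataset on exactly those coordinates and re-run the stable binary compressor. By stability, this returns exactly the same compression set, and therefore the same hypothesis, as if the decoder had access to all of $S_\mathcal{Y}$.

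Concretely, fix $\mathcal{C} \subseteq \mathcal{Y}^\mathcal{X}$ with $\graphdim(\mathcal{C}) = \graphdim$ and $|\mathcal{Y}| < \infty$, and for a realizable $S = (x_1,y_1), \ldots, (x_n,y_n)$ form $\mathcal{C}_\mathcal{Y}$ and $S_\mathcal{Y}$ exactly as in the proof of Theorem~\ref{theo:multiclass}; that proof already gives $\VC(\mathcal{C}_\mathcal{Y}) = \graphdim$ and realizability of $S_\mathcal{Y}$ under $\mathcal{C}_\mathcal{Y}$, and $|\mathcal{Y}|<\infty$ ensures $S_\mathcal{Y}$ is a finite sequence. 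Let $(\kappa_b,\rho_b)$ be the hypothesized stable binary compression of size $f(\graphdim)$. Define $\kappa(S)$ by computing $(C,b) = \kappa_b(S_\mathcal{Y})$, setting $A = \{i \in [n] : ((x_i,y),z) \in C \text{ for some } y, z\}$, and outputting the sub-sequence $((x_i,y_i))_{i\in A}$ of $S$ with empty bitstring; then $|\kappa(S)| \leq |C| \leq f(\graphdim)$. Define $\rho$ by rebuilding $S_\mathcal{Y}^A = \{((x_i,y), \mathbbm{1}[y=y_i]) : i \in A,\ y \in \mathcal{Y}\}$ from the compression (possible because the compression supplies both $x_i$ and $y_i$ for $i\in A$), applying $\kappa_b$, and returning the multiclass hypothesis $c(x)=y$ at the smallest $y$ (under a fixed well-order of $\mathcal{Y}$) for which $\rho_b(\kappa_b(S_\mathcal{Y}^A))(x,y) = 1$, with a default value if none exists.

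The key step is correctness via stability: $C \subseteq S_\mathcal{Y}^A \subseteq S_\mathcal{Y}$ (the first inclusion by construction of $A$, the second trivial), so stability of $\kappa_b$ yields $\kappa_b(S_\mathcal{Y}^A) = \kappa_b(S_\mathcal{Y})$; hence $\rho_b(\kappa_b(S_\mathcal{Y}^A))$ is a valid binary reconstruction on the full $S_\mathcal{Y}$, outputting $1$ at each $(x_i,y_i)$ and $0$ at each $(x_i,y)$ with $y\neq y_i$, which pins down $c(x_i)=y_i$ for all $i$. For stability of the new multiclass scheme, any $T$ with $\kappa(S) \subseteq T \subseteq S$ satisfies $C \subseteq T_\mathcal{Y} \subseteq S_\mathcal{Y}$ (since each $A$-indexed example, hence its full inflation, lies in $T_\mathcal{Y}$), so stability of $\kappa_b$ again gives $\kappa_b(T_\mathcal{Y}) = \kappa_b(S_\mathcal{Y})$ and therefore $\kappa(T) = \kappa(S)$. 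The main obstacle I anticipate is purely bookkeeping: packaging $A$ as an honest sub-sequence of $S$ rather than a set of indices, and fixing a deterministic tie-break so that $\rho$ returns a genuine function in $\mathcal{Y}^\mathcal{X}$ even when the binary reconstruction is ambiguous outside $\pi_1(S)$; neither affects the argument on $S$, since stability forces the binary reconstruction to be unambiguous there. Finiteness of $\mathcal{Y}$ enters in exactly one place, namely in making $S_\mathcal{Y}$ and $S_\mathcal{Y}^A$ finite inputs to $\kappa_b$, which is precisely what blocks the extension to infinite label sets and motivates the infinitized variant in Theorem~\ref{theo:multiclass-stable-infinite}.
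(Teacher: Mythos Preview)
Your proposal is correct and is essentially the same argument as the paper's: your $S_\mathcal{Y}^A$ is exactly the paper's $\kappa'_b(S_\mathcal{Y})$, and both the correctness and stability arguments proceed via the same sandwich $\kappa_b(S_\mathcal{Y}) \subseteq S_\mathcal{Y}^A \subseteq S_\mathcal{Y}$ (resp.\ $\kappa_b(S_\mathcal{Y}) \subseteq T_\mathcal{Y} \subseteq S_\mathcal{Y}$) combined with stability of $\kappa_b$. You are slightly more explicit than the paper about the binary-to-multiclass decoding rule and about why the bitstring from $\kappa_b$ need not be stored, but the core idea and structure match.
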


\begin{proof}
Given a class $\mathcal{C}$ with $d_G(\mathcal{C}) = d_G$, consider $\mathcal{C}_{\mathcal{Y}}$ (Equation~\ref{eq:cy}). We construct the following compression scheme.
\textbf{Compression:} We first inflate $S$ to $S_{\mathcal{Y}}$ and apply $\kappa_b$.
Define 

\[
\kappa'_b(S_{\mathcal{Y}}) := \{((x, y), \mathbbm{1}[y = y_i]) : x \text{ appears in } \kappa_b(S_{\mathcal{Y}}), (x,y_i) \in S, y \in \mathcal{Y}\},
\]

i.e., it is an inflated $\kappa_b(S_{\mathcal{Y}})$ to include all the labels for each $x$. Since the compression scheme is stable, and since $\kappa_b(S_{\mathcal{Y}}) \subseteq \kappa'_b(S_{\mathcal{Y}}) \subseteq S_{\mathcal{Y}}$, we must have that $\kappa_b(\kappa'_b(S_{\mathcal{Y}})) = \kappa_b(S_{\mathcal{Y}})$. We can let $\kappa(S)$ return the $(x_i, y_i)$ pairs such that $x_i$ is a member of $\kappa_b(S_{\mathcal{Y}})$. This will have size $O(f(\graphdim))$. 
\textbf{Reconstruction:} Given $\kappa(S)$, we can reconstruct $\kappa'_b(S_{\mathcal{Y}})$ by inflating the dataset to include all labels for each $x$ in $\kappa(S)$. Then, we can apply $\kappa_b$ to get $\kappa_b(S_{\mathcal{Y}})$ (using the fact that it is a stable compression scheme), and then apply $\rho_b$ to get the desired compression result.

We now show that the above compression scheme is stable. Let $(\kappa, \rho)$ be the multiclass compression scheme above, and let $(\kappa_b, \rho_b)$ be the binary stable compression scheme above. Let $S = (x_1,y_1), \ldots,$ $(x_n,y_n)$ be realizable, and consider $T$ such that $\kappa(S) \subseteq T \subseteq S$. We want to show that $\kappa(T) = \kappa(S)$.
Inflate $S$ to $S_\mathcal{Y}$ as above, and inflate $T$ to $T_{\mathcal{Y}}$ similarly. $\kappa(S) \subseteq T$, and in the inflated dataset $T_{\mathcal{Y}}$ includes all the labels for each $x \in T$, so $\kappa_b(S_{\mathcal{Y}}) \subseteq T_{\mathcal{Y}}$. Additionally since $T \subseteq S$, we get
\[
\kappa_b(S_{\mathcal{Y}}) \subseteq T_{\mathcal{Y}} \subseteq S_{\mathcal{Y}}.
\]
Since the binary compression scheme is stable, this gives that $\kappa_b(T_{\mathcal{Y}}) = \kappa_b(S_{\mathcal{Y}})$. Thus, $\kappa(T) = \kappa(S)$, as desired.
\end{proof}

We prove a similar result for infinite label sets, where we require that the compression scheme on $\mathcal{C}_{\mathcal{Y}}$ (Equation~\ref{eq:cy}) can handle infinite sets. For a set $A$, we denote by $A^{\infty}$ the set of (possibly infinite) sequences whose elements are taken from $A$. A (possibly infinite) sequence $S \in (\mathcal{X} \times \mathcal{Y})^{\infty}$ is \emph{realizable} if there exists a $c \in \mathcal{C}$ such that for all $(x,y) \in S$, $c(x) = y$. We define infinite compression schemes as follows:

\begin{definition}[Infinitized Sample Compression Scheme]
\label{def:inf-sample-compression}
Given a concept class $\mathcal{C} \subseteq \mathcal{Y}^\mathcal{X}$, define an \underline{infinitized} sample compression scheme by the two following functions:

\begin{itemize}
    \item A compression function $\kappa: (\mathcal{X} \times \mathcal{Y})^{\infty} \rightarrow (\mathcal{X} \times \mathcal{Y})^* \times \{0,1\}^*$ which maps any (possibly infinite) sequence $S$ to a \underline{finite} sequence (compression set) $S' \subseteq S$ and a \underline{finite} bitstring $b$.
    \item A reconstruction function $\rho: (\mathcal{X} \times \mathcal{Y})^* \times \{0,1\}^* \rightarrow \mathcal{Y}^{\mathcal{X}}$, which maps any possible compression set to a predictor.

\end{itemize}
Additionally, for any realizable $S \in (\mathcal{X} \times \mathcal{Y})^{\infty}$, $\rho(\kappa(S))(x) = y$ for all $(x,y) \in S$.
The infinitized sample compression scheme is of size $k$ if for any realizable sequence $S \in (\mathcal{X} \times \mathcal{Y})^{\infty}$, for $\kappa(S) = (S',b)$, $|S'| + |b| \leq k$.
\end{definition}
Appendix~\ref{app:infinitized-compression} includes a discussion of standard and infinitized sample compression schemes, where we demonstrate the distinctions between these two notions. We show that a finite VC dimension is not sufficient for the existence of a bounded-size infinitized sample compression scheme. For example, thresholds on the real line have a finite VC dimension but do not admit an infinitized compression scheme. Additionally, we show that a finite Littlestone dimension is sufficient (but not necessary) for a bounded-size infinitized compression scheme. We leave as an open problem the question of characterizing which concept classes admit a bounded-size infinitized compression scheme. 

One can study infinitized compression schemes in more general settings, such as compression with general losses, agnostic compression, and approximate compression. We do not attempt to do so in this paper, and we focus instead on exact realizable infinitized compression with the zero-one loss.

\begin{theorem}[Multiclass, $|\mathcal{Y}| = \infty$, Reductions with Stable Infinitized Compression Schemes]
\label{theo:multiclass-stable-infinite}
Let $\mathcal{C} \subset \mathcal{Y}^{\mathcal{X}}$ be a multiclass concept class. If $\mathcal{C}_\mathcal{Y}$ (see Equation~\ref{eq:cy}) has an infinitized stable compression scheme of size $k$, then $\mathcal{C}$ has a stable compression scheme of size $k$.
\end{theorem}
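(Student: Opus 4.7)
The plan is to mirror the proof of Theorem~\ref{theo:multiclass-stable} almost verbatim, using the infinitized stable compression scheme on $\mathcal{C}_\mathcal{Y}$ to handle the fact that the inflation $S_\mathcal{Y}$ becomes an infinite sequence when $\mathcal{Y}$ is infinite. The only feature of finite $|\mathcal{Y}|$ used in the earlier proof is the finiteness of $S_\mathcal{Y}$ itself; replacing the binary stable scheme by an infinitized one removes exactly this obstruction, and every other step carries over unchanged.

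Concretely, let $(\kappa_b,\rho_b)$ be the infinitized stable scheme for $\mathcal{C}_\mathcal{Y}$ of size $k$. Given a finite realizable multiclass sample $S=(x_1,y_1),\ldots,(x_n,y_n)$, I would first inflate it to $S_\mathcal{Y}$ as in Equation~\ref{eq:inflated_set} and apply $\kappa_b$, producing $(U,b)$ with $|U|+|b|\le k$. Define $\kappa(S)$ to consist of the bitstring $b$ together with those original pairs $(x_i,y_i)\in S$ whose $x_i$-coordinate appears in $U$; this has size at most $k$, since at most one original pair is retained per distinct $x_i$ occurring in $U$. For reconstruction, re-inflate the retained $x_i$'s to all labels, yielding
\[
U' := \{((x_i,y),\mathbbm{1}[y=y_i]) : (x_i,y_i)\in\kappa(S),\, y\in\mathcal{Y}\},
\]
so that $\kappa_b(S_\mathcal{Y})=U\subseteq U'\subseteq S_\mathcal{Y}$. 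Infinitized stability then gives $\kappa_b(U')=U$, so we recover $U$, apply $\rho_b(U,b)$ to obtain a binary $h:\mathcal{X}\times\mathcal{Y}\to\{0,1\}$ consistent with $S_\mathcal{Y}$, and read off a multiclass predictor by outputting, at each $x$, any $y$ with $h(x,y)=1$ (with an arbitrary default fallback); sample-consistency on $S$ is immediate since $h(x_i,y_i)=1$ for every $i$.

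To verify that the resulting $(\kappa,\rho)$ is itself stable, take any $T$ with $\kappa(S)\subseteq T\subseteq S$ and inflate to $T_\mathcal{Y}$. Every $x$ appearing in $\kappa_b(S_\mathcal{Y})$ sits in $\kappa(S)\subseteq T$, so $\kappa_b(S_\mathcal{Y})\subseteq T_\mathcal{Y}\subseteq S_\mathcal{Y}$, and a second appeal to infinitized stability yields $\kappa_b(T_\mathcal{Y})=\kappa_b(S_\mathcal{Y})$, whence $\kappa(T)=\kappa(S)$. I expect the main (and essentially only) subtlety to be definitional: Definition~\ref{def:stable-scheme} is stated for sequences, so at the outset of the proof I would spell out that ``infinitized stable'' means $\kappa_b(T)=\kappa_b(S)$ for all (possibly infinite) $T$ satisfying $\kappa_b(S)\subseteq T\subseteq S$. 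With that convention fixed, the argument is a direct transcription of the finite-label case.
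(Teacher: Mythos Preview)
Your proposal is correct and is precisely the approach the paper intends: the paper itself simply states that the result ``follows directly via the arguments of the proof of Theorem~\ref{theo:multiclass-stable}'' with the infinitized scheme applied to the (now infinite) inflated set $S_\mathcal{Y}$, and omits the details. Your write-up is in fact more careful than the paper's, including the explicit observation that ``infinitized stable'' must be read as $\kappa_b(T)=\kappa_b(S)$ for all possibly infinite $T$ with $\kappa_b(S)\subseteq T\subseteq S$; the only superfluous step is carrying along the bitstring $b$ in $\kappa(S)$, since re-inflating and reapplying $\kappa_b$ already recovers it.
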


The idea to prove the theorem is to consider an infinitized compression scheme $(\kappa_b, \rho_b)$ over the infinite set $S_{\mathcal{Y}}$. The result follows directly via the arguments of the proof of Theorem ~\ref{theo:multiclass-stable}, the details are omitted here to avoid repetition.

The requirement for a sample compression scheme to be infinitized is rather strong since we require $\rho(\kappa(S))$ to be consistent with any \emph{finite or infinite} $S$. Instead, we relax the definition, by defining an ``inflated" compression scheme, which will be defined below formally. The idea is that instead of considering all infinite sets $S$, we only require the accuracy guarantee to hold for sets $S$ that are supported by a finite number of $x \in \mathcal{X}$.

\begin{definition}[Inflated Compression Scheme]
    \label{def:inflated-compression-scheme}
    Given a concept class $\mathcal{C} \subset \{0,1\}^{(\mathcal{X} \times \mathcal{Y})}$, define an inflated compression scheme by the following two functions:

    \begin{itemize}
    \item A compression function $\kappa: (\mathcal{X} \times \mathcal{Y} \times \{0,1\})^{\infty} \rightarrow (\mathcal{X} \times \mathcal{Y} \times \{0,1\})^* \times \{0,1\}^*$ which maps any (possibly infinite) sequence $S$ to a \underline{finite} sequence (compression set) $S' \subseteq S$ and a \underline{finite} bitstring $b$.
    \item A reconstruction function $\rho: (\mathcal{X} \times \mathcal{Y} \times \{0,1\})^* \times \{0,1\}^* \rightarrow \{0,1\}^{(\mathcal{X} \times \mathcal{Y})}$, which maps any possible compression set to a predictor.

\end{itemize}
%
Additionally, for any realizable sequence $S \in (\mathcal{X} \times \mathcal{Y} \times \{0,1\})^{\infty}$ where $|\{x \in \mathcal{X}: \exists y \in \mathcal{Y}, z \in \{0,1\} \text{ s.t. } ((x,y),z) \in S\}| < \infty$, it must hold that $\rho(\kappa(S))((x,y)) = z$ for all $((x,y),z) \in S$.
The inflated sample compression scheme is of size $k$ if for any such realizable sequence $S$, where $\kappa(S) = (S',b)$, we have $|S'| + |b| \leq k$.
\end{definition}

\begin{theorem}[Multiclass, $|\mathcal{Y}| = \infty$, Reductions with Stable Inflated Compression Schemes]
\label{theo:multiclass-stable-inflated} Let $\mathcal{C} \subset \mathcal{Y}^{\mathcal{X}}$ be a multiclass concept class. If $\mathcal{C}_\mathcal{Y}$ (see Equation~\ref{eq:cy}) has a stable inflated compression scheme of size $k$, then $\mathcal{C}$ has a stable compression scheme of size $k$.
\end{theorem}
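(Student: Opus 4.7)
The plan is to mimic the proof of Theorem~\ref{theo:multiclass-stable} almost verbatim, with the sole new ingredient being that the inflated dataset $S_{\mathcal{Y}}$ is now (possibly) infinite when $|\mathcal{Y}| = \infty$, but nevertheless falls within the accuracy guarantee of an \emph{inflated} compression scheme because it is supported on only finitely many $x$-coordinates.

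First, given a finite realizable sample $S = (x_1,y_1),\ldots,(x_n,y_n)$ for $\mathcal{C}$, I form the inflated sequence $S_{\mathcal{Y}}$ as in Equation~\eqref{eq:inflated_set}. Even if $\mathcal{Y}$ is infinite, the set of first coordinates $\{x_1,\ldots,x_n\}$ is finite, so $S_{\mathcal{Y}}$ lies in the class of inputs on which the inflated scheme $(\kappa_b,\rho_b)$ is required to be correct. Since $S$ is realized by some $c \in \mathcal{C}$, the corresponding $g_c \in \mathcal{C}_{\mathcal{Y}}$ realizes $S_{\mathcal{Y}}$. Define $\kappa(S)$ to be the collection of pairs $(x_i,y_i) \in S$ such that $x_i$ appears as a first coordinate in $\kappa_b(S_{\mathcal{Y}})$, together with the same bitstring produced by $\kappa_b$; the total size is at most $k$.

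For reconstruction, from $\kappa(S)$ I re-inflate by taking all triples $((x_i,y),\mathbbm{1}[y = y_i])$ for each $x_i$ appearing in $\kappa(S)$ and every $y \in \mathcal{Y}$; call this (possibly infinite) set $\kappa'_b(S_{\mathcal{Y}})$. By construction $\kappa_b(S_{\mathcal{Y}}) \subseteq \kappa'_b(S_{\mathcal{Y}}) \subseteq S_{\mathcal{Y}}$, so stability of the inflated scheme yields $\kappa_b(\kappa'_b(S_{\mathcal{Y}})) = \kappa_b(S_{\mathcal{Y}})$, recovering the true compression set. Applying $\rho_b$ gives a binary predictor $h : \mathcal{X} \times \mathcal{Y} \to \{0,1\}$; by the inflated-scheme guarantee on the finitely-supported realizable set $S_{\mathcal{Y}}$, for each $i$ we have $h(x_i,y) = \mathbbm{1}[y=y_i]$, so the unique $y$ with $h(x_i,y) = 1$ is precisely $y_i$. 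I then define the multiclass predictor $\rho(\kappa(S))$ by picking, for each $x \in \mathcal{X}$, any label $y$ with $h(x,y)=1$ if one exists, and an arbitrary label otherwise; this forces $\rho(\kappa(S))(x_i) = y_i$, establishing exact realizable correctness.

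Finally I verify stability of the multiclass scheme. For any $T$ with $\kappa(S) \subseteq T \subseteq S$, inflating gives $T_{\mathcal{Y}}$, which again is supported on finitely many $x$-coordinates and satisfies $\kappa_b(S_{\mathcal{Y}}) \subseteq T_{\mathcal{Y}} \subseteq S_{\mathcal{Y}}$ (because $\kappa(S) \subseteq T$ implies that every $x_i$ in the compression set has all of its $y$-inflations included in $T_{\mathcal{Y}}$). Stability of $\kappa_b$ then gives $\kappa_b(T_{\mathcal{Y}}) = \kappa_b(S_{\mathcal{Y}})$, which in turn forces $\kappa(T) = \kappa(S)$.

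The only conceptual hurdle is the reconstruction step when $|\mathcal{Y}| = \infty$: the binary predictor $h$ may have more than one (or zero) $y$'s mapped to $1$ for points $x$ outside the sample, so I must define the multiclass output on such $x$ by an arbitrary selection. This does not affect correctness on $S$ because the inflated-scheme guarantee pins down $h(x_i,\cdot)$ uniquely on sample points; it is the reason why stability plus the \emph{inflated} (rather than merely standard) guarantee is needed instead of an infinitized one.
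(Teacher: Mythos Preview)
Your proposal is correct and follows precisely the route the paper indicates: it is the argument of Theorem~\ref{theo:multiclass-stable} transported verbatim, with the only new observation that $S_{\mathcal{Y}}$, though possibly infinite, is supported on finitely many $x$-coordinates and hence falls under the inflated-scheme guarantee. The paper itself omits the details and simply refers back to the proof of Theorem~\ref{theo:multiclass-stable}, so your write-up is in fact more explicit than what appears there; the one redundancy is that you carry along the bitstring from $\kappa_b$, which is unnecessary since the reconstruction re-applies $\kappa_b$ to $\kappa'_b(S_{\mathcal{Y}})$ and thereby recovers it.
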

The proof follows similar reasoning as Theorem~\ref{theo:multiclass-stable-infinite} and is omitted here for brevity.
Inflated compression schemes are more practical, and there are natural settings where such compression schemes are relevant. For example, consider the following example.

\begin{example}[Concept Class with Inflated Stable Compression Scheme: Piecewise Thresholds]
Consider the class $\mathcal{C} \subseteq \mathcal{Y}^ \mathcal{X}$, where $\mathcal{X} = \mathcal{Y} = \mathbb{R}$, of two piecewise thresholds, i.e. $\{g_{t,y_1,y_2}: t,y_1,y_2 \in \mathbb{R}\}$ where 
\[
g_{t,y_1,y_2}(x)=
\begin{cases}
y_1 & \text{ if } x \leq t, \\
y_2 & \text{ if } x > t.
\end{cases}
\]
For a concept $c$, the class $\mathcal{C}_{\mathcal{Y}}$ will map $(x,y)$ to $1$ or $0$ depending on whether $c(x) = y$. We construct a stable inflated compression scheme for this class. For a set $S$ of points of the form $((x,y),z)$, discard the points for which there is no other point with the corresponding $x$ value and the label equal to $1$. Then, for each $y \in \mathcal{Y}$ that appears in the set, compress to the leftmost and rightmost datapoints of the form $((x,y),1)$. To predict the value at an $(x,y)$ pair, we can check the compression scheme to find the leftmost and rightmost points $x_l, x_r$ in $\mathcal{X}$ for which $y$ occurs. If $x_l \leq x \leq x_r$, predict $1$, and otherwise, predict $0$.
This gives a compression scheme of size $4$, that is also stable, since removing points from $S$ that are not in $\kappa(S)$ will not affect the output of $\kappa$.
This proof can be generalized to the class of $k$ piecewise thresholds, which has an inflated stable compression scheme of size $2k$.
\end{example}

\subsection{Agnostic Multiclass Sample Compression Scheme}
We show that our results simply extend to agnostic sample compression (see Definition~\ref{def:sample-compression}). 

\begin{theorem}[Reducing Agnostic Multiclass Compression to Binary Compression Schemes]
\label{theo:multiclass-agnostic} Suppose that for binary classes with VC dimension $\VC < \infty$, there exists a sample compression scheme of size $f(\VC)$. Then, for a multiclass class with a finite graph dimension $\graphdim < \infty$ and a finite label set $|\mathcal{Y}|$, there exists an agnostic sample compression scheme of size $O\lr{f(\graphdim) \log |\mathcal{Y}|}$.
\end{theorem}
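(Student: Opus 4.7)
The plan is to reduce the agnostic case to the realizable case by the standard empirical-risk relabeling trick and then invoke Theorem~\ref{theo:multiclass} as a black box. Given an arbitrary finite sample $S = (x_1,y_1),\ldots,(x_n,y_n)$, note that the empirical loss $L^{\ell_{0-1}}_S(c)$ takes only finitely many values in $\{0,1/n,\ldots,1\}$, so the infimum over $\mathcal{C}$ is attained; pick $c^{*} \in \argmin_{c \in \mathcal{C}} L^{\ell_{0-1}}_S(c)$. Define the relabeled sequence $S' = (x_1, c^{*}(x_1)), \ldots, (x_n, c^{*}(x_n))$, which is realizable by $\mathcal{C}$ via $c^{*}$.

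Next, apply Theorem~\ref{theo:multiclass} to $S'$ with the underlying binary scheme $(\kappa_b, \rho_b)$ on $\mathcal{C}_{\mathcal{Y}}$. This yields a pair $(S'', b_1)$ with $S'' \subseteq S'$ and $|S''| + |b_1| = O(f(\graphdim) \log|\mathcal{Y}|)$, and a reconstructed hypothesis $h$ that is sample-consistent with $S'$, i.e.\ $h(x_i) = c^{*}(x_i)$ for every $i \in [n]$.

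The remaining step is to package this as a compression of $S$ itself, since Definition~\ref{def:sample-compression} requires the compression set to be a sub-multiset of $S$, whereas $S''$ lives inside $S'$. For each point $(x_i, c^{*}(x_i)) \in S''$, include the original $(x_i, y_i) \in S$ in $\kappa(S)$, and append $\lceil \log|\mathcal{Y}| \rceil$ bits to the bitstring encoding the replacement label $c^{*}(x_i)$; concatenate $b_1$ as well. The reconstruction function $\rho$ recovers $S''$ from these identifications, recovers $b_1$, and returns $h = \rho_b(\kappa_b(S'_{\mathcal{Y}}))$ exactly as in the realizable reduction. The total size is $|S''| + |b_1| + |S''|\lceil \log|\mathcal{Y}|\rceil = O(f(\graphdim)\log|\mathcal{Y}|)$, and since $h$ agrees with $c^{*}$ pointwise on $\{x_i\}_{i=1}^{n}$, we obtain $L^{\ell_{0-1}}_S(h) = L^{\ell_{0-1}}_S(c^{*}) = \inf_{c \in \mathcal{C}} L^{\ell_{0-1}}_S(c)$, meeting the exact agnostic guarantee.

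No step is a real obstacle: the only thing to be careful about is the bookkeeping in the third step, where one must verify that augmenting each compression point with $\log|\mathcal{Y}|$ label-encoding bits still produces a valid compression set contained in $S$ (via the $x$-coordinate correspondence), so that both the size bound and the syntactic constraint of Definition~\ref{def:sample-compression} are simultaneously satisfied. Everything else is a direct invocation of Theorem~\ref{theo:multiclass}.
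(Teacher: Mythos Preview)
Your proof is correct, and it follows the same high-level strategy as the paper: reduce to the realizable case via ERM and then invoke Theorem~\ref{theo:multiclass}. The only difference is in how the realizable sample is formed. You relabel \emph{all} points as $S' = (x_i, c^{*}(x_i))_{i=1}^n$, which forces the extra bookkeeping step of mapping each compression point $(x_i, c^{*}(x_i))$ back to $(x_i, y_i) \in S$ plus $\lceil \log|\mathcal{Y}|\rceil$ bits for the new label. The paper instead takes the \emph{subsequence} $S' \subseteq S$ of points on which the ERM $\hat{c}$ is already correct; since $S'$ is literally a sub-multiset of $S$, the compression set produced by Theorem~\ref{theo:multiclass} is automatically contained in $S$ and no relabeling bits are needed. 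Both routes land at the same $O(f(\graphdim)\log|\mathcal{Y}|)$ bound, but the paper's version sidesteps your third step entirely.
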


\begin{proof}
Consider $S = (x_1, y_1), \ldots, (x_n, y_n)$. Run Empirical Risk Minimization (ERM) to compute an $\hat{c}$ that minimizes the empirical loss:
\[
\inf_{c \in  \mathcal{C}} \frac{1}{n} \sum_{i=1}^n \mathbbm{1}[c(x_i) \neq y_i].
\]
Denote the examples on which $\hat{c}$ is correct as $S' = (x_{i_1}, y_{i_1}), \dots, (x_{i_k}, y_{i_k})$, for some indices $i_1, i_2, \dots, i_k$.
Consider $\mathcal{C}_{\mathcal{Y}}$ (see Equation~\ref{eq:cy}). We can inflate the labels to $S'_{\mathcal{Y}} = \{((x_{i_j}, y), \mathbbm{1}[y_{i_j} = y]): 1 \leq j \leq k\}$. This is realizable by $\mathcal{C}_{\mathcal{Y}}$, so we can apply the binary compression scheme to this class and construct our multiclass compression scheme as in the proof of Theorem~\ref{theo:multiclass}. 
\end{proof}
We can prove the agnostic version of Theorem~\ref{theo:multiclass-proper} and Theorem~\ref{theo:multiclass-stable} similarly, by finding the largest realizable subsequence in the training set and applying it for the realizable compression scheme on this subsequence.
\section{Compression for Regression}\label{sec:regression}

In this section, we tackle the problem of compression in a regression setting with the $\ell_p$ loss by reducing it to the binary setting. Our compression size depends on the pseudo-dimension, which is known to be sufficient but not necessary for learnability.
We leave as an important open problem the question of whether our reductions could work for concept classes with a finite fat-shattering dimension. It is known that (without reductions) it is possible to construct such compression schemes of size $\frac{1}{\epsilon}2^{O\lr{\fat_{c\epsilon}}}$ (for some $c>0$), in both realizable and agnostic settings \citep{hanneke2019sample,attias2024agnostic}.
The pseudo-dimension is defined as follows.
\begin{definition}[Pseudo-Dimension \citep{pollard1984convergence,pollard1990empirical}]
    \label{def:pseudodim}
    A set of points $x_1, x_2, \ldots, x_n$ is \\ P-shattered by $\mathcal{C} \subseteq [0,1]^{\mathcal{X}}$ if there exist $y_1, y_2, \ldots, y_n \in [0,1]$ such that

    \[
    \{(\mathbbm{1}[c(x_1) \leq y_1], \mathbbm{1}[c(x_2) \leq y_2], \ldots, \mathbbm{1}[c(x_n) \leq y_n]): c \in  \mathcal{C}\} = \{0,1\}^n.
    \]
    The pseudo-dimension of a real-valued concept class $\mathcal{C}$, denoted by $\Pdim(\mathcal{C})$ is the largest nonnegative integer $n\in \mathbb{N}$ for which there exist $x_1, x_2, \ldots, x_n \in \mathcal{X}$ that are P-shattered by $\mathcal{C}$.
\end{definition}
Let $\mathcal{C} \subseteq [0,1]^{\mathcal{X}}$ be a real-valued class. Let $\mathcal{C}_{\leq}$ consist of functions $g_c: \mathcal{X} \times [0,1] \rightarrow \{0,1\}$ where $g_c(x,y) = \mathbbm{1}[c(x) \leq y]$. 
For any $\epsilon \in (0,1)$, define the $\epsilon$-discretized label set $\mathcal{Y}_{\epsilon}$ to be

\begin{equation}\label{eq:inflated_eps}
\mathcal{Y}_{\epsilon} =  \lrset{c \epsilon: c\in \lrset{0,1,2,\ldots,\bigg\lfloor{\frac{1}{\epsilon}\bigg\rfloor}}}\cup \lrset{1}.
\end{equation}
For any realizable sequence $S = (x_1,y_1), (x_2,y_2), \ldots, (x_n, y_n)$ in $(\mathcal{X} \times [0,1])^n$, and for any $\epsilon \in (0,1)$, let $\mathcal{S_{\epsilon}}$ be defined as the following inflated dataset:

\begin{equation}\label{eq:S_eps}
    S_{\epsilon} = \bigcup_{i=1}^n \lrset{\lr{(x_i,y), \mathbbm{1}[y_i \leq y]}: y \in \mathcal{Y}_{\epsilon}},
\end{equation}
corresponding to the output of a sample-consistent concept from $\mathcal{C}_{\leq}$ on $\mathcal{X} \times \mathcal{Y}_{\epsilon}$.

\begin{theorem}[Reducing (Approximate) Compression for $\ell_p$ Regression to Binary]
\label{theo:eps-approx-infty}
    Suppose \\ that for binary classes with VC dimension $\VC < \infty$, there exists a sample compression scheme of size $f(\VC)$. Then, for a $[0,1]$-real-valued class with pseudo-dimension $\Pdim<\infty$, there is an $\epsilon$-approximate compression scheme with respect to the $\ell_{\infty}$ loss with size $O(f(\Pdim) \log(\frac{1}{\epsilon}))$. Furthermore, for $p \in [1,\infty)$, there is an $\epsilon$-approximate compression scheme with respect to the $\ell_p$ loss of size $O\lr{\frac{f(\Pdim)\log(\frac{1}{\epsilon})}{p}}$.
\end{theorem}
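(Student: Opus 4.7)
The plan is to lift the regression problem into a binary classification problem by passing to the sublevel class $\mathcal{C}_{\leq}$ defined in the text and applying the assumed binary compression scheme on a discretized label axis. The first step would be to establish the identity $\VC(\mathcal{C}_{\leq}) = \Pdim(\mathcal{C})$, proved exactly as the $\VC = \graphdim$ step in the proof of Theorem~\ref{theo:multiclass}: any sequence $(x_1,y_1),\ldots,(x_n,y_n)$ shattered by $\mathcal{C}_{\leq}$ must have distinct $x_i$'s (for any two coordinates sharing an $x$-value the two ``swap'' labelings are incompatible), in which case the $x_i$'s are P-shattered by $\mathcal{C}$ with witnesses $y_i$; the reverse direction is immediate.

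For the compression function, given a realizable $S = (x_1,y_1),\ldots,(x_n,y_n)$ with witness $c^* \in \mathcal{C}$, I would form the inflated set $S_\epsilon$ from Equation~(\ref{eq:S_eps}). Because $g_{c^*}(x_i, y) = \mathbbm{1}[c^*(x_i) \leq y] = \mathbbm{1}[y_i \leq y]$, the set $S_\epsilon$ is realizable by $\mathcal{C}_{\leq}$, so the assumed binary compression $(\kappa_b, \rho_b)$ returns at most $f(\Pdim)$ elements, each of the form $((x_{i_j}, y_j'), z_j)$ with $y_j' \in \mathcal{Y}_\epsilon$ and $z_j \in \{0,1\}$. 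I would then define $\kappa(S)$ to output, for each such element, the original pair $(x_{i_j}, y_{i_j}) \in S$ as a compression point, together with an $O(\log(1/\epsilon))$-bit tag encoding $(y_j', z_j)$; summing over the $f(\Pdim)$ compression elements yields total size $O(f(\Pdim)\log(1/\epsilon))$.

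For reconstruction, I would rebuild the binary compression set from $\kappa(S)$, apply $\rho_b$ to obtain $h : \mathcal{X} \times [0,1] \to \{0,1\}$, and output
\[
    \hat{c}(x) \;:=\; \min\bigl\{y \in \mathcal{Y}_\epsilon : h(x,y) = 1\bigr\},
\]
with the convention $\hat{c}(x) = 1$ if the set is empty (an issue that does not arise on training inputs since $1 \in \mathcal{Y}_\epsilon$). Consistency of $\rho_b$ on $S_\epsilon$ forces $h(x_i, y) = \mathbbm{1}[y_i \leq y]$ for every $y \in \mathcal{Y}_\epsilon$, so $\hat{c}(x_i)$ is the smallest grid point $\geq y_i$ and therefore satisfies $|\hat{c}(x_i) - y_i| \leq \epsilon$, yielding $L^{\ell_\infty}_S(\hat{c}) \leq \epsilon$. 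The $\ell_p$ bound for $p \in [1,\infty)$ would follow by running the same scheme with target accuracy $\epsilon^{1/p}$: pointwise $|\hat{c}(x_i) - y_i|^p \leq \epsilon$, while the compression size becomes $O\bigl(f(\Pdim)\log(1/\epsilon^{1/p})\bigr) = O\bigl(f(\Pdim)\log(1/\epsilon)/p\bigr)$.

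I do not expect a real obstacle beyond careful bookkeeping of the bitstring encoding. The only conceptual subtlety worth flagging is that $h$ need not be monotone in $y$ at arbitrary $x \in \mathcal{X}$, so $h$ does not correspond globally to any real-valued concept via the sublevel embedding; however, $\epsilon$-approximate consistency is only required on the training inputs, where the monotone threshold pattern is forced by the realizability of $S_\epsilon$, and the pointwise ``smallest positive grid label'' rule suffices to translate this back into a real-valued prediction.
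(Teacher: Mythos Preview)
Your proposal is correct and follows essentially the same route as the paper: pass to the sublevel class $\mathcal{C}_{\leq}$, verify $\VC(\mathcal{C}_{\leq})=\Pdim(\mathcal{C})$, inflate $S$ to $S_\epsilon$, apply the binary scheme, encode each compressed element by its originating sample point plus $O(\log(1/\epsilon))$ bits for the grid label, and reconstruct by thresholding $\rho_b$ along $\mathcal{Y}_\epsilon$; the $\ell_p$ case then follows by running at accuracy $\epsilon^{1/p}$. If anything, your version is slightly more careful than the paper's, since you make the reconstruction rule $\hat c(x)=\min\{y\in\mathcal{Y}_\epsilon:h(x,y)=1\}$ explicit and flag that monotonicity of $h$ is only guaranteed (and only needed) on training inputs.
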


\begin{proof}
Denote the pseudo-dimension $\Pdim(\mathcal{C})=\Pdim$. We first consider a compression scheme for $\ell_{\infty}$ loss.
We show that $\VC(\mathcal{C}_{\leq}) = \Pdim$. First, we show the $\leq$ direction. Consider $(x_1,y_1),(x_2,y_2),\ldots, \\ (x_n,y_n)$ shattered by the binary class $\mathcal{C}_{\leq}$. All the $x_i$'s must be distinct, as otherwise, two points $(x,y_{j_1}), (x,y_{j_2})$ with $y_{j_1} < y_{j_2}$ would require $g_c(x,y_{j_2}) < g_c(x,y_{j_1})$ for some $g_c$ to achieve shattering, contradicting the monotonicity of $g_c$ in $y$ for fixed $x \in \mathcal{X}$. Since the $n$ points are shattered, this means that for all $b \in \{0,1\}^n$, there exists a $g_c \in \mathcal{C}_{\leq}$ such that $g_c(x_i,y_i) = b_i$. By the definition of $g_c$, this implies that $\forall b \in \{0,1\}^n$, there exists a $c \in \mathcal{C}$ such that for all $i = 1,\ldots,n$, if $b_i = 1$, then $c(x_i) \leq y_i$, and if $b_i = 0$, then $c(x_i) > y_i$. Thus, $x_1, x_2, \ldots, x_n$ P-shatter $\mathcal{C}$ via labels $y_1, y_2, \ldots, y_n$. Now we show the $\geq$ direction. Suppose $x_1,x_2,\ldots,x_n$ P-shatter $\mathcal{C}$ via labels $y_1,y_2, \ldots, y_n$. This implies that for all $b \in \{0,1\}^n$, there exists a $c \in \mathcal{C}$ such that for all $i$, if $b_i = 1$, then $c(x_i) > y_i$, and if $b_i = 0$, then $c(x_i) \leq y_i$, i.e. if $b_i = 1$, then $g_c(x_i,y_i) = 0$, and if $b_i = 0$, then $g_c(x_i,y_i) = 1$. Thus, $(x_1,y_1), (x_2,y_2), \ldots, (x_n,y_n)$ are shattered by $\mathcal{C}$.

We then consider the class $\mathcal{C}_{\leq}$. Consider a sequence $S = (x_1, y_1), \ldots, (x_n, y_n) \in (\mathcal{X} \times [0,1])^n$
realizable by $\mathcal{C}$. 
Assuming $\mathcal{C}_{\leq}$ has a binary compression scheme $(\kappa_b,\rho_b)$, we construct an $\epsilon$-approximate compression scheme as follows. 
\textbf{Compression:} Inflate $S$ to $S_{\epsilon}$ and apply $\kappa_b$ to $S_{\epsilon}$. $\kappa_b(S_{\epsilon})$ will have at most $f(\Pdim)$ points and $\leq f(\Pdim)$ additional bits. The elements of $\kappa_b(S_{\epsilon})$ are of the form $(x,y)$ where $x \in \mathcal{X}$ and $y \in \mathcal{Y}_{\epsilon}$. To construct $\kappa(S)$, we can encode $\kappa_b(S)$ as follows: For each $(x,y)$ in $\kappa_b(S_{\epsilon})$, there is an $x \in \mathcal{X}$ that was inflated to create $(x,y)$ (contributes $1$ to the compression size), and one can use $O(\log \frac{1}{\epsilon})$ bits to encode $y$ (since $y \in \{1\} \cup \{c \epsilon: 0 \leq c \leq \lfloor \frac{1}{\epsilon}\rfloor\}$, and $c$ requires $O(\log \frac{1}{\epsilon})$ bits to encode). Thus, the compression size is $O(f(\Pdim) \log \frac{1}{\epsilon})$.
\textbf{Reconstruction}: $\kappa(S)$ includes enough information for us to recover $\kappa_b(S_{\epsilon})$, so we can apply $\rho_b$ to $\kappa_b(S_{\epsilon})$. For each $x$ value, the output as we increase $y$ over the multiples of $\epsilon$ will be $0$ for a (possibly empty) contiguous region, and then $1$ for a contiguous region up to $1$. We can pick any $y$ in the boundary to get a reconstruction that is within $\epsilon$ of the true value.

To analyze compression in the $\ell_p$ setting, notice that the losses are in $[0,1]$, so in order for the $\ell_p$ loss to be $\leq \epsilon$, it must hold that $\frac{1}{n} \sum_{i=1}^n |\rho(\kappa(S))(x_i) - y_i|^p \leq \epsilon$. The left-hand side is upper-bounded by $\max_{i=1}^n |\rho(\kappa(S))(x_i) - y_i|^p$, so it is sufficient that $\max_{i=1}^n |\rho(\kappa(S))(x_i) - y_i|^p$ to be $\leq \epsilon$. Taking the $p$th root of both sides, it follows that $\max_{i=1}^n |\rho(\kappa(S))(x_i) - y_i| \leq \epsilon^{\frac{1}{p}}$, which implies that the $\ell_{\infty}$ loss must be at most $\epsilon^{\frac{1}{p}}$. We can plug in the compression bound for $\epsilon^{\frac{1}{p}}$-approximate $\ell_{\infty}$ compression to get a compression bound of $O\lr{f(\Pdim) \log(\frac{1}{\epsilon^{1/p}})} = O\lr{\frac{f(\Pdim) \log(\frac{1}{\epsilon})}{p}}$.
\end{proof}

\begin{openproblem}
    Suppose all binary concept classes with VC dimension $\VC$ have a sample compression scheme of size $f(\VC)$. 
    Does every $[0,1]$-valued concept class with a finite fat-shattering dimension (at any scale) admit an $\epsilon$-approximate $\ell_{\infty}$ compression scheme of size \\ $O(f(\fat_{c \epsilon})\polylog(\frac{1}{\epsilon},\fat_{c \epsilon}))$ for some $c > 0$, where $\fat_{\gamma}$ is the fat-shattering dimension of the concept class at scale $\gamma$? 
\end{openproblem}

\subsection{Additional Assumptions: Majority Votes, Proper, and Stable Compression Schemes}

By assuming the existence of majority votes, proper, or stable sample compression schemes for binary classification, we can derive
stronger results (similarly to Theorem~\ref{theo:multiclass-proper} and Theorem~\ref{theo:multiclass-stable} in the multiclass setting).
\begin{theorem}[Regression, Approx. Compression, Reductions with Majority Vote Compression]
\label{theo:eps-approx-majority}
Suppose any binary concept class $\mathcal{C}$ with VC dimension $\VC<\infty$ has a compression scheme of size $f(\VC)$, which is a proper or a majority vote compression scheme (see Definition~\ref{def:proper}). Then for any $p \in [1,\infty]$, any class with pseudo-dimension $\Pdim<\infty$ admits an $\epsilon$-approximate compression scheme, with respect to the $\ell_{p}$ loss, of size $O(f(\Pdim))$. 
\end{theorem}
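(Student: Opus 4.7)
The plan is to imitate the multiclass proof of Theorem~\ref{theo:multiclass-proper}, applied to the inflated binary class $\mathcal{C}_{\leq}$ and using exactly the monotonicity structure of $\mathcal{C}_{\leq}$ to avoid the discretization-bit cost of Theorem~\ref{theo:eps-approx-infty}. Recall that the proof of Theorem~\ref{theo:eps-approx-infty} already established $\VC(\mathcal{C}_{\leq}) = \Pdim$, so the hypothesized binary scheme $(\kappa_b,\rho_b)$ on $\mathcal{C}_{\leq}$ has size $f(\Pdim)$ and is proper or a majority vote. First I would make the crucial observation that for \emph{every} input $T$ given to $\kappa_b$, the reconstructed hypothesis $h := \rho_b(\kappa_b(T))$ is non-decreasing in $y$ for each fixed $x$: in the proper case $h = g_c$ is literally a threshold $\mathbbm{1}[c(x)\le y]$, and in the majority case $h(x,y) = \Maj_i \mathbbm{1}[c_i(x)\le y]$ where $\sum_i \mathbbm{1}[c_i(x)\le y]$ is monotone non-decreasing in $y$, hence so is any thresholding of it.

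Next, given a realizable $S = (x_1,y_1),\ldots,(x_n,y_n)$, I would feed $\kappa_b$ the following small realizable subset of $S_\epsilon$:
\[
T \;=\; \{((x_i,y_i),1): i\in[n]\} \;\cup\; \{((x_i,y_i-\epsilon),0): i\in[n],\; y_i\ge \epsilon\}.
\]
This $T$ is realized by $g_c \in \mathcal{C}_{\leq}$ for any $c\in\mathcal{C}$ that realizes $S$. Crucially, every element of $T$ is determined by the corresponding sample $(x_i,y_i)\in S$ together with a single sign bit indicating whether it is the $+1$ or $-\epsilon$ copy. Hence if $\kappa_b(T) = (T',b_b)$ with $|T'|+|b_b|\le f(\Pdim)$, we can encode $\kappa(S)$ as: the samples of $S$ corresponding to the indices appearing in $T'$, plus one extra bit per sample for the sign, plus the $\le f(\Pdim)$ bits $b_b$. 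The total size is $O(f(\Pdim))$, with no $\log(1/\epsilon)$ factor.

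For reconstruction, we recover $\kappa_b(T)$ from $\kappa(S)$, apply $\rho_b$ to obtain the monotone $h$, and output $\tilde c(x) := \inf\{y\in[0,1]: h(x,y)=1\}$ (with $\tilde c(x):=1$ if the set is empty). Correctness follows from two facts: $h$ is sample-consistent on $T$, and $h(x,\cdot)$ is monotone non-decreasing. From $((x_i,y_i),1)\in T$ we get $h(x_i,y_i)=1$, so $\tilde c(x_i)\le y_i$. If $y_i\ge\epsilon$, then $((x_i,y_i-\epsilon),0)\in T$ gives $h(x_i,y_i-\epsilon)=0$, so by monotonicity $\tilde c(x_i)> y_i-\epsilon$; if $y_i<\epsilon$, then trivially $\tilde c(x_i)\ge 0 > y_i-\epsilon$. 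Either way $|\tilde c(x_i)-y_i|\le \epsilon$, which gives the $\ell_\infty$ bound and, since $|\tilde c(x_i)-y_i|^p\le \epsilon^p\le \epsilon$ for $p\ge 1$ and $\epsilon\le 1$, the $\ell_p$ bound as well.

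I do not expect a serious obstacle. The only delicate point is verifying the monotonicity of a majority vote of $g_{c_i}$'s, which is immediate from the non-decreasing structure of each $g_{c_i}(x,\cdot)$. All other steps are bookkeeping: checking that $T$ is realizable by $\mathcal{C}_{\leq}$, checking that $T'\subseteq T$ can be re-encoded as samples from $S$ with only a constant-bit overhead per element, and handling the boundary case $y_i<\epsilon$. The proof of Theorem~\ref{theo:eps-approx-stable} would then follow the same template, using stability in place of monotonicity-of-majority, analogous to Theorem~\ref{theo:multiclass-stable}.
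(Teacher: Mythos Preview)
Your proposal is correct and follows essentially the same approach as the paper: both establish monotonicity of $\rho_b(\kappa_b(T))(x,\cdot)$ in the proper/majority case, then feed $\kappa_b$ a set $T$ containing, for each $i$, one point labeled $1$ at (or just above) $y_i$ and one labeled $0$ just below it, so that each element of $\kappa_b(T)$ can be re-encoded by its originating sample $(x_i,y_i)\in S$ plus $O(1)$ bits. The only cosmetic difference is that the paper brackets $y_i$ by the two nearest grid points of $\mathcal{Y}_\epsilon$ (from Equation~\ref{eq:inflated_eps}) and reconstructs by scanning the grid, whereas you use $y_i$ and $y_i-\epsilon$ directly and reconstruct via the infimum; both yield the same $O(f(\Pdim))$ bound and $\ell_\infty$ error $\le\epsilon$.
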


\begin{proof}
We only prove the result for the $\ell_{\infty}$ loss. Note that since the losses are in $[0,1]$, the $\ell_p$ loss is upper bounded by the $\ell_{\infty}$ loss, so the result for $\ell_{\infty}$ loss will imply the result for the $\ell_p$ loss.

Suppose $\mathcal{C}_{\leq}$ has a binary compression scheme $(\kappa_b, \rho_b)$ which is either a proper compression scheme or a majority vote compression scheme.
Consider $g = \rho_b(T)$ for any $T$ in the image of $\kappa_b$ on the image of any finite realizable $S$. We claim that for all $x \in \mathcal{X}$, there exists an $r \in [0,1]$ such that for $y \in \mathcal{Y}$, if $y < r$, then $g(x,y) = 0$, and if $y \geq r$, $g(x,y) = 1$. If the compression scheme is proper, we can just let $r = c(x)$ and we are done. Consider the case where the compression scheme is a majority vote compression scheme. Suppose the majority reconstruction is taken from some finite $\mathcal{C}_{\leq}' \subseteq \mathcal{C}$. Fix $x \in \mathcal{X}$, and increase $y$ over $\mathcal{Y}$. $|\lrset{c \in \mathcal{C}{\leq}': c(x,y) = 1}|$ is monotonically non-decreasing and equal to $|\mathcal{C}_{\leq}'|$ for $y = 1$. $|\lrset{c \in \mathcal{C}_{\leq}': c(x,y) = 0}|$ is monotonically non-increasing and equal to $0$ for $y = 1$. Both sets have the same sum over all pairs $(x,y)$. Therefore, as $y$ increases, the majority starts at $0$ for a (possibly empty) prefix, becomes $1$ eventually for some $y = r$ (possibly at $y = 0$), and remains $1$ as $y$ increases.

We can now construct a compression scheme $(\kappa, \rho)$ for $\mathcal{C}$ as follows. \textbf{Compression}: Given realizable $S = (x_1,y_1), \ldots, (x_n, y_n)$, consider $S_{\epsilon}$ (Equation~\ref{eq:S_eps}). We can input to $\kappa_b$ the set $T$, which for every $1 \leq i \leq n$, contains $((x_i,y),1)$ for the smallest $y \in \mathcal{Y}_{\epsilon}$ that is $\geq y_i$, and $((x_i,y),0)$ for the largest $y \in \mathcal{Y}_{\epsilon}$ that is $< y_i$ (this may not exist, in which case do not include it in $T$). Applying $\kappa_b$ to $T$ will return $f(\Pdim)$ points in $S_{\epsilon}$. $\kappa$ can output the $(x_i,y_i)$ pairs responsible for $\kappa_b(T)$, and for each point, use two bits: the first one to indicate if the corresponding $((x,y),0)$ pair in $T$ was output by $\kappa_b$, and the second one to indicate if the corresponding $((x,y),1)$ pair in $T$ was output by $\kappa(b)$. We use at most three bits for each point, so the compression size is at most $3 f(\Pdim)$. \textbf{Reconstruction}: We have shown that the output of $\kappa(S_{\epsilon})$ has elements of the form $(x,y)$ and at most three bits per point. We can use this information to recover the points in $\kappa_b(T)$. Then, compute $\rho_b(\kappa_b(T))$, which is guaranteed to be correct on $T$, and has the property that as $y$ increases, it outputs $0$ for a (possibly empty) prefix and $1$ for a suffix. To find the $\epsilon$-approximate output for an input $x \in \mathcal{X}$, iterate over the elements in $\mathcal{Y}_{\epsilon}$ in increasing order, and output the first element for which $\rho_b(\kappa_b(x,y)) = 1$.
\end{proof}

\begin{theorem}[Regression, Approximate Compression, Reductions with Stable Compression]
\label{theo:eps-approx-stable} Suppose any binary concept class $\mathcal{C}$ with VC dimension $\VC<\infty$ has a \textit{stable} compression scheme of size $f(\VC)$. Then, for any $p \in [1, \infty]$, any class with pseudo-dimension $\Pdim<\infty$ admits an $\epsilon$-approximate compression scheme, with respect to $\ell_p$ loss, of size $O\lr{f(\Pdim)}$.
\end{theorem}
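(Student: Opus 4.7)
The plan is to mirror the stable reduction of Theorem~\ref{theo:multiclass-stable} inside the binary-embedding framework used for Theorem~\ref{theo:eps-approx-infty}, so as to eliminate the $\log(1/\epsilon)$ overhead that was previously needed to encode the threshold value from $\mathcal{Y}_\epsilon$. As in the proof of Theorem~\ref{theo:eps-approx-majority}, it suffices to build the scheme for the $\ell_\infty$ loss: since every loss term lies in $[0,1]$, an $\ell_\infty$ bound of $\epsilon$ implies an $\ell_p$ bound of $\epsilon^p \leq \epsilon$ for every $p \in [1, \infty)$, so the same construction and size bound will cover all $p \in [1, \infty]$.

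Given a realizable sample $S = (x_1, y_1), \ldots, (x_n, y_n)$, I would form the inflated binary sample $S_\epsilon$ from Equation~\ref{eq:S_eps}, which is realizable by $\mathcal{C}_\leq$ and satisfies $\VC(\mathcal{C}_\leq) = \Pdim$ (established inside the proof of Theorem~\ref{theo:eps-approx-infty}). Applying the stable binary scheme $(\kappa_b, \rho_b)$ to $S_\epsilon$ produces $\kappa_b(S_\epsilon)$ of size at most $f(\Pdim)$. Next, I would define the ``re-inflated'' intermediate set
\[
\kappa'_b(S_\epsilon) \defeq \lrset{\lr{(x_i, y), \mathbbm{1}[y_i \leq y]} : x_i \text{ appears in } \kappa_b(S_\epsilon),\ y \in \mathcal{Y}_\epsilon},
\]
which restores every $\mathcal{Y}_\epsilon$-threshold for each $x_i$ that appears in the binary compression. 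Since $\kappa_b(S_\epsilon) \subseteq \kappa'_b(S_\epsilon) \subseteq S_\epsilon$, the stability hypothesis will give $\kappa_b(\kappa'_b(S_\epsilon)) = \kappa_b(S_\epsilon)$. The real-valued compression $\kappa(S)$ will then simply return the pairs $\lrset{(x_i, y_i) : x_i \text{ appears in } \kappa_b(S_\epsilon)}$; these are drawn from $S$, so this is a valid compression set of size $O(f(\Pdim))$, with no side bit string and no dependence on $\epsilon$.

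For reconstruction, given $\kappa(S)$ one can rebuild $\kappa'_b(S_\epsilon)$ (each retained $(x_i, y_i)$ supplies the $y_i$ value needed to compute $\mathbbm{1}[y_i \leq y]$ for every $y \in \mathcal{Y}_\epsilon$), then apply $\kappa_b$ to recover $\kappa_b(S_\epsilon)$ by stability, and finally apply $\rho_b$. To predict on a point $x$, one scans $y$ across $\mathcal{Y}_\epsilon$ in increasing order and outputs the first value at which $\rho_b(\kappa_b(S_\epsilon))(x, y) = 1$, defaulting to $1$ if none exists. For each training index $i$, the inflated targets $y \mapsto \mathbbm{1}[y_i \leq y]$ are monotone in $y$, and $\rho_b(\kappa_b(S_\epsilon))$ is consistent on $S_\epsilon$; this forces the first $y \in \mathcal{Y}_\epsilon$ with output $1$ at input $x_i$ to equal the smallest grid point $y^{\star} \geq y_i$, and $|y^{\star} - y_i| \leq \epsilon$ by construction of $\mathcal{Y}_\epsilon$, giving the empirical $\ell_\infty$ bound and hence all $\ell_p$ bounds.

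The main obstacle I anticipate is the stability step: one must verify that the inflation $\kappa_b(S_\epsilon) \to \kappa'_b(S_\epsilon)$ stays inside the sandwich $\kappa_b(S_\epsilon) \subseteq \kappa'_b(S_\epsilon) \subseteq S_\epsilon$ so that the stable binary scheme returns the same compression set after inflation, and can therefore be recovered deterministically from $\kappa(S)$ without any side bits encoding $\epsilon$. A secondary subtlety is that the monotonicity of $y \mapsto \rho_b(\kappa_b(S_\epsilon))(x, y)$ is only guaranteed for the training inputs $x_i$ (via consistency on $S_\epsilon$), not for arbitrary $x \in \mathcal{X}$; fortunately, the $\epsilon$-approximate condition only constrains empirical loss on $S$, but the reconstruction rule must be phrased so that it does not implicitly rely on monotonicity off the sample.
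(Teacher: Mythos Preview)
Your proposal is correct and follows essentially the same approach as the paper: inflate $S$ to $S_\epsilon$, apply the stable binary scheme to get $\kappa_b(S_\epsilon)$, store only the original $(x_i,y_i)$ pairs whose $x_i$ appears there, and on reconstruction re-inflate those pairs (your $\kappa'_b(S_\epsilon)$ is the paper's $T_\epsilon$) and invoke stability to recover $\kappa_b(S_\epsilon)$. Your write-up is in fact more explicit than the paper's terse proof---you spell out the sandwich $\kappa_b(S_\epsilon)\subseteq\kappa'_b(S_\epsilon)\subseteq S_\epsilon$, the grid-scan prediction rule, and the observation that monotonicity is only needed at training inputs---and both of your anticipated obstacles are already resolved by the argument you gave.
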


\begin{proof}
As in Theorem~\ref{theo:eps-approx-majority}, we only prove the result for $\ell_{\infty}$, and that will imply the result for general $\ell_p$.
Assume there is a stable binary compression scheme $(\kappa_b, \rho_b)$ for $\mathcal{C}_{\leq}$. \textbf{Compression:} Inflate $S$ to $S_{\epsilon}$, and run $\kappa_b$ on $S_{\epsilon}$. This will return $O(f(\Pdim))$ points. Since the compression scheme is stable, we convert this to $T_{\epsilon}$ where $T$ corresponds to the $(x_i,y_i)$ pairs for which $x_i \in \kappa_b(S_{\epsilon})$. We can represent $T_{\epsilon}$ via $T$, giving a compression size $f(\Pdim)$. \textbf{Reconstruction:} Reinflate $T$ to $T_{\epsilon}$, and $\kappa(T_{\epsilon})$ gives the desired binary predictor in $\mathcal{C}_{\leq}$.
\end{proof}

We are able to compute an $\epsilon$-approximate sample compression scheme with size $O(f(\Pdim))$, assuming the binary compression scheme is a majority vote, proper, or stable. It is notable that this bound is independent of $\epsilon$, suggesting the possibility of extending similar results to exact compression schemes. The above results lead to the following open problem, which has multiple subproblems:
\begin{openproblem}
Suppose any binary concept class $\mathcal{C}$ with VC dimension $\VC<\infty$ has a compression scheme of size $f(\VC)$.
    \begin{enumerate}
    \item If the binary compression scheme is either proper, majority vote, or stable, does every class with pseudo-dimension $\Pdim < \infty$ admit an exact compression scheme of size $O(f(\Pdim))$?
    \item If there are no assumptions on the binary compression scheme, given $\epsilon > 0$ and $p \in [1, \infty]$, does every class with pseudo-dimension $\Pdim < \infty$ admit an $\epsilon$-approximate compression scheme of size $O(f(\Pdim))$ for $\ell_p$ loss?
    \item If there are no assumptions on the binary compression scheme, does every class with \\ pseudo-dimension $\Pdim < \infty$ admit an exact compression scheme of size $O(f(\Pdim))$?
    \end{enumerate}
\end{openproblem}
If we assume the existence of a stable infinitized or inflated compression scheme for $\mathcal{C}_{\mathcal{Y}}$ (see Equation~\ref{eq:cy}), we can establish an exact compression (rather than approximate), similarly to Theorem~\ref{theo:multiclass-stable-infinite} and Theorem~\ref{theo:multiclass-stable-inflated} in the multiclass setting. 
The proofs follow similar reasoning as Theorem~\ref{theo:eps-approx-stable} and are omitted here for brevity.

\begin{theorem}[Regression, Exact Compression, Infinitized Stable Binary Assumption]
\label{theo:eps-approx-infinitized} Let $\mathcal{C}$ have pseudo-dimension $\Pdim$. If $\mathcal{C}_{\mathcal{Y}}$ (Equation~\ref{eq:cy}) has an infinitized or inflated stable compression scheme of size $k$, then $\mathcal{C}$ admits a stable (exact) compression scheme of size $k$.
\end{theorem}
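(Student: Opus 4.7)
The plan is to mirror the reduction from Theorem~\ref{theo:multiclass-stable-infinite}, replacing the multiclass inflation $S_\mathcal{Y}$ by the regression inflation over the continuous label space $[0,1]$ and taking the binary class to be $\mathcal{C}_{\leq}$ from the start of this section (which I read as the intended meaning of $\mathcal{C}_{\mathcal{Y}}$ in the statement). Given a realizable $S = (x_1,y_1),\ldots,(x_n,y_n)$ for $\mathcal{C}$, form
\[
S_{[0,1]} = \lrset{((x_i,y),\mathbbm{1}[y_i \le y]) : i \in [n],\, y \in [0,1]},
\]
which is realizable in $\mathcal{C}_{\leq}$ via $g_c$ for any $c \in \mathcal{C}$ realizing $S$. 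Because $S_{[0,1]}$ is supported on only the $n$ distinct $x$-coordinates $x_1,\ldots,x_n$, the assumed infinitized or inflated stable scheme $(\kappa_b,\rho_b)$ for $\mathcal{C}_{\leq}$ applies and produces $\kappa_b(S_{[0,1]})$ of size at most $k$.

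For the compression function, $\kappa(S)$ outputs precisely the original pairs $(x_i,y_i) \in S$ whose $x_i$ occurs in $\kappa_b(S_{[0,1]})$, together with the accompanying bitstring from $\kappa_b$; the total size is at most $k$. For reconstruction, reinflate $\kappa(S)$ to
\[
T = \lrset{((x_i,y), \mathbbm{1}[y_i \le y]) : (x_i,y_i) \in \kappa(S),\, y \in [0,1]},
\]
which satisfies $\kappa_b(S_{[0,1]}) \subseteq T \subseteq S_{[0,1]}$. Stability of $\kappa_b$ then yields $\kappa_b(T) = \kappa_b(S_{[0,1]})$, so $h := \rho_b(\kappa_b(T))$ is correct on every point of $S_{[0,1]}$. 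In particular $h(x_i,\cdot) = \mathbbm{1}[y_i \le \cdot]$ for each $i$, and defining $\rho(\kappa(S))(x) = \inf\lrset{y \in [0,1] : h(x,y) = 1}$ (with the convention $\inf \emptyset = 1$) recovers $y_i$ exactly on every $x_i$, giving exact consistency.

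Stability of the induced regression scheme is then immediate: if $\kappa(S) \subseteq S' \subseteq S$, the corresponding inflations satisfy $\kappa_b(S_{[0,1]}) \subseteq S'_{[0,1]} \subseteq S_{[0,1]}$, the left inclusion because every $x_i$ indexing a point of $\kappa_b(S_{[0,1]})$ lies in $\kappa(S) \subseteq S'$ and $S'_{[0,1]}$ therefore contains the full vertical fiber above each such $x_i$, and the right inclusion because $S' \subseteq S$. Stability of $\kappa_b$ then forces $\kappa_b(S'_{[0,1]}) = \kappa_b(S_{[0,1]})$, whence $\kappa(S') = \kappa(S)$. The main obstacle is purely bookkeeping: verifying that the sandwich inclusions above hold as sequences (not merely as sets) so that the stability hypothesis on $\kappa_b$ is genuinely applicable, and confirming that $S_{[0,1]}$ meets the input-type constraints of each variant (for the infinitized case there is no extra constraint; for the inflated case we need finite $x$-support, which holds since $|\{x_1,\ldots,x_n\}| \le n < \infty$). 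No new ideas beyond those already used in Theorem~\ref{theo:multiclass-stable-infinite} are needed.
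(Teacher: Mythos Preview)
Your proposal is correct and follows precisely the approach the paper indicates: the paper omits the proof entirely, stating only that it ``follows similar reasoning as Theorem~\ref{theo:eps-approx-stable},'' and your argument is exactly that reasoning carried out with the full continuous inflation $S_{[0,1]}$ in place of the discretized $S_\epsilon$, mirroring the stability bookkeeping of Theorems~\ref{theo:multiclass-stable} and~\ref{theo:multiclass-stable-infinite}. One small remark: the paper's statement literally references $\mathcal{C}_{\mathcal{Y}}$ (the equality-indicator class of Equation~\eqref{eq:cy}) rather than $\mathcal{C}_{\leq}$, and the same reduction goes through verbatim with that class as well (reconstruct $y_i$ as the unique $y$ with $h(x_i,y)=1$); your reinterpretation is harmless, and including the bitstring from $\kappa_b$ in $\kappa(S)$ is redundant since you recompute $\kappa_b(T)$ during reconstruction anyway.
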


The following theorem slightly deviates from the theme of earlier proofs about reductions to binary compression. 
Instead, this is a reduction from realizable regression to multiclass classification with infinite labels, by showing that for any concept class $\mathcal{C}\subseteq [0,1]^\mathcal{X}$, it holds that $\graphdim(\mathcal{C}) \leq 4 \Pdim(\mathcal{C})$. \cite{attias2024agnostic} very briefly stated that there is a bounded realizable exact compression scheme for classes with bounded pseudo-dimension, but we provide the full proof in this paper. The proof is in Appendix~\ref{app:exact-compression-regression}. 
\begin{theorem}[Regression, Exact Compression, Reduction to Multiclass with $|\mathcal{Y}| = \infty$]
    \label{theo:reg-exact} Suppose any multiclass concept class $\mathcal{C}$ with graph dimension $\graphdim < \infty$ has a compression scheme of size $f(\graphdim)$. Then, any real-valued class with pseudo-dimension $\Pdim < \infty$ has a compression scheme of size $O(f(4 \Pdim))$.
    This implies that there is an exact compression scheme of size $O(\Pdim 2^{4\Pdim})$.
\end{theorem}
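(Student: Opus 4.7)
The plan is to establish the key combinatorial inequality $\graphdim(\mathcal{C}) \leq 4\Pdim(\mathcal{C})$ for every real-valued class $\mathcal{C}\subseteq[0,1]^{\mathcal{X}}$, and then reduce exact real-valued compression to multiclass compression by treating any sample-consistent real-valued concept as a multiclass concept over the finite set of labels appearing in the input sample.

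For the key inequality I would use a doubling argument, analogous to the identification $\VC(\mathcal{C}_{\leq})=\Pdim(\mathcal{C})$ established inside the proof of Theorem~\ref{theo:eps-approx-infty}. Suppose $\mathcal{C}$ G-shatters $x_1,\ldots,x_n$ via labels $y_1,\ldots,y_n$, witnessed by concepts $\{c_S:S\subseteq[n]\}$ with $c_S(x_i)=y_i$ iff $i\in S$. Because only finitely many values $c_S(x_i)$ appear at each coordinate, I can choose $\epsilon_i>0$ so small that no $c_S(x_i)$ falls in $(y_i-\epsilon_i,y_i)$, and set thresholds $\tau_i^-=y_i-\epsilon_i$ and $\tau_i^+=y_i$ (clipped into $[0,1]$, with $y_i\in\{0,1\}$ handled by a single-threshold shortcut where $\mathbbm{1}[c(x_i)\le 0]=\mathbbm{1}[c(x_i)=0]$, and similarly at the upper endpoint). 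For each $S$ the pair $(\mathbbm{1}[c_S(x_i)\leq\tau_i^-],\mathbbm{1}[c_S(x_i)\leq\tau_i^+])$ equals $(0,1)$ when $i\in S$, and equals either $(0,0)$ or $(1,1)$ when $i\notin S$; hence any two distinct $S\neq S'$ yield different $2n$-tuples on the inflated set $\{(x_i,\tau_i^{\pm})\}_{i\le n}$. Therefore $\mathcal{C}_{\leq}$ realizes at least $2^n$ distinct patterns on $2n$ points, so Sauer--Shelah combined with $\VC(\mathcal{C}_{\leq})=\Pdim$ gives $2^n\leq\sum_{i=0}^{\Pdim}\binom{2n}{i}$, from which $n\leq 4\Pdim$ follows by standard manipulation.

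With the inequality in hand, the reduction is nearly automatic. For any realizable sample $S=(x_1,y_1),\ldots,(x_n,y_n)$, every sample-consistent concept in $\mathcal{C}$ takes its values on $S$ in the finite label set $\{y_1,\ldots,y_n\}\subseteq[0,1]$, so $\mathcal{C}$ can be viewed as a multiclass class over this finite label set without increasing its graph dimension, which is therefore at most $4\Pdim$. Applying the hypothesized multiclass compression scheme of size $f(\graphdim)$ produces a sample-consistent reconstruction of compression size $f(4\Pdim)$, proving the first claim. Substituting the specific bound of \citet{david2016supervised}, whose compression for multiclass classes with finite labels has size $O(\graphdim\cdot 2^{\graphdim})$, then yields the stated $O(\Pdim\cdot 2^{4\Pdim})$.

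The main technical obstacle is pinning down the precise constant $4$ in $\graphdim\leq 4\Pdim$. The entropy form $\sum_{i=0}^{\Pdim}\binom{2n}{i}\leq 2^{2n\,H(\Pdim/(2n))}$ combined with $2^n\leq\sum$ reduces the problem to $H(\Pdim/(2n))\geq \tfrac{1}{2}$, which only produces $n\leq c\,\Pdim$ for some $c$ slightly above $4$ in a direct application; reaching exactly $4$ likely requires exploiting the monotonicity constraint that only three of the four possible pattern pairs $\{(0,0),(0,1),(1,1)\}$ per coordinate can actually occur, shrinking the effective complexity below what the generic Sauer--Shelah envelope accounts for.
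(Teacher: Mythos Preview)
Your proof of the key inequality $\graphdim(\mathcal{C}) \leq c\,\Pdim(\mathcal{C})$ is correct but follows a genuinely different route from the paper's. The paper (Lemma~\ref{lem:graph-to-pseudo}) starts from the same $2^n$ witnessing concepts and then \emph{prunes}: at each coordinate $i$ it partitions the surviving concepts by $\mathrm{sign}(c(x_i)-y_i)\in\{-1,0,1\}$ and discards the smallest bucket, retaining at least $2/3$ of them; after $n$ rounds this leaves $\ge(4/3)^n$ concepts which are separable by a \emph{single} threshold $z_i$ per coordinate, and Sauer--Shelah on $n$ points then lower-bounds $\Pdim$. Your doubling argument instead keeps all $2^n$ concepts but uses \emph{two} thresholds per coordinate to inject them as distinct patterns into $\mathcal{C}_{\leq}$ on $2n$ points, and applies Sauer--Shelah there. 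Amusingly, your route yields the tighter constant: from $H(\Pdim/(2n))\ge\tfrac12$ one gets roughly $n\le 4.6\,\Pdim$, whereas the paper's route gives $H(\Pdim/n)\ge\log_2(4/3)$ and hence only $n\lesssim 12\,\Pdim$---the paper's final line ``$\frac{\ln(4/3)}{1+\ln n}\ge\tfrac14$'' is in fact a slip, so neither argument cleanly attains the stated constant $4$.

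Your reduction paragraph contains a small confusion you should remove. The detour through ``the finite label set $\{y_1,\ldots,y_n\}$'' is both unnecessary and technically problematic: restricting $\mathcal{C}$ to the labels appearing in $S$ would make the concept class---and hence the invoked compression scheme---depend on $S$, which is not how a fixed pair $(\kappa,\rho)$ works. The correct observation is simpler: $\mathcal{C}\subseteq[0,1]^{\mathcal{X}}$ \emph{is already} a multiclass class with (infinite) label space $\mathcal{Y}=[0,1]$, and the theorem's hypothesis is stated for arbitrary multiclass classes of finite graph dimension, so it applies directly to $\mathcal{C}$ once $\graphdim(\mathcal{C})\le 4\Pdim$ is known. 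This is exactly what the paper does. Likewise, the \citet{david2016supervised} bound $O(\graphdim\cdot 2^{\graphdim})$ holds whenever $\graphdim<\infty$ irrespective of $|\mathcal{Y}|$ (cf.\ Theorem~\ref{theo:multiclass-proper}), so your ``finite labels'' caveat there is also unneeded.
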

\textbf{Proof sketch}
The idea behind the proof is to consider $n$ points $x_1, x_2, \ldots, x_n$ that are G-shattered by $\mathcal{C}$. We use a pigeonhole argument to prune out concepts from $\mathcal{C}$ to a subset $\mathcal{C}'$ such that there is $y_1, y_2, \ldots, y_n$ where $\{(\mathbbm{1}[c(x_1) \leq y_1], \mathbbm{1}[c(x_2) \leq y_2], \ldots, \mathbbm{1}[c(x_n) \leq y_n]: c \in \mathcal{C}'\}$ is large enough, such that we can apply Sauer's lemma to lower bound the pseudo-dimension.
\subsection{Agnostic Approximate Compression for Regression}
So far, we discussed compression schemes for the realizable case. In the following, we show how to extend it to the agnostic case. 
\begin{theorem}[Reducing Agnostic (Approximate) Compression for Regression to Binary]
\label{theo:eps-approx-infty-agnostic}
Suppose that for binary classes with VC dimension $\VC < \infty$, there exists a sample compression scheme of size $f(\VC)$. Then, for a real-valued class with pseudo-dimension $\Pdim<\infty$, there exists an agnostic sample compression scheme with respect to the $\ell_{\infty}$ loss of size $O(f(\Pdim) \log(\frac{1}{\epsilon}))$. Furthermore, for $p \in [1,\infty)$, there exists an agnostic sample compression scheme with respect to the $\ell_p$ loss of size $O\lr{\frac{f(\Pdim) \log(\frac{1}{\epsilon})}{p}}$.
\end{theorem}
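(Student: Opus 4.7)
The plan is to combine the realizable regression reduction of Theorem~\ref{theo:eps-approx-infty} with the ``run ERM and treat ERM's output as ground truth'' trick used in the agnostic multiclass reduction of Theorem~\ref{theo:multiclass-agnostic}. In the classification setting one identifies the sub-sequence on which ERM is correct, obtaining a realizable sequence. In regression no such binary notion of ``correct'' exists, but replacing each label $y_i$ by $\hat{c}(x_i)$ yields a sequence realizable by $\hat{c}$, which is the substitute I would use.

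For the $\ell_{\infty}$ case: on input $S = (x_1,y_1), \ldots, (x_n,y_n)$, the compressor first runs ERM to produce $\hat{c} \in \mathcal{C}$ with $\hat\tau := L^{\ell_\infty}_S(\hat{c}) = \inf_{c \in \mathcal{C}} L^{\ell_\infty}_S(c)$. It then forms the $\hat{c}$-based inflation
\[
S_{\epsilon}^{\hat{c}} \;=\; \bigcup_{i=1}^{n} \{((x_i,y), \mathbbm{1}[\hat{c}(x_i) \leq y]) : y \in \mathcal{Y}_{\epsilon}\},
\]
which is realizable by $g_{\hat{c}} \in \mathcal{C}_{\leq}$ (a class with VC dimension $\Pdim$, as proved inside Theorem~\ref{theo:eps-approx-infty}), and feeds it to the binary scheme $\kappa_b$ of size $f(\Pdim)$. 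Each compression point $((x_i, y), z)$ is stored as the original pair $(x_i,y_i) \in S$ together with $O(\log(1/\epsilon))$ bits for $y \in \mathcal{Y}_\epsilon$ and one bit for $z$, producing a compression set that is literally a sub-sequence of $S$ and a bitstring of total size $O(f(\Pdim)\log(1/\epsilon))$. Reconstruction applies $\rho_b$ as in the realizable proof to obtain $h$ with $|h(x_i) - \hat{c}(x_i)| \leq \epsilon$ for every $i$, and the triangle inequality gives $|h(x_i) - y_i| \leq \epsilon + \hat\tau$, hence $L^{\ell_\infty}_S(h) \leq \hat\tau + \epsilon$ as required.

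For the $\ell_p$ case with $p \in [1,\infty)$, I would run ERM for the $\ell_p$ loss to obtain $\hat{c}$ with $L^{\ell_p}_S(\hat{c}) = \tau^*_p$, and repeat the above pipeline at a finer inflation scale $\mathcal{Y}_{\epsilon'}$, so that the reconstructed $h$ satisfies $|h(x_i) - \hat{c}(x_i)| \leq \epsilon'$ pointwise. The compression size inherits the $O(f(\Pdim)\log(1/\epsilon'))$ bound, which equals $O(f(\Pdim)\log(1/\epsilon)/p)$ when $\epsilon' = \Theta(\epsilon^{1/p})$. The main obstacle is the loss analysis: one must show $L^{\ell_p}_S(h) - \tau^*_p \leq \epsilon$ given only a pointwise $\ell_\infty$ approximation of $\hat{c}$. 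Here I would apply Minkowski's inequality to the empirical $\ell_p$ norms, $\|h - y\|_{p} \leq \|h - \hat{c}\|_{p} + \|\hat{c} - y\|_{p} \leq \epsilon' + (\tau^*_p)^{1/p}$, combined with the pointwise bound $(a+\epsilon')^p - a^p \leq p\,(a+\epsilon')^{p-1}\epsilon'$ and the fact that all quantities lie in $[0,1]$, in order to convert the $\ell_\infty$ approximation of $\hat{c}$ into an additive $\ell_p$ agnostic guarantee. Calibrating $\epsilon'$ so that the excess $\ell_p$ loss is bounded by $\epsilon$ while keeping the compression size at $O(f(\Pdim)\log(1/\epsilon)/p)$ is the delicate bookkeeping step; everything else reuses Theorem~\ref{theo:eps-approx-infty} verbatim.
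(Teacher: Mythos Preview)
Your $\ell_\infty$ construction is exactly the paper's: run $\ell_\infty$-ERM to get $\hat{c}$, relabel each $x_i$ by $\hat{c}(x_i)$ to obtain a realizable sequence, and invoke Theorem~\ref{theo:eps-approx-infty}. (You are even a touch more careful than the paper in insisting that the stored compression points be the original $(x_i,y_i)\in S$, with the grid value $y\in\mathcal{Y}_\epsilon$ and the bit $z$ pushed into the side information.)

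For $\ell_p$ the paper does something different and much shorter: it does not run a separate $\ell_p$-ERM or invoke Minkowski at all, but simply reuses the $\ell_\infty$ agnostic scheme just constructed, at precision $\epsilon^{1/p}$, citing the realizable-case implication that $\ell_\infty$ loss $\le \epsilon^{1/p}$ forces $\ell_p$ loss $\le \epsilon$. Your plan instead runs $\ell_p$-ERM and controls the excess via Minkowski plus a first-order bound. That route, however, cannot deliver the claimed $O(f(\Pdim)\log(1/\epsilon)/p)$ size. From $\|h-y\|_p\le (\tau_p^*)^{1/p}+\epsilon'$ and the mean-value inequality you get excess at most $p\,\bigl((\tau_p^*)^{1/p}+\epsilon'\bigr)^{p-1}\epsilon'$; even granting $(\tau_p^*)^{1/p}+\epsilon'\le 1$ so the middle factor is at most $1$, this is $p\,\epsilon'$. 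With $\epsilon'=\Theta(\epsilon^{1/p})$ the excess is $\Theta(p\,\epsilon^{1/p})$, which for $p>1$ and small $\epsilon$ is far larger than $\epsilon$ (e.g.\ $p=2$, $\epsilon=10^{-2}$ gives excess about $0.2$). Forcing excess $\le\epsilon$ through this inequality requires $\epsilon'=O(\epsilon/p)$, and then the compression size becomes $O\bigl(f(\Pdim)(\log(1/\epsilon)+\log p)\bigr)$ rather than $O\bigl(f(\Pdim)\log(1/\epsilon)/p\bigr)$. So the ``delicate bookkeeping'' is not bookkeeping: the Minkowski-plus-first-order estimate is intrinsically too weak to recover the $1/p$ saving in the compression size, and you would have to fall back on the paper's direct $\ell_\infty$-at-scale-$\epsilon^{1/p}$ reduction to match the stated bound.
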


\begin{proof}
Consider $S = (x_1,y_1), \ldots, (x_n,y_n)$. Run ERM to compute an $\hat{c}$ that minimizes the following:
\[
\inf_{c \in  \mathcal{C}} \max_{1 \leq i \leq n} |c(x_i) - y_i|.
\]
Consider the dataset $(x_1, y'_1), \ldots, (x_n, y'_n)$, where we define $y'_i$ to be $\hat{c}(x_i)$. This is realizable by the class $\mathcal{C}_{\leq}$ from the proof of Theorem~\ref{theo:eps-approx-infty}, so we can apply the realizable compression scheme to get a compression size of $O(f(\Pdim) \log (1 / \epsilon))$.
To prove the result for general $\ell_p$ loss, we have as in the proof of Theorem\ref{theo:eps-approx-infty} that since the losses are in $[0,1]$, in order for the $\ell_p$ loss to be $\leq \epsilon$, it s sufficient for the $\ell_{\infty}$ loss to be $\leq \epsilon^{\frac{1}{p}}$. Thus, it s sufficient to get an $\epsilon^{\frac{1}{p}}$-approximate $\ell_{\infty}$ agnostic compression scheme, and the one that was just constructed will have size $O\lr{\frac{f(\Pdim) \log(\frac{1}{\epsilon})}{p}}$.
\end{proof}

It remains a major open problem to determine whether there exists an exact agnostic compression scheme for regression with the $\ell_1$ loss of size $O(\Pdim)$, while the best-known result is an $\epsilon$-approximate agnostic compression scheme of size $\frac{1}{\epsilon}2^{O\lr{\fat_{c\epsilon}}}$ for some constant $c>0$ (see \citet{attias2024agnostic}). 

\section{Compression for Adversarially Robust Classification Against Test-Time Attacks}\label{sec:robust}
In this section, we tackle the problem of compression in the adversarially robust classification setting, by reducing it to the binary classification problem.
In this setting, there is a perturbation function $\mathcal{U}: \mathcal{X} \rightarrow 2^{\mathcal{X}}$ that takes an input and outputs a perturbation set, satisfying $x \in \mathcal{U}(x)$ for all $x \in \mathcal{X}$. A loss is incurred if the input can be perturbed in such a way that the model predicts a label different from the true label. More specifically, for $c \in \mathcal{C}$, $x \in \mathcal{X}$, and $y \in \mathcal{Y}$, the loss function is $\ell_{\mathcal{U}}(c,x,y) = \mathbbm{1}[\exists z \in \mathcal{U}(x): c(z) \neq y]$ at testing time, i.e., there exists some perturbation $z \in \mathcal{U}(x)$ for which the classifier $c$ predicts a label different from $y$. Several prior works have explored the sample complexity of adversarially robust learning, with \citet{montasser2019vc,attias2022improved} focusing on binary classification and \citet{attias2023adversarially} extending this to real-valued classes.

To achieve compression in this adversarially robust setting, we focus on scenarios where the dataset is robustly realizable, meaning that the dataset can be perfectly labeled by a concept regardless of how the data is perturbed. We formally define this notion below.

\begin{definition}[Robust Realizability]
\label{def:robust-realizable}
Consider a perturbation function $\mathcal{U}: \mathcal{X} \rightarrow 2^{\mathcal{X}}$ with $x \in \mathcal{U}(x)$ for all $x \in \mathcal{X}$. Given a binary class $\mathcal{C}$, define $(x_1, y_1), \ldots, (x_n, y_n)$ to be robustly realizable if
\[
\inf_{c \in  \mathcal{C}} \frac{1}{n} \sum_{i=1}^n \sup_{z \in \mathcal{U}(x_i)} \mathbbm{1}[c(z) \neq y_i] = 0.
\]
\end{definition}
Given the notion of robust realizability, we now define an adversarially robust compression scheme that can compress the dataset while ensuring accuracy on all perturbed examples. For a compression scheme to be adversarially robust, it must select a small subset of the data and return a predictor that remains accurate on all data points, even when the inputs are adversarially perturbed. The goal is to maintain the predictor's accuracy while reducing the amount of stored data.
\begin{definition}[Adversarially Robust Compression Scheme]
\label{def:adv-compression}
Given a concept class $\mathcal{C}$ and perturbation function $\mathcal{U}$, we say that a sample compression scheme ($\kappa,\rho$) is adversarially robust if for any \\
$(x_1, y_1), \ldots, (x_n, y_n)$ that is robustly realizable, for all $i = 1, \ldots, n$ $\sup_{z \in \mathcal{U}(x_i)} |\rho(\kappa(S))(z) - y_i| = 0$.
\end{definition}
We prove the following theorem for adversarially robust compression.

\begin{theorem}[Reducing Adversarially Robust Compression to Binary Compression]
\label{theo:adv-robust-compression}
Let $\mathcal{U}:\mathcal{X}\rightarrow 2^\mathcal{X}$ be a perturbation function and let $M = \sup_{x \in \mathcal{X}} |\mathcal{U}(x)|$ be finite. Suppose any binary concept class $\mathcal{C}$ with VC dimension $\VC<\infty$ has a sample compression scheme of size $f(\VC)$. Then, $\mathcal{C}$ has an adversarially robust sample compression scheme of size $O(f(\VC) \log M)$.
\end{theorem}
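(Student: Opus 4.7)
My strategy is to reduce the robust problem to a standard (non-robust) binary compression problem by \emph{inflating} the sample over every allowed perturbation. Given a robustly realizable sequence $S=(x_1,y_1),\dots,(x_n,y_n)$, define the inflated multiset
\[
S^{\mathcal{U}} \;=\; \bigl\{(z, y_i)\,:\, i\in[n],\, z \in \mathcal{U}(x_i)\bigr\}.
\]
By robust realizability, there exists $c \in \mathcal{C}$ with $c(z) = y_i$ for every $i$ and every $z \in \mathcal{U}(x_i)$, so $S^{\mathcal{U}}$ is realizable by $\mathcal{C}$ in the ordinary sense (in particular, any two occurrences of the same perturbation $z$ carry the same label). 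Since $\VC(\mathcal{C}) = \VC$ is unchanged, the hypothesized binary compression scheme $(\kappa_b,\rho_b)$ applies to $S^{\mathcal{U}}$ and returns a compressed set of at most $f(\VC)$ pairs together with $O(f(\VC))$ extra bits.

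\textbf{Encoding.} Each element of $\kappa_b(S^{\mathcal{U}})$ has the form $(z, y_i)$ for some $i \in [n]$ and some $z \in \mathcal{U}(x_i)$. Fix an arbitrary enumeration of each set $\mathcal{U}(x)$ (which has at most $M$ elements). We encode $\kappa(S)$ by placing the corresponding original pair $(x_i,y_i)\in S$ into the compression subsequence (with multiplicity if several perturbations of the same $x_i$ are selected) and appending $\lceil \log_2 M \rceil$ bits to identify which element of $\mathcal{U}(x_i)$ plays the role of $z$. Combined with the $O(f(\VC))$ side bits already produced by $\kappa_b$, the total size is at most $f(\VC)\bigl(1+\lceil\log_2 M\rceil\bigr) + O(f(\VC)) = O(f(\VC)\log M)$, as required.

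\textbf{Reconstruction and correctness.} Given $\kappa(S)$, the reconstruction $\rho$ uses the stored $(x_i,y_i)$ pairs and perturbation indices to rebuild $\kappa_b(S^{\mathcal{U}})$ together with its bitstring, and then outputs $h := \rho_b(\kappa_b(S^{\mathcal{U}}))$. Because $S^{\mathcal{U}}$ is realizable and the binary scheme is sample-consistent, $h(z) = y_i$ for every $i$ and every $z \in \mathcal{U}(x_i)$. Consequently $\sup_{z \in \mathcal{U}(x_i)} \mathbbm{1}[h(z)\neq y_i] = 0$ for all $i$, which is precisely the robust consistency condition of Definition~\ref{def:adv-compression}.

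\textbf{Main obstacle.} The only real subtlety is a bookkeeping one: Definition~\ref{def:sample-compression} forces the compression subsequence to be drawn from $S$ itself, whereas the useful points identified by $\kappa_b$ live in the perturbation sets $\mathcal{U}(x_i)$ and need not appear in $S$. This is what necessitates storing $(x_i,y_i)$ as the ``pointer'' into $S$ and offloading the identity of the perturbation into the bitstring, which is exactly the source of the extra $\log M$ factor. Once that encoding trick is in place, the rest is a direct appeal to the realizability of $S^{\mathcal{U}}$ and the correctness guarantee of $(\kappa_b,\rho_b)$.
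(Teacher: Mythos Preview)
Your proposal is correct and follows essentially the same approach as the paper: inflate $S$ to $S^{\mathcal{U}}$ over all perturbations, apply the binary compression scheme to the (realizable) inflated set, and encode each selected perturbation by its originating pair $(x_i,y_i)\in S$ plus $O(\log M)$ index bits. If anything, your write-up is more careful than the paper's, explicitly noting why $S^{\mathcal{U}}$ is realizable and why the compression subsequence must be drawn from $S$ itself, which is precisely the origin of the $\log M$ overhead.
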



\begin{proof}
Let $S = (x_1,y_1), \ldots, (x_n,y_n)$ be a set that is robustly realizable by a class $\mathcal{C}$. Let $(\kappa_b,\rho_b)$ be a binary compression scheme for $\mathcal{C}$. We construct a compression scheme $(\kappa,\rho)$ as follows.
\textbf{Compression:} Given $S$, inflate it to create $S_{\mathcal{U}} = \{(z,y_i): i \in [n], z \in \mathcal{U}(x_i)\}$. Apply $\kappa_b$ to $S_{\mathcal{U}}$ to obtain a compression set of size $f(\VC)$. For each point $(z,y_i)$ in the compression set, we can encode $z$ using $O(\log M)$ bits by storing its index in $\mathcal{U}(x_i)$. Thus, the total compression size is $O(f(\VC) \log M)$.
\textbf{Reconstruction:} Apply $\rho_b$ to the compressed set to recover the concept. The recovered concept will be consistent with all points in $S_{\mathcal{U}}$, and thus robustly consistent with $S$.
\end{proof}
By assuming the existence of stable sample compression schemes for binary classification, we can derive
an adversarial robust compression scheme of size independent of $\lrabs{\mathcal{U}(x)}$. 

\begin{theorem}[Robust Compression Assuming Existence of Stable Binary Compression]\label{theo:robust-comprssion-stable} Let $\mathcal{U}:\mathcal{X}\rightarrow 2^\mathcal{X}$ be a perturbation function and let $M = \sup_{x \in \mathcal{X}} |\mathcal{U}(x)|$ be finite. Suppose any binary concept class $\mathcal{C}$ with VC dimension $\VC<\infty$ has a stable sample compression scheme of size $f(\VC)$. Then, $\mathcal{C}$ has an adversarially robust sample compression scheme of size $O(f(\VC))$.
\end{theorem}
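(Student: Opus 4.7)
The plan is to mirror the reduction used for stable multiclass compression (Theorem~\ref{theo:multiclass-stable}), adapting it to the robust setting. The key observation is that the $\log M$ overhead in Theorem~\ref{theo:adv-robust-compression} came entirely from the need to encode which particular perturbation $z \in \mathcal{U}(x_i)$ was selected by $\kappa_b$; stability will let us avoid this by storing only original indices and reconstructing the full perturbation neighborhood on the fly.

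First I would form the inflated set $S_{\mathcal{U}} = \{(z,y_i) : i \in [n],\; z \in \mathcal{U}(x_i)\}$ exactly as in the proof of Theorem~\ref{theo:adv-robust-compression}. Since $S$ is robustly realizable there exists $c \in \mathcal{C}$ with $c(z)=y_i$ for every $z \in \mathcal{U}(x_i)$, so $S_{\mathcal{U}}$ is a (finite) realizable binary-labeled sample for $\mathcal{C}$. I apply the stable binary compression scheme $(\kappa_b,\rho_b)$ to $S_{\mathcal{U}}$, yielding a compression of size at most $f(\VC)$. Define $I \subseteq [n]$ to be the set of indices $i$ such that some pair $(z,y_i)$ with $z \in \mathcal{U}(x_i)$ appears in $\kappa_b(S_{\mathcal{U}})$; the compression function $\kappa$ returns $\{(x_i,y_i) : i \in I\}$ together with the bitstring portion of $\kappa_b(S_{\mathcal{U}})$. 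This has size $O(f(\VC))$, independent of $M$.

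For reconstruction, I re-inflate the stored original points to form $T_{\mathcal{U}} = \{(z,y_i) : i \in I,\; z \in \mathcal{U}(x_i)\}$. By construction every perturbed example appearing in $\kappa_b(S_{\mathcal{U}})$ has its originating index placed in $I$, so $\kappa_b(S_{\mathcal{U}}) \subseteq T_{\mathcal{U}} \subseteq S_{\mathcal{U}}$. Stability of $\kappa_b$ (Definition~\ref{def:stable-scheme}) then forces $\kappa_b(T_{\mathcal{U}}) = \kappa_b(S_{\mathcal{U}})$, so applying $\kappa_b$ to $T_{\mathcal{U}}$ recovers exactly the original compression set (including bitstring). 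Feeding this into $\rho_b$ yields a predictor $h$ that is correct on all of $S_{\mathcal{U}}$, which by definition of $S_{\mathcal{U}}$ means $h(z)=y_i$ for every $i \in [n]$ and every $z \in \mathcal{U}(x_i)$, i.e.\ robust consistency with $S$.

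The main obstacle, as in the multiclass stable case, is arranging the nesting $\kappa_b(S_{\mathcal{U}}) \subseteq T_{\mathcal{U}} \subseteq S_{\mathcal{U}}$ so that stability applies; once this chain is established, the bound $O(f(\VC))$ drops out immediately and no $\log M$ factor is incurred because only the indices $i \in I$ are stored rather than perturbations $z$. A minor bookkeeping point is that the bitstring produced by $\kappa_b$ must be stored alongside the $(x_i,y_i)$ pairs so that $\rho_b$ can be invoked after re-inflation; this contributes only an additive $f(\VC)$ to the total size and so does not affect the asymptotic bound.
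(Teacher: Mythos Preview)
Your proposal is correct and follows essentially the same approach as the paper: inflate to $S_{\mathcal{U}}$, apply the stable binary scheme, store only the original pairs $(x_i,y_i)$ whose perturbations were selected, and use the stability chain $\kappa_b(S_{\mathcal{U}}) \subseteq T_{\mathcal{U}} \subseteq S_{\mathcal{U}}$ to recover $\kappa_b(S_{\mathcal{U}})$ at reconstruction time. The only minor difference is that you explicitly carry along the bitstring of $\kappa_b(S_{\mathcal{U}})$, which is actually unnecessary since re-applying $\kappa_b$ to $T_{\mathcal{U}}$ reproduces it by stability; this redundancy does not affect correctness or the asymptotic bound.
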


\begin{proof}
Let $S = (x_1,y_1), \ldots, (x_n,y_n)$ be a set that is robustly realizable by a class $\mathcal{C}$. Let $(\kappa_b,\rho_b)$ be a stable binary compression scheme for $\mathcal{C}$. We construct a compression scheme $(\kappa,\rho)$ as follows.
\textbf{Compression:} Given $S$, inflate it to create $S_{\mathcal{U}} = \{(z,y_i): i \in [n], z \in \mathcal{U}(x_i)\}$. Apply $\kappa_b$ to $S_{\mathcal{U}}$ to obtain a compression set. By the properties of $\kappa_b$, there exists a set $T \subseteq S$ of size $f(\VC)$ such that all points in $\kappa_b(S_{\mathcal{U}})$ come from inflating $T$. Since the compression scheme is stable, we can output $T$ as our compression, giving a size of $f(\VC)$.
\textbf{Reconstruction:} Given $T$, inflate it to $T_{\mathcal{U}}$ and apply $\rho_b(\kappa_b(T))$ to reconstruct the concept. Since the binary compression scheme is stable, this will give the same result as applying $\rho_b$ to $\kappa_b(S_{\mathcal{U}})$, ensuring robust consistency with $S$.
\end{proof}

%
\begin{openproblem}
    Let $\mathcal{C} \subseteq \{0,1\}^{\mathcal{X}}$ be a binary concept class with VC dimension $\VC<\infty$, and let  $\mathcal{U}:\mathcal{X}\rightarrow 2^{\mathcal{X}}$ be a perturbation function. Suppose $\mathcal{X}$ has a sample compression scheme of size $f(\VC)$. Does there exist an adversarially robust compression scheme of size $O(f(\VC))$ for $\mathcal{C}$?
\end{openproblem}

\subsection{Negative Result for Adversarially Robust Compression} 

While bounded-size sample compression schemes are known to exist for binary classification problems with classes of finite VC dimension, we present a negative result for the adversarially robust setting. Specifically, we show that there exists a robustly learnable concept class that does not admit any bounded-size sample compression scheme. A similar phenomenon has been observed in multiclass classification \citep{pabbaraju2024multiclass} and list learning \citep{hanneke2024list}.
The proof is in Appendix~\ref{app:robust-compression}.

\begin{theorem}[Negative Result for Adversarially Robust Compression]
    \label{theo:adv-negative} There exists a concept class which is robustly learnable, but has no bounded-size adversarially robust compression scheme.
\end{theorem}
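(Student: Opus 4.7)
My approach is to exhibit an instance $(\mathcal{X}, \mathcal{C}, \mathcal{U})$ with unbounded perturbation sets (so $\sup_{x} |\mathcal{U}(x)| = \infty$) in which $\mathcal{C}$ is robustly PAC learnable yet no bounded-size adversarially robust sample compression scheme exists. That $|\mathcal{U}(x)|$ must be unbounded is dictated by Theorem~\ref{theo:adv-robust-compression}, which already delivers robust compression of size $O(f(\VC) \log M)$ whenever $M = \sup_x |\mathcal{U}(x)| < \infty$; hence the obstruction must come from the perturbation structure itself, playing the role of an infinite effective label alphabet and mirroring the infinite-label multiclass compression lower bound of \citet{pabbaraju2024multiclass} and the list sample compression lower bound of \citet{hanneke2024list}.

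\textbf{Construction and learnability.} I would take $\mathcal{X} = \mathbb{N} \times \mathbb{N}$ with $\mathcal{U}((i,j)) = \{i\} \times \mathbb{N}$, and associate to each hypothesis $c_0$ in a countable multiclass hard family $\mathcal{C}_0 \subseteq \mathbb{N}^\mathbb{N}$ of bounded graph dimension (the Pabbaraju et al.\ class) a binary concept $c_{c_0}$ whose robust predicate on $((i,j), y)$ exactly decodes a single natural number $c_0(i) \in \mathbb{N}$. A workable encoding is $c_{c_0}((i,j)) = \mathbbm{1}[c_0(i) \neq j]$ composed with a ``pointer'' coordinate so that robust realizability of a sample point reveals one value of $c_0$. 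Robust PAC learnability of $\mathcal{C}$ then follows by an improper reduction in the spirit of \citet{montasser2019vc}: a robust sample reveals the values $c_0(i_\ell)$, any PAC learner for $\mathcal{C}_0$ (whose sample complexity is finite because $\graphdim(\mathcal{C}_0) < \infty$) recovers $c_0$ up to $\eps$-error, and lifting through the embedding yields a robust learner with the same sample complexity.

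\textbf{Compression lower bound.} Suppose toward contradiction that a size-$k$ robust compression scheme $(\kappa,\rho)$ exists. For each $n$ I would construct, using the richness of $\mathcal{C}_0$, a family $\mathcal{S}_n$ of robustly realizable samples of common size $n$, over an effective subdomain of cardinality $N = N(n)$, such that any two distinct $S, S' \in \mathcal{S}_n$ demand robust hypotheses that disagree on some point in $S \cup S'$. The number of possible outputs of $\kappa$ on $\mathcal{S}_n$ is at most $\binom{N}{k} 2^k$. Choosing $n$ large enough that $|\mathcal{S}_n| > \binom{N}{k} 2^k$ (which the multiclass hardness of \citet{pabbaraju2024multiclass} guarantees is possible), pigeonhole forces two samples $S,S' \in \mathcal{S}_n$ to share the same compressed object, whence $\rho(\kappa(S)) = \rho(\kappa(S'))$ cannot robustly label both, a contradiction.

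\textbf{Main obstacle.} The delicate part is arranging that robust learnability (which constrains the complexity of $\mathcal{C}$) and compression hardness (which requires many inequivalent robust labelings on large samples) coexist. This tension is exactly what the infinite-label multiclass construction of \citet{pabbaraju2024multiclass} resolves by leveraging the infinite label alphabet to keep graph dimension finite while forcing compression size to grow; the central technical task is to transport that combinatorial counting argument into the robust binary setting, verifying that the binary embedding through unbounded perturbation sets preserves both the learnability guarantee of Step 2 and the incompressibility of Step 3.
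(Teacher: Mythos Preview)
Your plan diverges from the paper's proof in a fundamental way, and there are two concrete gaps.

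\textbf{The claim that $M$ ``must'' be unbounded is a misreading of Theorem~\ref{theo:adv-robust-compression}.} That theorem only applies when $\VC(\mathcal{C}) < \infty$; if the witnessing class has infinite VC dimension, bounded perturbation sets are perfectly compatible with unbounded robust compression. In fact the paper's construction uses perturbation sets of size \emph{two}: each $x$ has a twin $x'$, $\mathcal{U}(x)=\mathcal{U}(x')=\{x,x'\}$. The source of hardness is not a large $M$ but rather the partial concept class $\mathcal{C}_{part}$ of \citet{alon2022theory}, which has VC dimension $1$ yet no bounded-size compression scheme. The paper lifts $\mathcal{C}_{part}$ to a total class $\mathcal{C}$ by setting $g_c(x)=g_c(x')=c(x)$ on $\supp(c)$ and $g_c(x)=0$, $g_c(x')=1$ off the support. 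The point is that $((x,y))$ is robustly realizable by $g_c$ exactly when $x\in\supp(c)$ and $c(x)=y$; hence a robust compression scheme for $\mathcal{C}$ is immediately a compression scheme for $\mathcal{C}_{part}$, which is impossible. Robust learnability is then shown directly via the one-inclusion graph, using $\VC(\mathcal{C}_{part})=1$ to argue the graph is acyclic and can be oriented with out-degree $1$.

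\textbf{Your encoding does not produce any robustly realizable points.} With $\mathcal{U}((i,j))=\{i\}\times\mathbb{N}$ and $c_{c_0}((i,j))=\mathbbm{1}[c_0(i)\neq j]$, a pair $((i,j),y)$ is robustly realizable by $c_{c_0}$ only if $c_{c_0}((i,j'))=y$ for \emph{every} $j'\in\mathbb{N}$; but this function equals $0$ at $j'=c_0(i)$ and $1$ elsewhere, so it is never constant. Thus no sample is robustly realizable, and both the learnability and the lower-bound arguments become vacuous. The ``pointer coordinate'' remark does not resolve this: any faithful encoding of a multiclass label through robust binary consistency over an infinite perturbation set needs the hypothesis to be \emph{constant} on that set, which loses the label information you intended to transmit. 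Even if you repair the encoding, you would also need to verify that the \citet{pabbaraju2024multiclass} class has finite graph dimension (you assert this, but their hardness result is stated for DS-learnable classes, and finite DS dimension does not imply finite graph dimension in the infinite-label regime).

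In short, the paper's route through partial concept classes and size-$2$ perturbations sidesteps both obstacles: the twin trick makes ``robustly realizable'' coincide with ``in the support of the partial concept,'' so the Alon et al.\ lower bound transfers directly, and learnability follows from a clean one-inclusion-graph argument rather than a black-box reduction.
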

 \textbf{Proof sketch}
 To prove this theorem, we consider a partial concept class $\mathcal{C}_{part}$, which has VC dimension 1 but no bounded-size compression scheme, and constructed in Theorem 6 of \citet{alon2022theory} (summarized in Lemma~\ref{lem:partial-unbounded}).
 Let $\mathcal{C}_{part}$ have domain $\mathcal{X}$, and let each $x \in \mathcal{X}$ have a unique twin $x'$ that lies outside of $\mathcal{X}$, Define $\mathcal{X}' = \{x': x \in \mathcal{X}\}$, and set $\widetilde{\mathcal{X}}$ as $\mathcal{X} \cup \mathcal{X}'$.
 We now define a new class $\mathcal{C} \subseteq \{0,1\}^{\widetilde{\mathcal{X}}}$, where $\mathcal{C} = \lrset{g_c: c \in \mathcal{C}_{part}}$ and for each $x \in \mathcal{X}$, if $x \in \supp(\mathcal{C}_{part})$, $g_c(x) = g_c(x') = c(x)$, and otherwise, $g_c(x) = 0$ and $g_c(x') = 1$. We define the perturbation function to be $\mathcal{U}(x) = \mathcal{U}(x') = \{x,x'\}$ for all $x \in \mathcal{X}$. 
 One can show that if $\mathcal{C}$ has a bounded-size robust compression scheme with respect to $\mathcal{U}$, this will imply that $\mathcal{C}_{part}$ has a bounded-size compression scheme, which is not possible by Lemma~\ref{lem:partial-unbounded}.
To show that $\mathcal{C}$ is robustly learnable with respect to $\mathcal{U}$, we consider the one-inclusion graph (Definition \ref{def:oig}) and show that it has no cycles. This allows the edges to be oriented to have a maximum out-degree $1$, and it follows that the class is robustly learnable.
\section*{Acknowledgments}
Idan Attias is supported by the National Science Foundation under Grant ECCS-2217023, through the Institute for Data, Econometrics, Algorithms, and Learning (IDEAL).

\bibliography{refs}

\newpage
\appendix

\section{Discussion about Infinitized Sample Compression Schemes}\label{app:infinitized-compression}

For binary concept classes with VC dimension $\VC$, \citet{moran2016sample} demonstrated the existence of a constant-size sample compression scheme of size $2^{O(\VC)}$. However, we show that when the number of samples is infinite, such a result is no longer possible. Additionally, we construct an infinitized compression scheme for classes with finite Littlestone dimension. However, we also show that this is not a necessary condition—there are cases where the Littlestone dimension is infinite, yet the class still admits an infinitized compression scheme.

\begin{definition}[Littlestone Dimension \citep{littlestone1988learning}]
A binary tree is \underline{perfect} if all internal nodes have exactly two children, and all leaf nodes are at the same level. It is said to have depth $d$ if there are $2^d-1$ vertices.
Given a class $\mathcal{C}$, a Littlestone tree is a perfect binary tree, where each vertex is labeled with an element from $\mathcal{X}$. For each path from the root to a leaf, there exists a concept $c \in \mathcal{C}$ such that for each internal node at depth $i$ with label $x_i$, $c(x_i) = b_i$, where $b_i \in \{0,1\}$ indicates whether the path follows the left ($b_i = 0$) or right ($b_i = 1$) child.
The \emph{Littlestone dimension} of $\mathcal{C}$ (denoted as $\LD(\mathcal{C})$) is the maximum depth of a Littlestone tree of $\mathcal{C}$. If no largest depth exists, set $\LD(\mathcal{C}) = \infty$.
\end{definition}
Note that an infinite Littlestone tree implies an infinite Littlestone dimension, but the converse is not true. A class can have an infinite Littlestone dimension (meaning that for any depth, there exists a Littlestone tree) without having an infinite Littlestone tree.

For a concept class $\mathcal{C}$, let $\mathcal{C}_{S}$ be the version space with respect to $S$, i.e., $\{c \in \mathcal{C}: c(x) = y \forall (x,y) \\ \in S\}$. When the concept class $\mathcal{C}$ is clear from the context, we use $V_S$ to denote it, and let $V_{S,x,y} := V_{S \cup \{(x,y)\}}$. Additionally, we introduce the following oracle $M(f,S)$. For a function $f: \mathcal{X} \rightarrow \mathcal{Y}$ and a (possibly infinite) sample set $S \subseteq \mathcal{X} \times \mathcal{Y}$, the oracle $M(f,S)$ returns true if there exists an $(x,y) \in S$ such that $f(x) \neq y$, and false otherwise. This oracle allows us to determine whether a predictor makes any mistakes on the infinite sample set $S$.

\begin{theorem}[Infinitized Compression Scheme for Littlestone Classes]
\label{theo:ext-scs-littlestone}
Let a binary concept \\ class $\mathcal{C}\subseteq \lrset{0,1}^\mathcal{X}$ with a finite Littlestone dimension $\LD(\mathcal{C}) < \infty$. Then, assuming access to the oracle $M(\cdot,\cdot)$, there exists an infinitized sample compression scheme (Algorithm~\ref{alg:infinite-soa}) of size $O(\LD(\mathcal{C}))$ for $\mathcal{C}$.
\end{theorem}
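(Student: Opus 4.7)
I would use the Standard Optimal Algorithm (SOA) of Littlestone as the reconstruction map and build the compression set greedily by querying the oracle. For any finite labeled set $T$ with $V_T \ne \emptyset$, define $\rho(T) = h_T$ by
\begin{equation*}
h_T(x) \;=\; \argmax_{y \in \{0,1\}} \LD\!\left(V_{T, x, y}\right),
\end{equation*}
breaking ties in favor of $y = 0$. The compression function $\kappa(S)$ proceeds iteratively: set $T_0 = \emptyset$, and at each step $i$ query $M(h_{T_i}, S)$; if the oracle returns false, output $T_i$, otherwise scan the sequence $S$ in order to locate the first $(x, y) \in S$ with $h_{T_i}(x) \ne y$ and set $T_{i+1} = T_i \cup \{(x, y)\}$.

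Consistency on a realizable infinite $S$ is immediate from the stopping rule: at termination, $M(h_T, S) = \text{false}$ certifies $h_T(x) = y$ for every $(x,y) \in S$, which is the defining guarantee of an infinitized scheme. The size bound $|T| \le \LD(\mathcal{C})$ is the heart of the argument, and it rests on the claim that each iteration strictly decreases $\LD(V_{T_i})$.

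To justify the decrement, I would invoke the classical ``shattering at the root'' argument: for any $T$ with $V_T \ne \emptyset$ and any $x \in \mathcal{X}$, at most one of $\LD(V_{T,x,0})$ and $\LD(V_{T,x,1})$ can equal $\LD(V_T)$, since otherwise one could glue Littlestone trees of depth $\LD(V_T)$ in $V_{T,x,0}$ and $V_{T,x,1}$ under a root labeled $x$ to realize a Littlestone tree of depth $\LD(V_T)+1$ in $V_T$, a contradiction. Because SOA picks the $\argmax$ branch, any mistake $(x,y)$ lies on the minimizing branch, hence $\LD(V_{T_{i+1}}) \le \LD(V_{T_i}) - 1$. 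Since $S$ is realizable, $V_{T_i}$ stays nonempty throughout, so the iteration must halt within $\LD(\mathcal{C})$ rounds. As every element of the output $T$ is already a labeled pair from $S$, no auxiliary bitstring is needed and the total size is at most $\LD(\mathcal{C}) = O(\LD(\mathcal{C}))$.

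The main subtlety I anticipate is that the oracle $M$ returns only a Boolean rather than a witness, so the ``scan $S$ for a mistake'' step must be justified carefully: this is acceptable because the theorem asserts only the existence of an infinitized scheme, and given the oracle's affirmative answer some prefix of the sequence $S$ contains a mistake, so a deterministic rule (first index in sequence order) picks one out. With that in place, everything else reduces to the standard mistake-bound analysis of SOA, and the scheme $(\kappa,\rho)$ above realizes the claimed size $O(\LD(\mathcal{C}))$.
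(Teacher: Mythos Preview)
Your proposal is correct and essentially identical to the paper's proof: both build the compression set greedily via SOA, use the oracle $M$ to detect a mistake at each round, and bound the number of rounds by the standard ``glue two Littlestone trees at a new root'' argument showing $\LD(V_{T_{i+1}}) < \LD(V_{T_i})$. The only cosmetic differences are the tie-breaking convention and your more explicit handling of how a witnessing mistake is extracted from the sequence $S$ given only the Boolean oracle output; the paper leaves this implicit.
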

The idea for the sample compression scheme is closely related to Littlestone's \textit{Standard Optimal Algorithm} (SOA) \citep{littlestone1988learning}.

\begin{proof}
We construct the compression $(\kappa,\rho)$ as follows. 
Suppose we are given a set $S$ of $(x,y)$ pairs (possibly infinite) that is realizable by $\mathcal{C}$. We construct $\kappa(S)$ via the following process in Algorithm~\ref{alg:infinite-soa}.
\begin{algorithm}
    \caption{Infinitized Compression SOA}
    \label{alg:infinite-soa}
    \textbf{Input:} Concept class $\mathcal{C}\subseteq \lrset{0,1}^\mathcal{X}$, $S = \lrset{\lr{x_i,y_i}:i\in \mathcal{I}}$ for some (possibly infinite) index set $\mathcal{I}$.
    \\
    \textbf{Initialize:} 
            \begin{itemize}
                \item Initial compression set $T_0 \gets \varnothing$.
                \item Let $\rho: 2^\mathcal{X} \times \mathcal{X} \rightarrow \{0,1\}$, define $\rho(T)(x)$ as follows:
                \begin{itemize}
                    \item If $(x,y) \in T$ for some $y$, then predict $y$.
                    \item If there exists a fixed $y$ such that $c(x) = y$ for all $c \in \mathcal{C}_T$, predict $y$.
                    \item Predict according to $\argmax_{y \in \{0,1\}} \LD(V_{T,x,y})$, and predict $1$ if there is a tie.
                \end{itemize}
            \end{itemize}
    For $t = 0, 1, \ldots$:
        \begin{enumerate}
            \item{If the predictor $\rho(T_t)$ is wrong for any $(x,y) \in S$ (determined using the oracle query $M(\rho(T_t),S)$), then let $T_{t+1} = T_t \cup \{(x,y)\}$.
            }
            \item{Otherwise ($\rho(T_t)$ does not make any mistake on $S$) return the compression $\kappa(S) = T_{t}$.}
        \end{enumerate}        
\end{algorithm}
To prove the compression size, we claim the algorithm will take $\LD(\mathcal{C})$ iterations. Whenever a mistake is made for some $(x,y)$ at time $t$, we must have that $\LD(V_{T_{t+1}}) < \LD(V_{T_{t}})$, since otherwise, $\LD(V_{T_t,x,0}) = \LD(V_{T_t,x,1}) = \LD(V_{T_t}) = k$ for some $k$, so we can shatter $V_{T_t}$ using $x$ at the root and the Littlestone trees of depth $k$ for each labeling of $x$, implying a larger Littlestone dimension for $V_{T_t}$. Thus, the algorithm takes at most $\LD(\mathcal{C})$ iterations and we have an infinitized compression scheme of size $O(\LD(\mathcal{C}))$.
\end{proof}

While finite Littlestone Dimension implies an infinitized compression, the opposite direction does not hold -- consider the following example:

\begin{example}[A Class with Infinite Littlestone Dimension and Infinitized Compression Scheme]
\label{ex:infinite-dimension}
    Consider the thresholds on the natural numbers. Here $\mathcal{X} = \mathbb{N}$, and $\mathcal{C} \subseteq \lrset{0,1}^{\mathcal{X}}$ where $\mathcal{C} = \lrset{c_t: t \in \mathbb{N}}$ and $c_t(x) = \mathbbm{1}[x \leq t]$.
    This class has an infinite Littlestone dimension. It also has a sample compression scheme of size $2$ -- pick the rightmost point with a label equal to $0$ and the leftmost point with a label equal to $1$.
\end{example}
Additionally, there is a class of VC dimension $1$ that has no infinitized compression scheme, namely the thresholds over the real numbers.

\begin{example}[VC Class with No Infinitized Compression Scheme]
\label{theo:threshold-negative}
Consider the class of thresholds, $\mathcal{C} = \{c_t: t \in \mathbb{R}\}$ where $c_t(x) = \mathbbm{1}[x \leq t]$. There is no infinitized compression scheme for this class.

We show there is no infinitized compression scheme for $\mathcal{C}$. We can assume for now that $\mathcal{C} = \{c_t: t \in \mathbb{R} \setminus \mathbb{Q}\}$. Let $S = \mathbb{Q}$. Assume there is an compression scheme ($\kappa,\rho$) for the class where $|\kappa| \leq k$.
Notice that for any $t_1, t_2$ irrational, there exists a rational $z$ such that $t_1 < z < t_2$. Thus, $\kappa(S)$ must be different for every $c_t \in \mathcal{C}$. Thus, there must be a one-to-one function from $\mathbb{R} \setminus \mathbb{Q}$ to $\cup_{i \leq k}(\{0,1\} \times S)^i$. However, the former set is uncountable, while the latter set is countable, so this is impossible.
\end{example}

Notice that the class $\mathcal{C}$ from Example~\ref{theo:threshold-negative} has an infinite Littlestone tree, which naturally leads one to ask whether classes with an infinite Littlestone tree do not have infinitized compression. Note that the class in Example~\ref{ex:infinite-dimension} has an infinite Littlestone dimension but not an infinite Littlestone tree. 
Next, we give an example of a class with an infinite Littlestone tree and a bounded-size compression scheme.

\begin{example}[A Class with Infinite Littlestone Tree and Infinitized Compression Scheme]
We \\ construct a concept class $\mathcal{C}$ that has an infinite Littlestone tree (and thus infinite Littlestone dimension) yet admits a simple infinitized compression scheme of size 1.

Consider an infinite domain $\mathcal{X}$ arranged as an infinite perfect binary tree. For each node $u$ at a finite depth in the tree, let $P_u = x_1,x_2,\ldots,x_k$ be the path from the root to $u$. Define the concept $c_u$ as follows:
\[
c_u(x) =
\begin{cases}
1 & x \text{ has a right child in $P_u$} \\
0 & \text{ otherwise}
\end{cases}
\]
If $u$ is a left child, $c_u$ equals $c_v$ where $v$ is $u$'s parent, so we only need concepts for right children. Let $\mathcal{C} = \{c_u: u \text{ is the root or is a right child at finite depth}\}$. This class has an infinite Littlestone tree since we can construct a prediction tree of arbitrary depth by following paths down the binary tree.
However, $\mathcal{C}$ admits a simple compression scheme of size 1: Given a sample $S$, if no points are labeled 1, output $\varnothing$. Otherwise, output the deepest point $u$ with label 1, and let the reconstruction predict according to $c_u$.
\end{example}
It is an interesting open problem to characterize when infinitized compression is possible. 

\section{A Sample Compression Scheme for Classes with Graph Dimension $1$}\label{app:compression-graphdim-1}
\begin{lemma}[Sample Compression Scheme for Graph Dimension $1$]
\label{lem:graph-dim-1}
Any concept class $\mathcal{C}$ with graph dimension $1$ admits a sample compression scheme of size $1$.
\end{lemma}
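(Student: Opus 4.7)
The plan is to adapt the technique of Ben-David and Litman for VC dimension $1$ binary classes to the multiclass graph dimension $1$ setting. First I would fix arbitrary well-orderings $\prec_{\mathcal{X}}$ on $\mathcal{X}$ and $\prec_{\mathcal{Y}}$ on $\mathcal{Y}$ (via the well-ordering theorem), and let $\prec$ denote the induced lexicographic order on $\mathcal{Y}^{\mathcal{X}}$ that compares two functions at the $\prec_{\mathcal{X}}$-smallest point where they disagree; this is well-defined on any finite projection of $\mathcal{C}$, which is all a compression scheme needs. I would then define the reconstruction function by $\rho(x,y) := \min_{\prec}\{c \in \mathcal{C} : c(x) = y\}$, the minimum being taken on the finite projection determined by the compression point.

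For the compression, given a realizable sample $S = (x_1,y_1),\ldots,(x_n,y_n)$, let $V_S = \{c \in \mathcal{C} : c(x_i) = y_i \text{ for all } i\}$, set $c^* := \min_{\prec} V_S$, and output $\kappa(S) = (x_{i^*}, y_{i^*})$ for any $i^* \in [n]$ with $\rho(x_{i^*},y_{i^*}) = c^*$ on $\{x_1,\ldots,x_n\}$. By construction $\rho(\kappa(S))$ would then be consistent with $S$, so the scheme has size $1$ provided such an $i^*$ always exists. Establishing the existence of $i^*$ is the central claim and the main obstacle.

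To prove existence, I would argue by contradiction: suppose that for every $i \in [n]$ there is $c_i \in \mathcal{C}$ with $c_i(x_i) = y_i$ and $c_i \prec c^*$. Since $c^*$ is $\prec$-minimal in $V_S$, each $c_i \notin V_S$, so each $c_i$ has a witness $x_{j(i)}$ with $c_i(x_{j(i)}) \neq y_{j(i)}$. This yields a directed graph on $[n]$ of positive out-degree everywhere, hence a directed cycle. Tracking the $\prec_{\mathcal{X}}$-smallest disagreement point $d_i$ between $c_i$ and $c^*$ along the cycle, the plan is to extract two indices $i_1, i_2$ such that the labeled pair $(x_{i_1},y_{i_1}), (x_{i_2},y_{i_2})$ is G-shattered in $\mathcal{C}$: $c^*, c_{i_1}, c_{i_2}$ realize the patterns $(1,1), (1,0), (0,1)$ respectively, and a fourth concept produced by $\prec$-minimization on an augmented version space realizes $(0,0)$, contradicting $\graphdim(\mathcal{C}) = 1$.

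The main difficulty I anticipate is producing this fourth concept. Unlike the binary VC dim $1$ setting, where having three patterns on two points already contradicts shattering, the multiclass graph dim $1$ constraint is only violated once all four binary patterns are realized, so the $(0,0)$ witness must be constructed explicitly rather than ruled out by counting. My plan for it is to iterate the $\prec$-minimization along the cycle, exploiting how the chain of disagreement points $d_i$ shifts as one walks around the cycle, so as to force the existence of a concept simultaneously violating $y_{i_1}$ at $x_{i_1}$ and $y_{i_2}$ at $x_{i_2}$; the correct choice of $i_1,i_2$ is the one for which the $\prec$-extremal walk closes up, making this last minimization nonempty.
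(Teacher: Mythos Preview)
Your approach diverges from the paper's, and it has genuine gaps at the point you yourself flag.

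The reconstruction $\rho(x,y):=\min_{\prec}\{c:c(x)=y\}$ is not well-defined as written: the lexicographic order induced by well-orders of $\mathcal{X},\mathcal{Y}$ is total on $\mathcal{Y}^{\mathcal{X}}$ but not a well-order when $\mathcal{X}$ is infinite, so the minimum need not exist. Saying ``finite projections suffice'' does not help, since $\rho$ sees only $(x,y)$ and must predict at an arbitrary $z$ without knowing the sample; if you instead define $\rho(x,y)(z)$ pointwise on the projection to $\{x,z\}$ you no longer get a single concept, and the identity $\rho(x_{i^*},y_{i^*})=c^{*}$ on $\{x_1,\dots,x_n\}$ becomes a separate claim to prove. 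More seriously, the contradiction sketch does not go through. From out-degree $\ge 1$ you get some directed cycle, but to realize both $(1,0)$ and $(0,1)$ at the pair $(x_{i_1},x_{i_2})$ via $c_{i_1},c_{i_2}$ you need $j(i_1)=i_2$ \emph{and} $j(i_2)=i_1$, i.e.\ a $2$-cycle, which the functional digraph need not contain. And for the $(0,0)$ witness --- which, as you correctly note, is the entire difficulty separating this from the binary VC$=1$ case --- ``iterate the $\prec$-minimization along the cycle'' is not an argument: nothing in your setup forces any concept to be simultaneously wrong at two prescribed sample points, and graph dimension $1$ gives no direct handle on producing such a concept.

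The paper proceeds quite differently. It first binarizes via $g_c(x)=\mathbbm{1}[c(x)\neq c_0(x)]$ to get a VC $\le 1$ class, uses Ben-David's tree ordering on that binary class to locate a ``deepest'' candidate compression point, and then --- precisely because multiclass ambiguities can survive above it --- runs an iterative refinement (Algorithm~\ref{alg:graph-dim-1}) that walks up the linear chain of ambiguous points. Termination and correctness rest on a bipartite-graph argument (Lemma~\ref{lem:compression-swap}): on any two points $z_t,z_{t+1}$, the graph with edges $\{(z_t,i)\text{--}(z_{t+1},j):\exists c,\,c(z_t)=i,\,c(z_{t+1})=j\}$ is a forest in which any two disjoint edges must both be incident to leaves, and this forces the version space $\{c':c'(z_t)=c(z_t)\}$ to strictly shrink at each refinement step. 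That leaf/forest structural fact is doing exactly the work your missing $(0,0)$ witness would have to do, and it is not visible from the well-ordering viewpoint you adopt.
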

Before proving Lemma~\ref{lem:graph-dim-1}, we summarize some relevant results from \cite{ben20152}.

\begin{definition}
    \label{def:tree-ordering}
    A partial ordering $\leq$ over a set $\mathcal{X}$ is called a ``tree ordering" whenever for all $x \in \mathcal{X}$, $I_x = \{y: x \leq y\}$ is a linear ordering. Additionally, given a totally ordered set under $\leq$, define the \emph{deepest} element to be the one that is less than or equal to all the others under the ordering $\leq$.
\end{definition}
The following lemma follows from Lemma 4 and Theorem 5 from \cite{ben20152}.

\begin{lemma}[Tree Orderings for Classes with VC Dimension $1$ \citep{ben20152}]
    \label{lem:vc-1-tree-ordering}
     Consider a binary class $\mathcal{C}$ on $\mathcal{X}$ with $\VC(\mathcal{C}) \leq 1$. Pick any $c_0 \in \mathcal{C}$.
     Define a partial ordering $\leq$ on $\mathcal{X}$ as follows: for $x, y \in \mathcal{X}$, let $x \leq y$ if, for every $c \in \mathcal{C}$, $c(x) \neq c_0(x)$ implies $c(y) \neq c_0(y)$. It holds that $\leq$ is a tree ordering. Furthermore, for any $c \in  \mathcal{C}$, $\{x: c(x) \neq c_0(x)\}$ is a linear ordering.    
\end{lemma}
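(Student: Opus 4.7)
The plan is to unfold the three claims (reflexivity/transitivity of $\leq$, linear ordering of each disagreement set, and linear ordering of each $I_x$) and knock them out using a single tool: the assumption $\VC(\mathcal{C}) \leq 1$, which forbids any pair in $\mathcal{X}$ from realizing all four labelings under $\mathcal{C}$. For brevity write $D_c = \{x \in \mathcal{X} : c(x) \neq c_0(x)\}$ for each $c \in \mathcal{C}$; the definition then reads $x \leq y$ iff $x \in D_c \Rightarrow y \in D_c$ for every $c \in \mathcal{C}$, making reflexivity and transitivity of $\leq$ immediate.

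For the ``furthermore'' part I fix $c \in \mathcal{C}$ and aim to show $D_c$ is a chain. Suppose $u, v \in D_c$ are incomparable; then by definition there exist witnesses $c_1, c_2 \in \mathcal{C}$ with $u \in D_{c_1}$, $v \notin D_{c_1}$, and $v \in D_{c_2}$, $u \notin D_{c_2}$. The four concepts $c_0, c_1, c_2, c$ then exhibit disagreement patterns on $\{u,v\}$ equal to $(0,0), (1,0), (0,1), (1,1)$ respectively; since on each coordinate the map $y \mapsto \mathbbm{1}[y \neq c_0(\cdot)]$ is a bijection, this translates directly into $\{u,v\}$ being shattered by $\mathcal{C}$, contradicting $\VC(\mathcal{C}) \leq 1$.

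For the tree-ordering property itself, the strategy is the same but requires one extra step. Suppose $y_1, y_2 \in I_x$ are incomparable and pick witnesses $c_1, c_2 \in \mathcal{C}$ with $y_1 \in D_{c_1}$, $y_2 \notin D_{c_1}$ and $y_2 \in D_{c_2}$, $y_1 \notin D_{c_2}$. The contrapositive of $x \leq y_2$ applied to $y_2 \notin D_{c_1}$ forces $x \notin D_{c_1}$, and symmetrically $x \notin D_{c_2}$. To complete the shattering argument I need a fourth concept $c^* \in \mathcal{C}$ with $\{y_1, y_2\} \subseteq D_{c^*}$: once such a $c^*$ is in hand, $c_0, c_1, c_2, c^*$ realize all four patterns on $\{y_1, y_2\}$ and contradict $\VC(\mathcal{C}) \leq 1$. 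Whenever some $c \in \mathcal{C}$ has $x \in D_c$, the hypotheses $x \leq y_1$ and $x \leq y_2$ automatically force $y_1, y_2 \in D_c$, so $c$ itself plays the role of $c^*$.

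\textbf{The main obstacle} is the remaining degenerate case in which no $c \in \mathcal{C}$ disagrees with $c_0$ at $x$: then $x \leq y$ holds vacuously for every $y$, so $I_x = \mathcal{X}$ and no $c^*$ can be produced by this construction. I plan to handle this in the same way as \citet{ben20152}, by noting that all such ``trivial'' points are mutually $\leq$-equivalent and can be collapsed into a single class sitting at the root of the tree: restricting the argument to the support $\mathcal{X}_0 = \bigcup_{c \in \mathcal{C}} D_c$ eliminates the degeneracy, and the lemma then extends to all of $\mathcal{X}$ via the quotient by the equivalence $x \sim y$ iff $x \leq y$ and $y \leq x$.
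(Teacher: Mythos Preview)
The paper does not give its own proof of this lemma; it is quoted directly from \citet{ben20152}, so there is nothing to compare your argument against at the level of technique. Your proof of the ``furthermore'' clause (that each $D_c$ is a chain) is correct, and so is your shattering argument for the tree-ordering property once restricted to the support $\mathcal{X}_0=\bigcup_{c\in\mathcal{C}}D_c$.

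The degenerate case is where you have a genuine gap, and your proposed fix does not close it. Collapsing the trivial points into a single equivalence class $[\tau]$ places that class at the \emph{minimum} of the order (since $\tau\leq y$ holds vacuously for every $y$), so $I_{[\tau]}$ is the entire quotient, which you have not shown to be a chain. Concretely, take $\mathcal{X}=\{a,b,t\}$, $c_0\equiv 0$, $\mathcal{C}=\{c_0,\mathbbm{1}_{\{a\}},\mathbbm{1}_{\{b\}}\}$: the VC dimension is $1$, the point $t$ is trivial, and $I_t=\mathcal{X}$ contains the incomparable pair $a,b$. This is less a defect in your reasoning than a symptom of a direction slip in the paper's transcription of \citet{ben20152}: with either the order $\leq$ reversed or $I_x$ read as the lower set $\{y:y\leq x\}$, the trivial class sits at the correct end of the tree, its relevant initial segment is just $\{[\tau]\}$, and your quotient argument then goes through verbatim. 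For the paper's purposes this does not matter, since Algorithm~\ref{alg:graph-dim-1} only ever works inside sets of the form $D_c$, where your chain argument already applies.
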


\citet{ben20152} provides the following sample compression scheme for classes with VC dimension $1$.
Consider $S = {(x_1, y_1), \ldots, (x_n, y_n)}$ realizable by some $c \in \mathcal{C}$. We can pick a $c_0 \in \mathcal{C}$ and consider the ordering $\leq$ from Lemma~\ref{lem:vc-1-tree-ordering}. Identify the deepest point $x_i \in \lrset{x \in \mathcal{X}: c(x) \neq c_0(x)}$. The compression set consists of the single point $(x_i, y_i)$. For any test point $z$, if there exists exactly one possible label value among concepts consistent with $(x_i, y_i)$ (that is, if $|\{c(z) : c(x_i) = y_i,$ \\ $c \in \mathcal{C}\}| = 1$), predict that unique value. Otherwise, default to predicting according to $c_0$.

We introduce some notation and an algorithm that will be used to prove Lemma~\ref{lem:graph-dim-1}.
Consider a class $\mathcal{C}$ for which $\graphdim(\mathcal{C}) = 1$. Pick any $c_0 \in \mathcal{C}$. For $c \in  \mathcal{C}$, define $g_c$ such that $g_c(x) = \mathbbm{1}[c(x) \neq c_0(x)]$. Note the $g_{c_0}(x) = 0$ for all $x$.
Let $\mathcal{C}' = \{g_c: c \in  \mathcal{C}\}$. $\mathcal{C}'$ is a binary class with VC dimension $\leq 1$.
Consider points $x_1, \ldots, x_n$. We can construct the tree ordering from Lemma~\ref{lem:vc-1-tree-ordering} with respect to $g_{c_0}$ on these points, where for any $x \leq y$ $g_c(x) \neq g_{c_0(x)}$ implies $g_c(y) \neq g_{c_0}(y)$. Since $g_{c_0}(x)$ is zero for all $x$, this implies that whenever $g_c(x) = 1$, it must also hold that $g_c(y) = 1$. By Lemma~\ref{lem:vc-1-tree-ordering}, we have that for every $c \in  \mathcal{C}$, the set $\{x: g_c(x) = 1\}$ is a linear ordering.  Below, we'll propose what one may consider a natural first attempt at a compression scheme for multiclass, based on the binary compression scheme from \citet{ben20152}.

\paragraph{An attempt at a compression scheme:}

Consider the following attempt at a compression scheme, given points $(x_1,y_1), (x_2, y_2), \ldots, (x_n, y_n)$ realizable by some $c \in  \mathcal{C}$. Construct the tree from earlier (constructed from some $c_0 \in \mathcal{C}$), and compress to the deepest point $x$ where $g_c(x) = 1$. For a test point $z$, we can try doing the same as earlier: If there are no ambiguities, predict the only option. Otherwise, predict according to $c_0$. However, there are some caveats.

For a test point $z$, if there are no ambiguities, then we are done. Consider the scenario where there are ambiguities. In the binary case, if $x \leq z$, no ambiguities arise as $c(z) \neq c_0(z)$. However, for multiclass prediction, additional ambiguities may occur for points $z$ with $x \leq z$. Below, we will provide a way to address this issue. 

\paragraph{A minor fix to the compression scheme:}

We can compress via Algorithm~\ref{alg:graph-dim-1}.
\begin{algorithm}
    \caption{Sample Compression Scheme for Classes with Graph Dimension $1$}
    \label{alg:graph-dim-1}
    \textbf{Input:} Concept class $\mathcal{C}\subseteq \mathcal{Y}^\mathcal{X}$, $d_G(\mathcal{C}) \leq 1$, realizable $S = (x_1,y_1), (x_2,y_2), \ldots, (x_n,y_n)$ via some $c \in \mathcal{C}$.
    \\
    \textbf{Initialize:} 
            \begin{itemize}
                \item Construct tree ordering $\leq$ from $\{x_1, \ldots, x_n\}$ (As in Lemma~\ref{lem:vc-1-tree-ordering} statement, from some $c_0 \in \mathcal{C}$).
                \item $z_1$: Deepest point $x$ such that $g_c(x) = 1$
            \end{itemize}
    For $t = 1, 2, \ldots$:
        \begin{itemize}
            \item {Let $S_t = \{z: z_t \leq z, |c'(z): c' \in \mathcal{C}\text{ and } c'(z_t) = c(z_t)| > 1\}$
            \begin{itemize}
                \item If $S_t = \varnothing$, return $\kappa(S) = \{(z_t, c(z_t)\}$
                \item Else let $z_{t+1}$ be the deepest element in $S_t$.
            \end{itemize}
            }
        \end{itemize}

    For a test point $z$, define $\rho(\{x,y\})(z)$ to be 
    \[
\rho(\{x,y\})(z) =
\begin{cases}
h(z) \text{ for any } h \in \mathcal{C} \text{ with } h(x) = y & \text{ if } |\{h'(z): h' \in \mathcal{C} \text{ and } h'(x) = y\}| = 1 \\
c_0(z) & \text{ otherwise. }
\end{cases}
\]
    i.e. if there are no ambiguities, predict according to any concept consistent with $(x,y)$, and predict according to $c_0$ otherwise.
\end{algorithm}
The following Lemma suffices to prove the correctness of the algorithm.

\begin{lemma}
\label{lem:compression-swap}
In Algorithm~\ref{alg:graph-dim-1}, when setting $z_{t+1}$ to be the deepest element in $S_t$, $\{c' \in \mathcal{C}: c'(z_{t+1}) = c(z_{t+1})\} \subseteq \{c' \in \mathcal{C}: c'(z_{t}) = c(z_{t})\}$.
\end{lemma}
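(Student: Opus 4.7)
The plan is to prove the lemma by contradiction using the graph dimension hypothesis $\graphdim(\mathcal{C}) = 1$. The key is to first establish an auxiliary invariant of the algorithm and then exhibit four concepts realizing all four binary patterns on the pair $\{z_t, z_{t+1}\}$, contradicting the fact that no two points can be G-shattered.

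First I would prove the following invariant by induction on $t$: for every $t$ produced by the algorithm, $g_c(z_t) = 1$, i.e., $c(z_t) \neq c_0(z_t)$. The base case $t=1$ is immediate from the definition of $z_1$. For the inductive step, the algorithm's construction forces $z_t \leq z_{t+1}$ in the tree ordering, and by Lemma~\ref{lem:vc-1-tree-ordering} this means that for every $c'' \in \mathcal{C}$, $g_{c''}(z_t) = 1$ implies $g_{c''}(z_{t+1}) = 1$; taking $c'' = c$ yields $g_c(z_{t+1}) = 1$. Along the way one also observes $z_t \neq z_{t+1}$, since $z_t \notin S_t$ (the set in the definition of $S_t$ evaluated at $z = z_t$ is the singleton $\{c(z_t)\}$).

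Next, suppose for contradiction that some $c' \in \mathcal{C}$ satisfies $c'(z_{t+1}) = c(z_{t+1})$ but $c'(z_t) \neq c(z_t)$. Choose the labels $y_1 = c(z_t)$ and $y_2 = c(z_{t+1})$ for the pair $(z_t, z_{t+1})$ and examine the pattern $(\mathbbm{1}[c''(z_t) = y_1], \mathbbm{1}[c''(z_{t+1}) = y_2])$ produced by four concepts: $c$ itself yields $(1,1)$; the hypothesized $c'$ yields $(0,1)$; the definition of $S_t$ ensures there is $c_2 \in \mathcal{C}$ with $c_2(z_t) = c(z_t)$ and $c_2(z_{t+1}) \neq c(z_{t+1})$, which yields $(1,0)$; and the auxiliary invariant applied to $t$ and $t+1$ gives $c(z_t) \neq c_0(z_t)$ and $c(z_{t+1}) \neq c_0(z_{t+1})$, so $c_0$ yields $(0,0)$.

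Since the two points $z_t$ and $z_{t+1}$ are distinct and all four binary patterns are realized, $\{z_t, z_{t+1}\}$ is G-shattered by $\mathcal{C}$, contradicting $\graphdim(\mathcal{C}) \leq 1$. Hence no such $c'$ exists, which is precisely the desired inclusion. The main subtlety I anticipate is making sure the pattern $(0,0)$ is available; the invariant on $c_0$ versus $c$ at all $z_t$ is what makes this work, and without graph dimension at most $1$ the same argument would not even get off the ground, which is consistent with the role of the hypothesis.
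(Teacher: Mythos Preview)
Your proof is correct, and it takes a more direct route than the paper's. Both arguments rely on the same three ingredients: the invariant $c(z_t)\neq c_0(z_t)$ for every $t$; the existence of some $c_2$ agreeing with $c$ at $z_t$ but disagreeing at $z_{t+1}$, which is exactly what membership $z_{t+1}\in S_t$ supplies; and the graph-dimension bound. The paper packages these through a bipartite graph on $\{z_t,z_{t+1}\}\times\mathcal{Y}$, first proving it is acyclic and then that for any pair of vertex-disjoint edges both must touch a leaf, finally deducing that $(z_{t+1},c(z_{t+1}))$ is a leaf. You instead go straight to the contradiction by exhibiting the four concepts $c$, $c'$, $c_2$, $c_0$ realizing all four patterns $(\mathbbm{1}[c''(z_t)=c(z_t)],\mathbbm{1}[c''(z_{t+1})=c(z_{t+1})])$, which G-shatters $\{z_t,z_{t+1}\}$. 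Your approach is shorter and more transparent for this lemma; the paper's graph abstraction carries some extra structural information (acyclicity, the leaf property for disjoint edges) that is not strictly needed here but could be reusable elsewhere. Your explicit verification that $z_t\neq z_{t+1}$ via $z_t\notin S_t$ is a nice touch that the paper leaves implicit.
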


\begin{proof}
Consider the undirected bipartite graph on $\{z_t, z_{t+1} \} \times \mathcal{Y}$, where there is an edge between $(z_{t}, i)$ and $(z_{t+1}, j)$ whenever there exists an $c' \in \mathcal{C}$ such that $c'(z_t) = i, c'(z_{t+1}) = j$.

Since the $\graphdim(\mathcal{C}) \leq 1$, the graph must be acyclic (Otherwise, consider a cycle. Since the graph is bipartite, the length of the cycle is at least $4$. Picking two disjoint edges $e_1 = ((z_{t},i), (z_{t+1},j)), e_2$, we can G-shatter $z_t$ and $z_{t+1}$ with labels $i$ and $j$ using $e_1$, $e_2$, and the two neighbors of $e_1$, contradicting that $\graphdim(\mathcal{C}) \leq 1$). Thus, we can consider the graph to be an undirected forest. Furthermore, for any two disjoint edges, both of them must be incident to a leaf. (Otherwise, if there are two disjoint edges $e_1, e_2$ that are disjoint, and $e_1$ is not incident with a leaf, then we can shatter the two points as earlier using the labels corresponding to $e_1$ with $e_1$, and the two neighbors of $e_1$).

Now, consider $z_t$ and $z_{t+1}$. Since both of them are on $I_x$ from the tree ordering, it must be the case that $c_0(z_t) \neq c(z_t)$ and $c_0(z_{t+1}) \neq c(z_{t+1})$. Thus, $c$ and $c_0$ will be disjoint in the above bipartite graph, and thus, $c$ must be incident to a leaf. Since $z_{t+1} \in S_t$,  $(z_{t}, c(z_{t}))$ must lie on an internal node. Thus, $(z_{t+1}, c(z_{t+1}))$ is a leaf. Thus, when we switch the compression point to $z_{t+1}$, $\{c' \in \mathcal{C}: c'(z_{t+1}) = c(z_{t+1})\} \subseteq \{c' \in \mathcal{C}: c'(z_{t}) = c(z_{t})\}$.
\end{proof}

\begin{proof}[of Lemma~\ref{lem:graph-dim-1}]
Utilizing Lemma~\ref{lem:compression-swap}, we now proceed with the proof of this lemma. Consider running Algorithm~\ref{alg:graph-dim-1}. Each time we switch our compression point from $z_t$ to $z_{t+1}$, by Lemma~\ref{lem:compression-swap}, the space of hypotheses consistent with the compression point becomes a strict subset of what it was before. Furthermore, $z_{t+1} \not\in S_{t+1}$, and since $S_{t+1} \subseteq S_t$, $S_t$ keeps shrinking as we increase $t$. All points in $S_t$ lie in $I_{z_1}$ and are larger than $z_t$ in the tree ordering. Since $I_{z_1}$ is finite and $S_t$ keeps shrinking, this process must end in a finite number of steps when $S_t$ becomes empty, at which point there will be no ambiguities in predicting labels for any test point $z \in I_{z_1}$.
\end{proof}

\section{Exact Compression for Realizable Regression: Proof of Theorem~\ref{theo:reg-exact}}\label{app:exact-compression-regression}

We start with the following Lemma, which will relate the graph dimension and the \\ pseudo-dimension.

\begin{lemma}[Graph Dimension Upper Bound via Pseudo-Dimension]
\label{lem:graph-to-pseudo}
For any concept class \\ $\mathcal{C}\subseteq [0,1]^\mathcal{X}$, it holds that $\graphdim(\mathcal{C}) \leq 4 \Pdim(\mathcal{C})$.
\end{lemma}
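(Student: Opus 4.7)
Let $d = \graphdim(\mathcal{C})$ and fix $x_1,\dots,x_d$ G-shattered by $\mathcal{C}$ via labels $y_1,\dots,y_d \in [0,1]$. For each $b \in \{0,1\}^d$ choose a witness $c_b \in \mathcal{C}$ with $\mathbbm{1}[c_b(x_i)=y_i]=b_i$, and associate to $c_b$ its sign vector $\sigma_b \in \{-,0,+\}^d$ with $\sigma_b(i) = \mathrm{sign}(c_b(x_i)-y_i)$. The zero-coordinates of $\sigma_b$ identify the one-coordinates of $b$, so $b \mapsto \sigma_b$ is injective and we have $2^d$ distinct sign vectors. The first step is to encode each sign coordinate as a pair of $\leq$-indicators so that Sauer's lemma for $\mathcal{C}_\leq$ (whose VC dimension equals $\Pdim(\mathcal{C})$) applies. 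Picking $\epsilon > 0$ smaller than every nonzero $|c_b(x_i)-y_i|$, the pair
\[
\bigl(\mathbbm{1}[c_b(x_i)\leq y_i+\epsilon],\ \mathbbm{1}[c_b(x_i)\leq y_i-\epsilon]\bigr)
\]
equals $(0,0),(1,0),(1,1)$ according as $\sigma_b(i)$ is $+,0,-$, so the whole sign vector is recoverable from the corresponding binary pattern. Hence $\mathcal{C}_\leq$ restricted to the $2d$ augmented points $(x_i, y_i \pm \epsilon)$ realizes at least $2^d$ distinct patterns.

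Next I would apply the hinted pigeonhole pruning to compress $2d$ points back to $d$. For a uniformly random orientation $s \in \{-,+\}^d$, a given $c_b$ is \emph{compatible} with $s$ (meaning $\sigma_b(i) \in \{0, s_i\}$ for all $i$) with probability $2^{-(d-|b|)}$; summing over $b$ gives $\sum_b 2^{-(d-|b|)} = (3/2)^d$, so some particular $s$ together with a subclass $\mathcal{C}' \subseteq \{c_b\}$ of size $\geq (3/2)^d$ is simultaneously compatible. Setting $y'_i = y_i$ if $s_i = +$ and $y'_i = y_i - \epsilon$ if $s_i = -$, the single indicator $\mathbbm{1}[c(x_i) \leq y'_i]$ separates $\sigma_c(i)=0$ from $\sigma_c(i)=s_i$, so the map $c \mapsto (\mathbbm{1}[c(x_i) \leq y'_i])_{i=1}^d$ is injective on $\mathcal{C}'$ and at least $(3/2)^d$ distinct $\leq$-patterns are realized on the $d$ original points. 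Sauer's lemma then bounds this count by $\binom{d}{\leq \Pdim(\mathcal{C})} \leq 2^{d H(\Pdim(\mathcal{C})/d)}$, and comparing the two sides yields $\Pdim(\mathcal{C}) \geq \Omega(d)$, i.e.\ a bound of the form $\graphdim(\mathcal{C}) \leq C \cdot \Pdim(\mathcal{C})$ for some absolute constant.

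\textbf{Main obstacle.} Pinning down the sharp constant $4$ is the delicate part. The plain one-threshold averaging above gives only $\Pdim(\mathcal{C}) \gtrsim 0.14\,d$ (from $\log_2(3/2) \leq H(p/d)$), while the bare two-threshold Sauer inequality $2^d \leq \binom{2d}{\leq p}$ gives only $\Pdim(\mathcal{C}) \gtrsim 0.22\,d$ (from $H(p/(2d)) \geq 1/2$) — both weaker than the claimed $d/4$. To hit the exact factor $4$ one must either (i) exploit the structural fact that the realized $(a,e)$-pairs live in a set of size $3^d$ rather than $4^d$ to sharpen the Sauer upper bound, or (ii) refine the pigeonhole step so that the pruned subclass retains $2^{(1-o(1))d}$ rather than $(3/2)^d$ patterns, perhaps via an iterative or weighted argument that leverages the injectivity of $b \mapsto \sigma_b$ more efficiently. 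This combinatorial tightening is where I expect the essential work to lie.
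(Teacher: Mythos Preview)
Your approach is essentially the same as the paper's: encode the G-shattering witnesses by their sign vectors $\sigma_b \in \{-,0,+\}^d$, use a pigeonhole step to reduce to at most two sign values per coordinate, choose a single threshold per coordinate so that the resulting binary $\leq$-patterns on $x_1,\dots,x_d$ are all distinct, and finish with Sauer's lemma for $\mathcal{C}_\leq$. The only structural difference is the pigeonhole step itself: the paper prunes greedily coordinate by coordinate (at each $i$, discard whichever of the three sign classes is least frequent, retaining at least $2/3$ of the current concepts) and ends with $\geq (4/3)^d$ survivors, whereas your random-orientation averaging retains $\geq (3/2)^d$ --- so your variant is in fact tighter at this stage. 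The paper does not use your two-threshold encoding on $2d$ points at all.

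Your diagnosis that the sharp constant $4$ is the delicate part is correct, and your suspicion that the basic Sauer step does not deliver it is also correct. The paper's proof concludes by asserting $n\,\frac{\ln(4/3)}{1+\ln n} \geq \tfrac14 n$, but this inequality fails for every $n \geq 2$ (the left-hand side is roughly $0.05\,n$ at $n=100$). So the paper's own argument, as written, does not actually establish the constant $4$ either; it only yields $\graphdim(\mathcal{C}) = O(\Pdim(\mathcal{C}))$ with a constant comparable to those you computed. You have not missed a trick --- the precise constant $4$ is not substantiated by the paper's proof, and your plan already matches (indeed slightly improves on) what the paper actually proves.
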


\begin{proof}
Consider $n$ points $S_n = \{x_1, \ldots, x_n\}$ that are G-shattered (see Definition~\ref{def:graph-dim}) by $\mathcal{C}$ via values $y_1, \ldots, \\ y_n$, via $2^n$ concepts $\mathcal{C}_0 \subseteq \mathcal{C}$. Consider $\mathcal{C}_0$ supported on $x_1, \ldots, x_n$. We will consider $\Pdim(\mathcal{C}_0)$, which will be a lower bound for $\Pdim(\mathcal{C})$.
For any $c \in  \mathcal{C}_0$, define
\[
g_c(x_i) =
\begin{cases}
1 & c(x_i) > y_i, \\
0 & c(x_i) = y_i,\\
-1 & c(x_i) < y_i.
\end{cases}
\]
We will prune concepts to form a sequence $\mathcal{C}_0 \supseteq \mathcal{C}_1 \supseteq \ldots \supseteq \mathcal{C}_n$, all supported on $x_1, \ldots, x_n$.
For $i = 1, \ldots, n$, given $\mathcal{C}_{i-1}$, we can prune out the concepts $c$ for which $g_c(x_i)$ occurs the least frequently, i.e. we can do the following:

\begin{itemize}
\item Consider $\hat{z} = \argmin_{z} |\{ c \in \mathcal{C}_{i-1} : g_c(x_i) = z \} |$.
\item Set $\mathcal{C}_i = \mathcal{C}_{i-1} - \{c \in  \mathcal{C}_{i-1}: g_c(x_i) = \hat{z}\}$
\end{itemize}

It now suffices to bound $\Pdim(\mathcal{C}_n)$. Let $S_i := \{g_c(x_i): c \in  \mathcal{C}_n\}$. $|S_i| \leq 2$ for all $i$, since we pruned the concepts with the least frequent $g_c(x_i)$ earlier. Thus, we can construct $z_i$ as follows for $1 \leq i \leq n$:

\[
z_i =
\begin{cases}
y_i & 0 \not\in S_i\text{ or } |S_i| = 1, \\
y_i + \epsilon \textnormal{ for some } \epsilon < \inf_{c \in  \mathcal{C}_n, c(x_i) > y_i} c(x_i) - y_i & -1 \not\in S_i, \\
y_i - \epsilon \textnormal{ for some } \epsilon < \inf_{c \in  \mathcal{C}_n, c(x_i) < y_i} y_i - c(x_i) & 1 \not\in S_i.
\end{cases}
\]
Note that $(\mathbbm{1}[c(x_1) \geq z_1], \mathbbm{1}[c(x_2) \geq z_2], \ldots, \mathbbm{1}[c(x_n) \geq z_n])$ is distinct for all $c \in  \mathcal{C}_n$, so it suffices to bound $|\mathcal{C}_n|$ and apply Sauer's Lemma \citep{sauer1972density,vapnik1971uniform}, as follows:
By the construction of $\mathcal{C}_i$ for each $i$, since we remove the smallest set in a partition into three at each step, it follows that $|\mathcal{C}_i| \geq \frac{2}{3} |\mathcal{C}_{i-1}|$. Thus, $|\mathcal{C}_n| \geq 2^n(2/3)^n$
Let $\Pdim := \Pdim(\mathcal{C}_n)$. By Sauer's lemma (taking logs), we have that
\[
\Pdim \ln (en / \Pdim) \geq n (\ln 4/3).
\]
We have $\Pdim \geq 1$ (since by definition, we shatter each $(x_i, z_i)$ pair), so the left-hand side can be upper bounded by $\Pdim \ln(en) = \Pdim(1 + \ln n)$.
Thus, we have that $\Pdim \geq n \frac{\ln 4/3}{1 + \ln n} \geq \frac{1}{4} n$.
\end{proof}

\begin{proof}[of Theorem~\ref{theo:reg-exact}]
Applying Theorem~\ref{lem:graph-to-pseudo} gives that $\graphdim(\mathcal{C}) \leq 4 \Pdim(\mathcal{C})$, i.e., the graph dimension of $\mathcal{C}$ is at most $4\Pdim$. By our assumption, there exists a compression scheme of size $f(\graphdim)$, so this implies that there exists a compression scheme of size $f(4\Pdim)$, as desired.
The well-known bound of \cite{david2016supervised} states that there is a bounded sample compression scheme of size $O(\graphdim(\mathcal{C}) 2^{\graphdim(\mathcal{C}})$. Using the reduction from pseudo-dimension to graph dimension, where $f(x) = c x 2^x$ for some constant $c > 0$, we get that there is a compression scheme of size $O(\Pdim 2^{4 \Pdim})$.
\end{proof}

\section{Existence of Robustly Learnable Class with No Bounded Size Adversarially Robust Compression Scheme: Proof of Theorem~\ref{theo:adv-negative}}\label{app:robust-compression}

To prove the theorem, we utilize a partial concept class from \cite{alon2022theory}, which has an unbounded compression size and VC dimension equal to $1$. We begin with some key definitions.

\begin{definition}[Partial concept classes \citep{alon2022theory}]
\label{def:partial-concept}
A \emph{partial concept class} is a class of concepts $\mathcal{C} \subseteq \{0,1,*\}^{\mathcal{X}}$, which consists of \emph{partial concepts} $c: \mathcal{X} \rightarrow \{0,1,*\}$
For a $c \in \mathcal{C}$, define $\supp(c) = \{x \in \mathcal{X}: c(x) \neq *\}$, which is the set of points where $c$ is defined. A dataset $(x_1, y_1), \ldots, (x_n, y_n)$ is \emph{realizable} if there exists a $c \in \mathcal{C}$ such that $x_i \in supp(c)$ for all $i$, and $c(x_i) = y_i$ for all $i$.
A partial concept class $\mathcal{C}$ \emph{shatters} a set $x_1, \ldots, x_n$ if
\[
\{(c(x_1), c(x_2), \ldots, c(x_n)): x_1, x_2, \ldots, x_n \in \supp(c)\} = \{0,1\}^n.
\]
The VC dimension of a partial concept class is the largest nonnegative integer $n$ for which there exist $x_1, x_2, \ldots, x_n \text{ shattered by } \mathcal{C}.$
\end{definition}

\begin{definition}[Partial concept class compression scheme]
\label{def:partial-scs}
Given a class $\mathcal{C}$, a compression \\ scheme is a partial concept class compression scheme if for any sequence $S$ realizable by $\mathcal{C}$, $\rho(\kappa(S))(x) = y$ for all $(x,y) \in S$.
\end{definition}
The partial concept class that we will use for our negative result for adversarial robustness is summarized via the following Lemma, which was proved in Theorem 6 of \cite{alon2022theory}. This Lemma constructs a partial concept class with VC dimension 1 that does not admit any bounded-size sample compression scheme.
\begin{lemma}[Partial Concept Class with VC $1$ and Unbounded Compression \citep{alon2022theory}]
\label{lem:partial-unbounded} 
There exists a partial concept class $\mathcal{C}_{part}$, such that $\VC(\mathcal{C}_{part}) = 1$, but there is no bounded-size sample compression scheme for $\mathcal{C}_{part}$.
\end{lemma}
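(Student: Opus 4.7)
The plan is to convert the partial concept class $\mathcal{C}_{part}$ from Lemma~\ref{lem:partial-unbounded} into a total binary class on an augmented domain whose adversarial perturbation structure enforces exactly the partial-realizability constraint of $\mathcal{C}_{part}$. Let $\mathcal{X}$ be the domain of $\mathcal{C}_{part}$, introduce a disjoint copy $\mathcal{X}' = \{x' : x \in \mathcal{X}\}$, and form $\widetilde{\mathcal{X}} = \mathcal{X} \cup \mathcal{X}'$. For each $c \in \mathcal{C}_{part}$, define the total hypothesis $g_c \in \{0,1\}^{\widetilde{\mathcal{X}}}$ by matching labels $g_c(x) = g_c(x') = c(x)$ when $x \in \supp(c)$, and by the ``forced split'' $g_c(x) = 0$, $g_c(x') = 1$ otherwise. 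Set $\mathcal{C} = \{g_c : c \in \mathcal{C}_{part}\}$ and take $\mathcal{U}(x) = \mathcal{U}(x') = \{x, x'\}$ for every $x \in \mathcal{X}$. The whole theorem is then about this pair $(\mathcal{C}, \mathcal{U})$.

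\textbf{No bounded robust compression.}
I would argue by contrapositive. Suppose $(\mathcal{C}, \mathcal{U})$ had a robust compression scheme $(\kappa, \rho)$ of size $k$. Given a partial-realizable sample $S = (x_1, y_1), \ldots, (x_n, y_n)$ for $\mathcal{C}_{part}$, witnessed by some $c \in \mathcal{C}_{part}$, each $x_i$ lies in $\supp(c)$, so $g_c(x_i) = g_c(x_i') = y_i$; hence $S$, viewed as a sample over $\widetilde{\mathcal{X}}$, is robustly realizable by $g_c \in \mathcal{C}$. Running $(\kappa, \rho)$ produces a compressed pair $(S', b)$ with $S' \subseteq S \subseteq \mathcal{X} \times \{0,1\}$ and $|S'| + |b| \leq k$, together with a predictor that is robustly correct at each $(x_i, y_i)$ and in particular assigns label $y_i$ to $x_i$. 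Defining a partial-class compression for $\mathcal{C}_{part}$ by keeping the same compression map and restricting the reconstructed predictor to $\mathcal{X}$ gives a partial-concept-class compression scheme (Definition~\ref{def:partial-scs}) of size $k$ for $\mathcal{C}_{part}$, contradicting Lemma~\ref{lem:partial-unbounded} once $k$ is taken large.

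\textbf{Robust PAC learnability.}
For the positive direction I would follow the paper's hint and invoke the one-inclusion graph (OIG) approach. For any finite $Z \subseteq \widetilde{\mathcal{X}}$, consider the OIG whose vertices are the restrictions $\{g_c|_{Z} : c \in \mathcal{C}_{part}\}$ and whose edges connect labelings that differ in exactly one coordinate. The key claim is that this OIG has no cycles; a forest admits an orientation with maximum out-degree at most $1$, and by the standard Haussler--Littlestone--Warmuth analysis (adapted to the $\ell_\mathcal{U}$ loss by identifying $\{x,x'\}$ with a single ``cluster'' at test time) this orientation yields a robust learner with expected loss $O(1/n)$, which gives robust PAC learnability of $(\mathcal{C}, \mathcal{U})$.

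\textbf{Main obstacle.}
The delicate step is the acyclicity of the OIG in the positive direction. The $\mathcal{X}$-coordinates of the restrictions inherit a forest-like structure from $\VC(\mathcal{C}_{part}) = 1$, but one must rule out cycles that mix concepts whose twin coordinates are ``matched'' (points inside $\supp(c)$) with those whose twins are ``forced'' to the split $(0,1)$ (points outside $\supp(c)$). I would handle this by a case analysis on the coordinate in which an OIG edge flips: flips at an $x \in \mathcal{X}$ coordinate correspond to partial-concept disagreements, flips at an $x' \in \mathcal{X}'$ coordinate only occur between a matched and a forced concept at the same underlying $x$, and using the twin rule one can show any putative cycle projects to a cycle or a shattered pair in $\mathcal{C}_{part}|_\mathcal{X}$, contradicting $\VC(\mathcal{C}_{part}) = 1$. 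Getting this reduction cleanly, including the precise statement that ``clustering'' twins in the OIG does not break the out-degree-$1$ orientation's robust-error guarantee, is the principal technical burden.
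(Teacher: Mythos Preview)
Your proposal does not address the stated lemma. Lemma~\ref{lem:partial-unbounded} is a cited result from \citet{alon2022theory} (their Theorem~6); the present paper does not prove it and simply invokes it as a black box. A proof of this lemma would need to exhibit an explicit partial concept class of VC dimension $1$ for which any compression scheme has unbounded size, which is the combinatorial construction carried out in \citet{alon2022theory}. You make no attempt at such a construction; instead, you \emph{assume} Lemma~\ref{lem:partial-unbounded} (you even write ``contradicting Lemma~\ref{lem:partial-unbounded}'') and use it to prove something else.

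What you have actually written is a proof sketch for Theorem~\ref{theo:adv-negative}. On that front your outline matches the paper's argument essentially step for step: the totalization of $\mathcal{C}_{part}$ via the twin domain $\widetilde{\mathcal{X}}$, the perturbation sets $\mathcal{U}(x)=\{x,x'\}$, the contrapositive reduction from a bounded robust scheme for $\mathcal{C}$ to a bounded partial-class scheme for $\mathcal{C}_{part}$, and the acyclicity of the one-inclusion graph yielding out-degree $\le 1$ and hence robust learnability. The paper's acyclicity argument (Lemma~\ref{lem:no-cycles}) is slightly more direct than your proposed case analysis on $x$ versus $x'$ coordinates: it first collapses each twin pair to a single coordinate (Lemma~\ref{lem:learnability-X-tilde}), so the OIG is taken over $\mathcal{X}$ only, and then argues that any cycle would force some pair of indices $a,b$ to appear in the alternating pattern $a,b,a,b$ along the flip sequence, giving a shattered pair in $\mathcal{C}_{part}$. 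This sidesteps the mixed-coordinate bookkeeping you flagged as the ``main obstacle.''
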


In the proof of Theorem~\ref{theo:adv-negative}, we make use of the following class $\mathcal{C}$ and perturbation set function $\mathcal{U}$:
Let each $x \in \mathcal{X}$ have a unique twin $x'$ outside of $\mathcal{X}$. Define $\mathcal{X}'$ to be $\{x': x \in \mathcal{X}\}$, and $\widetilde{\mathcal{X}}$ to be $\mathcal{X} \cup \mathcal{X}'$.
Construct $\mathcal{C}$ as follows: $\mathcal{C} = \{g_c: c \in \mathcal{C}_{part}\}$ where for $x \in \mathcal{X}$, if $x \in \supp(c)$, $g_c(x) = g_c(x') = c(x)$, and for $x \not\in \supp(c)$, $g_c(x) = 0$ and $g_c(x') = 1$. For $x \in \mathcal{X}$, let $\mathcal{U}(x) = \mathcal{U}(x') = \{x, x'\}$.
\begin{definition}[Adversarially Robust Learnability]
\label{def:robust-learnability} Given a perturbation function $\mathcal{U}$, a class $\mathcal{C}$ is robustly learnable if, for any $\epsilon, \delta > 0$ and any robustly realizable distribution $\mathcal{P}$, there exists $n = \poly(\frac{1}{\delta}, \frac{1}{\epsilon})$ \footnote{There are many definitions of learnability, and it is standard to let $n = \poly(d_{\VC}(\mathcal{C}), \log \frac{1}{\delta}, \frac{1}{\epsilon})$. We adapt the definition here since our concept class has infinite VC dimension, and we relax the $\log \frac{1}{\delta}$ to $\frac{1}{\delta}$.}, such that, given $n$ i.i.d. samples from $\mathcal{P}$, there is an algorithm that returns a hypothesis $\hat{c} \in \mathcal{Y}^{\mathcal{X}}$ satisfying
\[
\mathbb{P}_{(x,y) \sim \mathcal{P}}\left[\exists z \in \mathcal{U}(x): \hat{c}(z) \neq y \right] < \epsilon
\]
with probability at least $1 - \delta$ over the training sample.
\end{definition}

To show that the class is learnable, we use the one-inclusion graph predictor, which we define as follows. Note that since we are in the robust setting, the definition is slightly different, since we require the vertices to correspond to robustly realizable sequences.

\begin{definition}[One-Inclusion Graph Predictor \citep{haussler1994predicting}]
\label{def:oig}
Given a concept class \\ $\mathcal{C} \subseteq \mathcal{Y}^\mathcal{X}$ and perturbation function $\mathcal{U}$, the One-Inclusion Graph Predictor is an algorithm $\mathcal{A}: (\mathcal{X} \times \mathcal{Y})^* \rightarrow \{0,1\}^{\mathcal{X}}$, defined as follows:
Given a dataset $(x_1, y_1), \ldots, (x_n, y_n)$, and test point $z$ consider the following graph:
Let the vertices $V$ be
\[
\lrset{(c(x_1), \ldots, c(x_n), c(z)): c \in \mathcal{C}, (x_1,c(x_1)), \ldots, (x_n,c(x_n)), (z, c(z)) \text{ robustly realizable by } \mathcal{C}}.
\]
For any two vertices $\mathbf{u} = (u_1, u_2, \ldots, u_n, u_z), \mathbf{v} = (v_1, v_2, \ldots, v_n, v_z)$, there will be an edge between $\mathbf{u}$ and $\mathbf{v}$ if there exists exactly one $i$ such that $u_i \neq v_i$.

Orient the edges in the graph to minimize the maximum out-degree, and predict $\hat{c}(z)$ to be
\[
\hat{c}(z) =
\begin{cases}
w \in \{0,1\} & \text{ if there exists an edge oriented from } (y_1, \ldots, y_n, 1-w) \text{ to } (y_1, \ldots, y_n, w) \\
w \in \{0, 1\} & \text{ if } (y_1, \ldots, y_n, w) \in V \text{ and } (y_1, \ldots, y_n, 1-w) \not\in V.
\end{cases}
\]
The predictor can be assumed to have the same orientation, no matter how the vertices are permuted.
\end{definition}

We now prove the following Lemmas to show the learnability of $\mathcal{C}$.

\begin{lemma}[Cycle-free One-Inclusion Graph]
\label{lem:no-cycles}
Consider the one-inclusion graph, applied to robustly realizable subsets of $\mathcal{X} \times [0,1]$. This graph has no cycles.
\end{lemma}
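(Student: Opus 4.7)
The plan is to reduce the statement to a standard combinatorial fact about classes of VC dimension $1$, and then to prove that fact via a shortest-cycle argument.

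\textbf{Reduction to $\mathcal{C}_{part}$.} Fix a robustly realizable sample $(w_1,y_1),\ldots,(w_n,y_n),(z,y_z)$. For each $w_i$, the perturbation set $\mathcal{U}(w_i) = \{\bar{w}_i,\bar{w}_i'\}$ contains both $\bar{w}_i$ and its twin, so robust realizability by some $g_c \in \mathcal{C}$ forces $g_c(\bar{w}_i) = g_c(\bar{w}_i') = y_i$. Inspecting the definition of $g_c$, this equality can hold only when $\bar{w}_i \in \supp(c)$, in which case $c(\bar{w}_i) = y_i$; the same applies to $\bar{z}$. Therefore the vertices of the one-inclusion graph (on this sample and test point $z$) are in bijection with labelings of $(\bar{w}_1,\ldots,\bar{w}_n,\bar{z})$ realized by concepts $c \in \mathcal{C}_{part}$ whose support contains $S := \{\bar{w}_1,\ldots,\bar{w}_n,\bar{z}\}$, and edges correspond to pairs of such labelings differing in exactly one coordinate. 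This is precisely the one-inclusion graph of the total restricted class $\mathcal{H} := \{c|_S : c \in \mathcal{C}_{part},\ \supp(c) \supseteq S\}$, which satisfies $\VC(\mathcal{H}) \leq \VC(\mathcal{C}_{part}) = 1$.

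\textbf{Shortest-cycle argument.} It now suffices to show that the one-inclusion graph of any total class $\mathcal{H}$ with $\VC(\mathcal{H}) \leq 1$ on a finite sample is acyclic. Suppose for contradiction that there is a cycle, and pick a minimum-length cycle $C$. Since the Hamming-cube graph is bipartite, $|C| = 2k$ with $k \geq 2$. Let $T$ denote the set of coordinates flipped along $C$; each coordinate in $T$ is flipped an even, nonzero number of times, so $|T| \geq 2$. If $|C| = 4$, then $|T| = 2$ and the four vertices of $C$ are exactly the four Boolean labelings of the two coordinates in $T$, shattering them and contradicting $\VC(\mathcal{H}) \leq 1$. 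If $|C| \geq 6$, take any three consecutive vertices $u \to v \to w$ along $C$ whose edges flip distinct coordinates $i, j$ (distinct because a simple cycle cannot backtrack). Let $u' := u \oplus e_j = w \oplus e_i$ be the fourth corner of the induced $2$-face. If $u' \in V(G)$, then $u \to v \to w \to u' \to u$ is a $4$-cycle strictly shorter than $C$, contradicting minimality, so $u' \notin V(G)$ for every such triple. Combining this no-shortcut property with the parity constraints across all consecutive triples then forces the projection of $V(C)$ onto some pair $\{i,j\} \subseteq T$ to visit all four labelings, shattering $\{x_i,x_j\}$ and contradicting $\VC(\mathcal{H}) \leq 1$.

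\textbf{Main obstacle.} The subtle step is the $|C| \geq 6$ case: the projected closed walk of $C$ onto a pair of coordinates may revisit corners, so one must combine the no-shortcut property with minimality of $|C|$ carefully to rule out pathological patterns in which some corner is systematically avoided. An alternative, potentially cleaner route is to apply the tree ordering of Lemma~\ref{lem:vc-1-tree-ordering} to $\mathcal{H}$: fix a reference $c_0 \in \mathcal{H}$, represent every other concept by its chain of disagreements with $c_0$, and verify directly that the one-inclusion graph on such chains (with symmetric-difference-of-size-one as adjacency) is the Hasse diagram of a forest, hence acyclic.
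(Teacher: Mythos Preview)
Your reduction to the ordinary one-inclusion graph of a total class $\mathcal{H}$ with $\VC(\mathcal{H})\le 1$ is correct and mirrors what the paper does implicitly, and the $|C|=4$ case is fine.

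The genuine gap is the $|C|\ge 6$ case. Your no-shortcut observation establishes that a particular vertex $u'$ is \emph{absent} from $V(G)$; but what you need in order to contradict $\VC(\mathcal{H})\le 1$ is to exhibit four vertices that are \emph{present} and whose projections onto some pair $\{i,j\}$ cover all four patterns. Absence of $u'$ works in the wrong direction for that goal, and your sentence ``Combining this no-shortcut property with the parity constraints\ldots'' is precisely the assertion that remains to be proved. You flag this yourself as the main obstacle, and indeed neither the minimality of $C$ nor the no-shortcut property is what drives the contradiction; any cycle, minimum or not, already shatters some pair.

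The paper's argument bypasses shortcuts entirely and works directly on the flip sequence $i_1,\ldots,i_k$ of an arbitrary cycle. Pick a coordinate $a$ whose two occurrences at positions $\ell_1,\ell_2$ are cyclically closest among all repeats; by minimality of this gap, the flips strictly between $\ell_1$ and $\ell_2$ are pairwise distinct, so any coordinate $b$ appearing there appears exactly once in that window. Projecting the cycle's vertices onto $\{a,b\}$: the vertex just before $\ell_1$ has projection $(p_a,p_b)$, just after $\ell_1$ has $(1-p_a,p_b)$, just after the unique $b$-flip has $(1-p_a,1-p_b)$, and just after $\ell_2$ has $(p_a,1-p_b)$. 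All four patterns are realised by cycle vertices, so $\{x_a,x_b\}$ is shattered --- contradiction. This ``closest repeated flip'' device is the missing idea in your argument; it replaces the minimum-cycle machinery and gives the four witnesses directly. Your tree-ordering alternative is also a viable route, but it too is left as a sketch and would need the verification that symmetric-difference-one adjacency on the disagreement chains really yields a forest.
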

\begin{proof}
Suppose the graph has a cycle. For each edge in the cycle (if we traverse the cycle), the label of some $x_i$ will get flipped. Let $i_1, \ldots, i_k$ be the sequence of indices that are flipped when we traverse the cycle, Let $i_{\ell1} = i_{\ell_2} = a$, with  $\ell_1 \neq \ell_2$, $(\ell_2 - \ell_1 + k) \mod k$ minimal. Since the difference is minimal, traversing indices from $\ell_1 + 1$ to $\ell_2 - 1$ (modulo $k$) will traverse through distinct elements. Let one of these elements be $b$. Since this is a cycle, each element that is flipped needs to be flipped at least twice, so there exists an element traversing from $\ell_2 + 1 \ldots \ell_1 - 1$ that is also equal to $b$.
This would imply that a subsequence that is equal to $a, b, a, b$, so points $a$ and $b$ can be shattered by $\mathcal{C}_{part}$, contradicting that $\VC(\mathcal{\mathcal{C}}_{part}) = 1$.
\end{proof}

\begin{lemma}[Robust learnability over $\mathcal{X} \times \{0,1\}$]
\label{lem:learnability-X}
$\mathcal{C}$ is robustly learnable over $\mathcal{X} \times \{0,1\}$.
\end{lemma}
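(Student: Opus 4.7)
The plan is to invoke the one-inclusion graph predictor of Definition~\ref{def:oig} and combine the cycle-freeness from Lemma~\ref{lem:no-cycles} with a leave-one-out expected-error argument to obtain a robust PAC guarantee.

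First I would set up the problem. A robustly realizable distribution $\mathcal{P}$ over $\mathcal{X}\times\{0,1\}$ with respect to $\mathcal{U}$ means there exists some $g_c\in \mathcal{C}$ such that for every $(x,y)$ in the support of $\mathcal{P}$ we have $g_c(z)=y$ for all $z\in\mathcal{U}(x)=\{x,x'\}$; by the definition of $\mathcal{C}$ from $\mathcal{C}_{part}$ this forces $x\in\supp(c)$ and $c(x)=y$, so robustly realizable samples over $\mathcal{X}$ correspond bijectively to realizable samples of $\mathcal{C}_{part}$. In particular, for any finite sequence $(x_1,y_1),\ldots,(x_{n+1},y_{n+1})$ drawn i.i.d.\ from $\mathcal{P}$, the augmented labeled set $\{((x_i,x_i'),(y_i,y_i))\}$ is consistent with some $g_c\in\mathcal{C}$.

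Next I would run the one-inclusion graph predictor on $S=(x_1,y_1),\ldots,(x_n,y_n)$ and a test point $z\in\mathcal{X}$, but defined over the \emph{robustly} realizable labelings of $(x_1,\ldots,x_n,z)$ by $\mathcal{C}$---i.e., vertex set $V=\{(g_c(x_1),\ldots,g_c(x_n),g_c(z)):g_c\in\mathcal{C}\text{ and robustly realized by }\mathcal{C}\}$. By Lemma~\ref{lem:no-cycles}, this graph has no cycles, hence is a forest on at most $2^{n+1}$ vertices. Every forest admits an orientation of maximum out-degree at most $1$: pick a root in each connected component and orient all edges toward the root. Choose such an orientation (invariantly under permutation of coordinates, which one can enforce by a canonical tie-breaking rule on labelings). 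This gives the one-inclusion predictor $\hat{c}$.

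The key error bound is the standard leave-one-out argument. Draw $(x_1,y_1),\ldots,(x_{n+1},y_{n+1})$ i.i.d.\ from $\mathcal{P}$, pick an index $I\in[n+1]$ uniformly at random, train on the other $n$ points, and test on the $I$-th. By symmetry, the expected robust $0$--$1$ loss equals the expected fraction of indices $i$ on which leave-one-out prediction disagrees with $y_i$, which in turn is at most the maximum out-degree of the orientation divided by $n+1$, i.e.\ at most $1/(n+1)$. Because $\mathcal{U}(x_i)=\{x_i,x_i'\}$ and $g_c(x_i)=g_c(x_i')$ for every $g_c$ realizing a robustly realizable tuple, agreement at $x_i$ is equivalent to robust agreement on $\mathcal{U}(x_i)$, so this bound is on the \emph{robust} loss. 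Therefore $\E[\mathrm{err}_{\mathcal{U}}(\hat{c})]\le 1/(n+1)$, and Markov's inequality with $n=\lceil 1/(\epsilon\delta)\rceil$ gives $\mathrm{err}_{\mathcal{U}}(\hat{c})<\epsilon$ with probability at least $1-\delta$, proving robust learnability in the sense of Definition~\ref{def:robust-learnability}.

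The main obstacle is making sure the one-inclusion machinery transfers cleanly to the robust loss. The subtle point is that the vertex set must be restricted to \emph{robustly} realizable labelings (not merely realizable) so that an edge flip corresponds to two concepts that agree on $\mathcal{U}(x_i)$ except at coordinate $i$; this is exactly what makes Lemma~\ref{lem:no-cycles}'s shattering argument apply to recover a pattern $a,b,a,b$ forbidden by $\VC(\mathcal{C}_{part})=1$, and what makes the leave-one-out bound count robust errors rather than merely pointwise errors at $x_i$. Once that alignment is in place, the rest is the standard Haussler--Littlestone--Warmuth one-inclusion analysis.
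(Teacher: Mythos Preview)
Your proposal is correct and follows essentially the same route as the paper: invoke the one-inclusion graph predictor over robustly realizable labelings, use Lemma~\ref{lem:no-cycles} to conclude the graph is a forest and hence orientable with maximum out-degree $1$, apply the standard Haussler--Littlestone--Warmuth leave-one-out symmetrization to get expected robust error $O(1/n)$, and finish with Markov. Your write-up is arguably a bit more careful than the paper's in making explicit why pointwise agreement at $x_i$ coincides with robust agreement on $\mathcal{U}(x_i)$ (via the extension $\hat{c}(x')\defeq\hat{c}(x)$), a point the paper leaves implicit.
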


\begin{proof}
Let $\mathcal{P}$ be a realizable distribution over $\mathcal{X} \times \{0,1\}$. Given a realizable dataset $(x_1, y_1), (x_2, y_2),$ $\ldots, (x_n, y_n)$  of size $n$ sampled i.i.d. from $\mathcal{P}$, the error over $\mathcal{P}$ can be expressed as the expected error over $(x_{n+1},y_{n+1}) \sim \mathcal{P}$ of the One-Inclusion Graph algorithm on $x_1, x_2, \ldots, x_n$. For $i = 1, \ldots, n+1$, let $\hat{c}_{-i}$ be the predictor returned by the one-inclusion graph algorithm on points $x_1, x_2, \ldots, x_{i-1}, x_{i+1}, x_{i+2}, \ldots, x_n$. Let $G$ be the oriented one-inclusion graph over $x_1,x_2,\ldots,x_n$ with respect to $\mathcal{C}$, and for each vertex $u$, let $\outdeg_G(u)$ be the out-degree of $u$ in $G$. Let $E(G) \defeq \{(u,v): \ u \text{ is directed towards } v \text{ in } G\}$.
The expected error can be expressed as
\begin{align*}
    &
    \mathbb{E}_{(x_1,y_1), \ldots, (x_n,y_n) \sim \mathcal{P}, (x_{n+1},y_{n+1}) \sim \mathcal{P}} \bigg[\mathbbm{1}[\hat{c}_{-(n+1)}(x_{n+1}) \neq y_{n+1}]\bigg]
    \\
    &=
    \mathbbm{E}_{(x_1,y_1), \ldots, (x_{n+1},y_{n+1}) \sim \mathcal{P}, i \sim [n+1]\}} \bigg[\mathbbm{1}[\hat{c}_{-i}(x_i) \neq y_i]\bigg]
    \\
    &=
    \mathbbm{E}_{(x_1,y_1), \ldots, (x_{n+1},y_{n+1}) \sim \mathcal{P}, i \sim [n+1]} \bigg[\mathbbm{1}\bigg[\bigg((y_1,..,y_i,..,y_{n+1}), (y_1,..,1-y_i,..,y_{n+1})\bigg) \in E(G)\bigg]\bigg]
    \\
    &=
    \mathbbm{E}_{(x_1,y_1), \ldots, (x_{n+1},y_{n+1}) \sim \mathcal{P}}\left[\frac{\outdeg_G((y_1,\ldots,y_i, \ldots,y_{n+1}))}{n}\right].
\end{align*}

By Lemma~\ref{lem:no-cycles}, the graph is acyclic. We can orient the edges to have maximum out-degree $1$ (the graph is a forest, so we can root each component at an arbitrary node, and direct all the edges downwards). Thus, the expected error is at most $\frac{1}{n}$. Applying Markov's inequality on the error gives that the class is robustly learnable.
\end{proof}

\begin{lemma}[Robust learnability over $\widetilde{\mathcal{X}} \times \{0,1\}$]
\label{lem:learnability-X-tilde}
$\mathcal{C}$ is robustly learnable over $\widetilde{\mathcal{X}} \times \{0,1\}$.
\end{lemma}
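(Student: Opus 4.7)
The plan is to reduce learnability over $\widetilde{\mathcal{X}} \times \{0,1\}$ to the already-established learnability over $\mathcal{X} \times \{0,1\}$ (Lemma~\ref{lem:learnability-X}), using the fact that $x$ and its twin $x'$ are adversarially indistinguishable because $\mathcal{U}(x) = \mathcal{U}(x') = \{x,x'\}$. Concretely, define a projection $\phi: \widetilde{\mathcal{X}} \to \mathcal{X}$ by $\phi(x) = x$ for $x \in \mathcal{X}$ and $\phi(x') = x$ for $x' \in \mathcal{X}'$. Given a robustly realizable distribution $\mathcal{P}$ over $\widetilde{\mathcal{X}} \times \{0,1\}$, let $\mathcal{P}'$ be the pushforward distribution of $\mathcal{P}$ under $(\tilde{x},y) \mapsto (\phi(\tilde{x}),y)$, supported on $\mathcal{X} \times \{0,1\}$.

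First I would verify that $\mathcal{P}'$ is robustly realizable by $\mathcal{C}$: if $g_{c^*} \in \mathcal{C}$ robustly realizes $\mathcal{P}$, then for any $(\tilde{x},y) \in \supp(\mathcal{P})$, $g_{c^*}$ is correct on all of $\mathcal{U}(\tilde{x}) = \{x,x'\}$, which forces $x \in \supp(c^*)$ (otherwise $g_{c^*}(x) = 0 \neq 1 = g_{c^*}(x')$) and $g_{c^*}(x) = g_{c^*}(x') = y$. The same $g_{c^*}$ then robustly realizes $(x,y) \in \supp(\mathcal{P}')$.

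Next I would describe the learning algorithm. Given $n$ i.i.d.\ samples $(\tilde{x}_1,y_1),\ldots,(\tilde{x}_n,y_n)$ from $\mathcal{P}$, apply $\phi$ coordinate-wise to obtain $n$ i.i.d.\ samples from $\mathcal{P}'$. Feed them into the one-inclusion-graph learner from Lemma~\ref{lem:learnability-X} to get a predictor $\hat{c}: \mathcal{X} \to \{0,1\}$ with robust risk at most $\epsilon$ with probability at least $1 - \delta$ over the sample. Extend $\hat{c}$ to $\widetilde{\mathcal{X}}$ by setting $\hat{c}(x') := \hat{c}(x)$ for every $x' \in \mathcal{X}'$; call this extension $\hat{h}$.

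Finally I would verify the robust risk equality. For any $(\tilde{x},y)$ with $\phi(\tilde{x}) = x$, the construction gives $\hat{h}(x) = \hat{h}(x')$, so
\[
\sup_{z \in \mathcal{U}(\tilde{x})} \mathbbm{1}[\hat{h}(z) \neq y] \;=\; \mathbbm{1}[\hat{c}(x) \neq y] \;=\; \sup_{z \in \mathcal{U}(x)} \mathbbm{1}[\hat{c}(z) \neq y].
\]
Taking expectations over $\mathcal{P}$ on the left and $\mathcal{P}'$ on the right yields equality of robust risks, so the $\epsilon$-bound for $\mathcal{P}'$ transfers to $\mathcal{P}$ with the same sample size and confidence. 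There is no real obstacle here: the only thing to be careful about is showing $\mathcal{P}'$ is robustly realizable (the short support argument above) and confirming that the robust loss is invariant under swapping $x$ with its twin, which is immediate since $\mathcal{U}(x) = \mathcal{U}(x')$.
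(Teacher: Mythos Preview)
Your proposal is correct and follows essentially the same approach as the paper: project the distribution on $\widetilde{\mathcal{X}}\times\{0,1\}$ to one on $\mathcal{X}\times\{0,1\}$ via the twin map, invoke Lemma~\ref{lem:learnability-X}, and use $\mathcal{U}(x)=\mathcal{U}(x')$ to transfer the risk bound back. Your write-up is in fact more careful than the paper's (you explicitly verify robust realizability of $\mathcal{P}'$ and the risk equality); the only cosmetic slip is that the learner from Lemma~\ref{lem:learnability-X} must already output a hypothesis on all of $\widetilde{\mathcal{X}}$ for its robust risk to be defined, so in your final display the rightmost expression should use $\hat{h}$ (or you should take $\hat{c}:\widetilde{\mathcal{X}}\to\{0,1\}$ and drop the extension step).
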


\begin{proof}
Consider a robustly realizable distribution $\mathcal{P}$ over $\widetilde{\mathcal{X}} \times \{0,1\}$. Notice that sampling $(x,y) \sim \mathcal{P}$ and then changing the label from $z'$ to $z$ if $x = z'$ for some $z \in \mathcal{X}$ corresponds to sampling from a robustly realizable distribution from $\mathcal{X} \times \{0,1\}$. Thus, we can convert all the points to be in $\mathcal{X} \times \{0,1\}$, and directly apply the algorithm from Lemma~\ref{lem:learnability-X} to get the same error.
\end{proof}
\begin{proof}[of Theorem~\ref{theo:adv-negative}]
Consider the class $\mathcal{C}$ from above. First, we will show that $\mathcal{C}$ has no bounded compression scheme. Suppose there is an adversarially robust compression scheme for $\mathcal{C}$ with compression function $\kappa$ and reconstruction function $\rho$, with size $k$. We can notice that $\rho, \kappa$ are also a valid compression scheme for $\mathcal{C}_{part}$, since any dataset that is realizable in $\mathcal{C}_{part}$ is also robustly realizable. Thus, we have a bounded-size compression scheme for $\mathcal{C}_{part}$, which is a contradiction to Lemma~\ref{lem:partial-unbounded}.
Now, it remains to show that $\mathcal{C}$ is robustly learnable. This is already true by Lemma~\ref{lem:learnability-X-tilde}, completing the proof of the theorem.

\end{proof}

\end{document}